\declaretheorem[name=Theorem]{Theorem}
\declaretheorem[name=Proposition, numberlike=Theorem]{Proposition}
\declaretheorem[name=Lemma, numberlike=Theorem]{Lemma}
\declaretheorem[sibling=Theorem]{Definition}
\declaretheorem[sibling=Theorem]{Corollary}
\newcommand{\be}[1]{\begin{equation}\label{#1}}
\def\bnabla{{\boldsymbol \nabla}}
\def\bDelta{{\boldsymbol \Delta}}
\def\bgamma{{S}}
\def\bz{{\mathbf z}}
\def\bx{{\mathbf x}}
\def\be{{\mathbf e}}
\def\bh{{\mathbf h}}
\def\bw{{\mathbf w}}
\def\bd{{\mathbf d}}
\def\bv{{\mathbf v}}
\newcommand{\Pair}[2]{\boldsymbol{[\kern-.18em[}#1,#2\boldsymbol{]\kern-.18em]}}
\renewcommand{\O}{\textsc{Grad}}
\renewcommand{\H}{\mathcal{H}}
\newcommand{\Z}{\mathcal{Z}}
\newcommand{\Arand}{\textsc{A}_\text{rand}}
\newcommand{\prog}{\text{prog}}
\newcommand{\gap}{\gamma}
\newcommand{\E}{\mathop{\mathbb{E}}}
\newcommand{\R}{\mathbb{R}}
\newcommand{\A}{\mathcal{A}}
\newcommand{\N}{\mathbb{N}}
\newcommand{\argmin}{\mathop{\text{argmin}}}
\newcommand{\by}{\mathbf{y}}
\newcommand{\bg}{\mathbf{g}}
\newcommand{\bu}{\mathbf{u}}
\newtheorem{remark}[Theorem]{Remark}
\title{Optimal Stochastic Non-smooth Non-convex Optimization through Online-to-Non-convex Conversion}
\author{
   Ashok Cutkosky \\
   Boston University \\
   Boston, MA \\
   \texttt{ashok@cutkosky.com}
   \and   
   Harsh Mehta \\
   Google Research \\
   Mountain View, CA \\
   \texttt{harshm@google.com}
   \and
   Francesco Orabona \\
   Boston University \\
   Boston, MA \\
   \texttt{francesco@orabona.com}
}
\date{}
\begin{document}

\maketitle

\begin{abstract}
We present new algorithms for optimizing non-smooth, non-convex stochastic objectives based on a novel analysis technique. This improves the current best-known complexity for finding a $(\delta,\epsilon)$-stationary point from $O(\epsilon^{-4}\delta^{-1})$ stochastic gradient queries to $O(\epsilon^{-3}\delta^{-1})$, which we also show to be optimal. Our primary technique is a reduction from non-smooth non-convex optimization to \emph{online learning}, after which our results follow from standard regret bounds in online learning. For \emph{deterministic and second-order smooth} objectives, applying more advanced optimistic online learning techniques enables a new complexity of $O(\epsilon^{-1.5}\delta^{-0.5})$. Our techniques also recover all optimal or best-known results for finding $\epsilon$ stationary points of smooth or second-order smooth objectives in both stochastic and deterministic settings.
\end{abstract}

\section{Introduction}
\label{sec:intro}

Algorithms for non-convex optimization are some of the most important tools in modern machine learning, as training neural networks requires optimizing a non-convex objective.
Given the abundance of data in many domains, the time to train a neural network is the current bottleneck to having bigger and more powerful machine learning models. Motivated by this need, the past few years have seen an explosion of research focused on understanding non-convex optimization~\citep{ghadimi2013stochastic, carmon2017convex,arjevani2019lower, arjevani2020second, carmon2019lower, fang2018spider}. Despite significant progress, key issues remain unaddressed.

In this paper, we work to minimize a potentially non-convex objective $F:\R^d\to\R$ which we only accesss in some stochastic or ``noisy'' manner.
As motivation, consider $F(\bx) \triangleq \E_{\bz}[f(\bx,\bz)]$, where $\bx$ can represent the model weights, $\bz$ a minibatch of i.i.d. examples, and $f$ the loss of a model with parameters $\bx$ on the minibatch $\bz$. In keeping with standard empirical practice, we will restrict ourselves to first order algorithms (gradient-based optimization).


The vast majority of prior analyses of non-convex optimization algorithms impose various smoothness conditions on the objective~\citep{ghadimi2013stochastic, carmon2017convex, allen2018natasha, tripuraneni2018stochastic, fang2018spider,zhou2018stochastic, fang2019sharp, cutkosky2019momentum,  li2019convergence, cutkosky2020momentum, zhang2020adaptive, karimireddy2020mime, levy2021storm+, faw2022power, liu2022meta}. One motivation for smoothness assumptions is that they allow for a convenient surrogate for global minimization: rather than finding a global minimum of a neural network's loss surface (which may be intractable), we can hope to find an $\epsilon$-stationary point, i.e., a point $\bx$ such that $\|\nabla F(\bx)\|\le \epsilon$. By now, the fundamental limits on first order smooth non-convex optimization are well understood: Stochastic Gradient Descent (SGD) will find an $\epsilon$-stationary point in $O(\epsilon^{-4})$ iterations, which is the optimal rate~\citep{arjevani2019lower}. Moreover, if $F$ happens to be second-order smooth, SGD requires only $O(\epsilon^{-3.5})$ iterations, which is also optimal~\citep{fang2019sharp, arjevani2020second}. These optimality results motivate the popularity of SGD and its variants in practice~\citep{kingma2014adam, loshchilov2016sgdr, loshchilov2018decoupled, goyal2017accurate, you2019large}.

Unfortunately, many standard neural network architectures are non-smooth (e.g., architectures incorporating ReLUs or max-pools cannot be smooth). As a result, these analyses can only provide intuition about what might occur when an algorithm is deployed in practice: the theorems themselves do not apply (see \citet{patel2022gradient} for examples of failure of SGD in non-smooth settings, or \citet{li2021second} for futher discussion of assumptions). Despite the obvious need for non-smooth analyses, recent results suggest that even approaching a neighborhood of a stationary point may be impossible for non-smooth objectives~\citep{kornowski2022oracle}. Nevertheless, optimization clearly \emph{is} possible in practice, which suggests that we may need to rethink our assumptions and goals in order to understand non-smooth optimization.

Fortunately, \citet{zhang2020complexity} recently considered an alternative definition of stationarity that \emph{is} tractable even for non-smooth objectives and which has attracted much interest  \citep{davis2021gradient, tian2022finite, kornowski2022complexity, tian2022no, jordan2022complexity}. Roughly speaking, instead of $\|\nabla F(\bx)\|\le \epsilon$, we ask that there is a random variable $\by$ supported in a ball of radius $\delta$ about $\bx$ such that $\|\E[\nabla F(\by)]\|\le \epsilon$. We call such an $\bx$ an $(\delta,\epsilon)$-stationary point, so that the previous definition ($\|\nabla F(\bx)\|\le \epsilon$) is a $(0,\epsilon)$-stationary point. The current best-known complexity for identifying an $(\delta,\epsilon)$ stationary point is $O(\epsilon^{-4}\delta^{-1})$ stochastic gradient evaluations.

In this paper, we significantly improve this result: we can identify an $(\delta,\epsilon)$-stationary point with only $O(\epsilon^{-3}\delta^{-1})$ stochastic gradient evaluations. Moreover, we also show that this rate is optimal. Our primary technique is a novel \emph{online-to-non-convex conversion}: a connection between non-convex stochastic optimization and \emph{online learning}, which is a classical field of learning theory that already has a deep literature~\citep{cesabianchi06prediction,hazan2019introduction,orabona2019modern}. In particular, we show that an online learning algorithm that provides a shifting regret bound can be used to decide the update step, when fed with linear losses constructed using the stochastic gradients of the function $F$. By establishing this connection, we open new avenues for algorithm design in non-convex optimization and also motivate new research directions in online learning.

In sum, we make the following contributions:
\begin{itemize}
\item A reduction from non-convex non-smooth stochastic optimization to online learning: better online learning algorithms result in faster non-convex optimization. Applying this reduction to standard online learning algorithms allows us to identify an $(\delta,\epsilon)$ stationary point in $O(\epsilon^{-3}\delta^{-1})$ stochastic gradient evaluations. The previous best-known rate in this setting was $O(\epsilon^{-4}\delta^{-1})$.
\item We show that the $O(\epsilon^{-3}\delta^{-1})$ rate is optimal for all $\delta, \epsilon$ such that $\epsilon\le O(\delta)$.
\end{itemize}
Additionally, we prove important corollaries for smooth $F$:
\begin{itemize}
\item The $O(\epsilon^{-3}\delta^{-1})$ complexity implies the optimal $O(\epsilon^{-4})$ and $O(\epsilon^{-3.5})$ respective complexities for finding $(0,\epsilon)$-stationary points of smooth or second-order smooth objectives.

\item For deterministic and second-order smooth objectives, we obtain a rate of $O(\epsilon^{-3/2}\delta^{-1/2})$, which implies the best-known $O(\epsilon^{-7/4})$ complexity for finding $(0,\epsilon)$-stationary points.
\end{itemize}

\section{Definitions and Setup}
\label{sec:setup}

Here, we formally introduce our setting and notation. We are interested in optimizing real-valued functions $F:\H\to \R$ where $\H$ is a real Hilbert space (e.g., usually $\H=\R^d$). We assume $F^\star\triangleq\inf_{\bx} F(\bx)>-\infty$. We assume that $F$ is differentiable, but we do \emph{not} assume that $F$ is smooth. All norms $\|\cdot\|$ are the Hilbert space norm (i.e., the 2-norm) unless otherwise specified. As mentioned in the introduction, the motivating example to keep in mind in our development is the case  $F(\bx) = \E_{\bz}[f(\bx,\bz)]$.

Our algorithms access information about $F$ through a \emph{stochastic gradient oracle} $\O:\H\times \Z\to \R$. Given a point $\bx$ in $\H$, the oracle will sample an i.i.d. random variable $\bz\in \Z$ and return $\O(\bx,\bz)\in \H$ such that $\E[\O(\bx,\bz)]=\nabla F(\bx)$ and $\text{Var}(\O(\bx,\bz))\le \sigma^2$. 

In the following, we only consider functions satisfying the following mild regularity condition.
\begin{Definition}
We define a differentiable function $F:\H \to \R$ to be \textbf{well-behaved} if for all  $\bx,\by\in \H$, it holds that
\[
F(\by)-F(\bx) 
 =  \int_0^1 \! \langle \nabla F(\bx+t(\by-\bx)), \by-\bx\rangle\, \mathrm{d}t~.
\]
\end{Definition}
If $F$ happens to be differentiable and locally Lipschitz, then this assumption is simply the Fundamental Theorem of Calculus. Under this assumption, our results can be applied to improve the past results on non-smooth stochastic optimization.
In fact, Proposition~\ref{prop:wellbehaved} (proof in Appendix~\ref{sec:proof_wellbehaved}) below shows that for the wide class of functions that are locally Lipschitz (but possibly non-differentiable), applying an arbitrarily small perturbation to the function is sufficient to ensure both differentiability and well-behavedness. This result works via standard perturbation arguments similar to those used previously by \citet{davis2021gradient} (see also \citet{bertsekas1973stochastic, duchi2012randomized, flaxman2005online} for similar techniques in the convex setting). In practice we suspect that such perturbation  arguments are unnecessary: intuitively an algorithm is unlikely to query a point of  non-differentiability (see also \citet{bianchi2022convergence} for some formal evidence for this idea).


\begin{Proposition}\label{prop:wellbehaved}
Let $F:\R^d \to \R$ be locally Lipschitz with stochastic oracle $\O$ such that $\E_{\bz}[\O(\bx,\bz)]=\nabla F(\bx)$ whenever $F$ is differentiable. We have two cases:
\begin{itemize}
\item If $F$ is differentiable everywhere, then $F$ is well-behaved.
\item If $F$ is not differentiable everywhere, let $p>0$ be an arbitrary number and let $\bu$ be a random vector in $\R^d$ uniformly distributed on the unit ball. Define $\hat F(\bx) \triangleq \E_{\bu}[F(\bx+p \bu)]$. Then, $\hat F$ is differentiable and well-behaved, and the oracle $\widehat \O(\bx, (\bz ,\bu)) = \O(\bx+p\bu, \bz)$ is a stochastic gradient oracle for $\hat F$. Moreover, $F$ is differentiable at $\bx+p\bu$ with probability 1 and if $F$ is $G$-Lipschitz, then $|\hat F(\bx) - F(\bx)|\le p G $ for all $\bx$. 
\end{itemize}

\end{Proposition}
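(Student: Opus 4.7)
My plan is to dispatch the two cases in order: first establish well-behavedness in the everywhere-differentiable case via absolute continuity and the chain rule, then reduce the second case to the first by verifying that the mollification $\hat F$ is locally Lipschitz and everywhere differentiable, with gradient equal to the expected gradient of $F$.

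For the first case, fix $\bx,\by\in\R^d$ and define $g:[0,1]\to\R$ by $g(t)=F(\bx+t(\by-\bx))$. Since $F$ is locally Lipschitz and the segment $\{\bx+t(\by-\bx):t\in[0,1]\}$ is compact, $F$ is Lipschitz on a neighborhood of it, so $g$ is Lipschitz on $[0,1]$ and in particular absolutely continuous. Since $F$ is differentiable everywhere by hypothesis, the chain rule gives $g'(t)=\langle \nabla F(\bx+t(\by-\bx)),\by-\bx\rangle$ for every $t\in[0,1]$. The Lebesgue fundamental theorem of calculus for absolutely continuous functions then yields $g(1)-g(0)=\int_0^1 g'(t)\,dt$, which is exactly the well-behavedness identity.

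For the second case, several pieces are routine. By Rademacher's theorem, a locally Lipschitz $F$ on $\R^d$ is differentiable almost everywhere in the Lebesgue sense; since $\bu$ is uniformly distributed on the unit ball and therefore has an absolutely continuous law, $\bx+p\bu$ avoids the null set of non-differentiability almost surely, giving the ``differentiable with probability 1'' claim. Writing $\hat F(\bx)=\E_{\bu}[F(\bx+p\bu)]$ and pulling the pointwise bound $|F(\bx+p\bu)-F(\bx)|\le pG$ (valid when $F$ is globally $G$-Lipschitz) inside the expectation gives $|\hat F(\bx)-F(\bx)|\le pG$. The same pull-in-the-expectation argument, combined with local Lipschitzness of $F$ on compact sets, shows that $\hat F$ is itself locally Lipschitz; so once we have established everywhere-differentiability of $\hat F$, we immediately obtain its well-behavedness by applying the first case.

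The crux is therefore to show that $\hat F$ is (Fréchet) differentiable at every $\bx$ with $\nabla \hat F(\bx)=\E_{\bu}[\nabla F(\bx+p\bu)]$, which also verifies that $\widehat\O$ is an unbiased stochastic gradient oracle for $\hat F$. For a perturbation $\bh$, I would write
\[
\frac{\hat F(\bx+\bh)-\hat F(\bx)-\langle \E_{\bu}[\nabla F(\bx+p\bu)],\bh\rangle}{\|\bh\|}
= \E_{\bu}\!\left[\frac{F(\bx+\bh+p\bu)-F(\bx+p\bu)-\langle\nabla F(\bx+p\bu),\bh\rangle}{\|\bh\|}\right],
\]
where the expectation is taken over the full-measure set of $\bu$ on which $F$ is differentiable at $\bx+p\bu$. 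The integrand is pointwise bounded by twice the local Lipschitz constant of $F$ on a neighborhood of $\bx+p\overline{B}$ (one $L$ from the Lipschitz bound on the difference, one $L$ from Cauchy--Schwarz on the gradient term), and tends to $0$ pointwise as $\bh\to 0$ by the Fr\'echet differentiability of $F$ at $\bx+p\bu$. Dominated convergence delivers the Fr\'echet derivative, establishing both the differentiability and the oracle identity. This exchange of expectation and differentiation --- justified only through Rademacher's theorem applied to a function that is merely locally Lipschitz, not $C^1$ --- is the main obstacle; everything else reduces to bookkeeping and an appeal to the first case.
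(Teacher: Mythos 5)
Your proposal is correct and takes essentially the same approach as the paper: the paper likewise proves the differentiable case by absolute continuity of $t\mapsto F(\bx+t(\by-\bx))$ plus the Lebesgue fundamental theorem of calculus, and for the mollified case invokes Rademacher's theorem together with a bounded-convergence (dominated-convergence) argument on the difference quotient to exchange differentiation and expectation, then appeals to the first case once $\hat F$ is shown to be locally Lipschitz and everywhere differentiable. One small point where you are in fact slightly more careful than the paper: you note explicitly that the full integrand in the dominated-convergence step is bounded by $2L$ (Lipschitz bound on the increment plus Cauchy--Schwarz on the gradient term), whereas the paper records only the bound on the first term.
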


\begin{remark}
We explicitly note that our results cover the case in which $F$ is \emph{directionally} differentiable and we have access to a stochastic \emph{directional} gradient oracle, as considered by \citet{zhang2020complexity}. This is a less standard oracle $\O(\bx,\bv,\bz)$ that outputs $\bg$ such that $\E[\langle \bg, \bv\rangle]$ is the  \emph{directional derivative} of $F$ in the direction $\bv$. This setting is subtly different (although a directional derivative oracle is a gradient oracle at all points for which $F$ is continuously differentiable). In order to keep technical complications to a minimum, in the main text we consider the simpler stochastic gradient oracle discussed above. In Appendix~\ref{sec:directional}, we show that our results and techniques also apply using directional gradient oracles with only superficial modification.
\end{remark}


\subsection{$(\delta,\epsilon)$-Stationary Points}

Now, let us define our notion of $(\delta,\epsilon)$-stationary point. This definition is essentially the same as used in \citet{zhang2020complexity, davis2021gradient, tian2022finite}. It is in fact mildly more stringent since we restrict to distributions of finite support and require an ``unbiasedness'' condition in order to make eventual connections to second-order smooth objectives easier.
\begin{Definition}
\label{def:crit}
A point $\bx$ is an $(\delta,\epsilon)$-stationary point of an almost-everywhere differentiable function $F$ if there is a finite subset $\bgamma$ of the ball of radius $\delta$ centered at $\bx$ such that for $\by$ selected uniformly at random from $\bgamma$, $\E[\by]=\bx$ and $\|\E[\nabla F(\bz)]\|\le \epsilon$.
\end{Definition}

As a counterpart to this definition, we also define:
\begin{Definition}\label{def:critmeasure}
Given a point $\bx$, a number $\delta>0$ and a almost-everywhere differentiable function $F$, define 
\[
\|\nabla F(\bx)\|_\delta \triangleq \inf_{\bgamma\subset B(\bx,\delta), \frac{1}{|\bgamma|}\sum_{\by\in \bgamma} \by = \bx} \left\|\frac{1}{|\bgamma|}\sum_{\by\in \bgamma} \nabla F(\by)\right\|~.
\]
\end{Definition}

Let's also state an immediate corollary of Proposition~\ref{prop:wellbehaved} that converts a guarantee on a randomized smoothed function to one on the original function. This result is also immediate from Theorem 3.1 of \citet{lin2022gradient}.
\begin{Corollary}\label{cor:almostdiff}
Let $F:\R^d \to \R$ be $G$-Lipschitz. For $\epsilon>0$, let $p\le \delta$ and let $\bu$ be a random vector in $\R^d$ uniformly distributed on the unit ball. Define $\hat F(\bx) \triangleq \E_{\bu}[F(\bx+p \bu)]$. If a point $\bx$ satisfies $\|\nabla \hat{F}(\bx)\|_\delta\leq \epsilon$, then $\|\nabla F(\bx)\|_{2\delta}\leq \epsilon$.
\end{Corollary}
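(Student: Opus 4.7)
The plan is to combine the convolution identity
\[
\nabla\hat F(\by)=\E_{\bu}[\nabla F(\by+p\bu)]
\]
(with $\bu$ uniform on the unit ball, as set up in Proposition~\ref{prop:wellbehaved}; the interchange of gradient and expectation is justified because $F$ is $G$-Lipschitz and hence differentiable almost everywhere by Rademacher's theorem) with a discretization of the expectation over $\bu$. First, applying Definition~\ref{def:critmeasure} to $\hat F$, for every $\eta>0$ I pick a finite set $\bgamma_0\subset B(\bx,\delta)$ with $\tfrac{1}{|\bgamma_0|}\sum_{\by\in\bgamma_0}\by=\bx$ and $\left\|\tfrac{1}{|\bgamma_0|}\sum_{\by\in\bgamma_0}\nabla\hat F(\by)\right\|\le\epsilon+\eta$.

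Next I would lift $\bgamma_0$ to a candidate set for $F$. Sample $\bu_1,\dots,\bu_N$ i.i.d.\ uniformly on the unit ball and symmetrize by also including $-\bu_1,\dots,-\bu_N$ so the empirical perturbation mean is exactly zero, then form
\[
\bgamma=\{\by+p\bu_i,\ \by-p\bu_i : \by\in\bgamma_0,\ i=1,\dots,N\}.
\]
By construction $\tfrac{1}{|\bgamma|}\sum_{\bz\in\bgamma}\bz=\bx$. Because $\|\nabla F\|\le G$ almost everywhere, a vector-valued Hoeffding-type inequality yields that for $N$ large enough there is a realization of the $\bu_i$'s with
\[
\left\|\tfrac{1}{|\bgamma|}\sum_{\bz\in\bgamma}\nabla F(\bz)-\tfrac{1}{|\bgamma_0|}\sum_{\by\in\bgamma_0}\nabla\hat F(\by)\right\|\le\eta,
\]
so by the triangle inequality $\left\|\tfrac{1}{|\bgamma|}\sum_{\bz\in\bgamma}\nabla F(\bz)\right\|\le\epsilon+2\eta$.

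The main obstacle is that the lifted set $\bgamma$ sits in $B(\bx,\delta+p)$ rather than $B(\bx,\delta)$, since each perturbation has magnitude at most $p=\epsilon/G$. This is exactly what the factor of $2$ in the conclusion is designed to absorb: one can either rerun the construction with an inner radius $\delta-p$ for $\hat F$ (using continuity in $\delta$ to place the extremal $\bgamma_0$ into the smaller ball), or radially project any out-of-ball lifted point onto the sphere of radius $\delta$ and bound the resulting gradient deviation via the $G$-Lipschitz property of $F$, so that the extra $pG=\epsilon$ combines with the $\epsilon+2\eta$ above to yield $\|\nabla F(\bx)\|_\delta\le 2\epsilon$ upon letting $\eta\to 0$. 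This final bookkeeping is the most delicate step; the lift-and-discretize argument itself is a routine application of vector concentration.
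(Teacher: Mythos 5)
The paper does not actually print a proof of Corollary~\ref{cor:almostdiff}; it is labelled an ``immediate corollary'' of Proposition~\ref{prop:wellbehaved}, so there is nothing to compare line by line. The lift-and-discretize skeleton you set up (write $\nabla\hat F(\by)=\E_\bu[\nabla F(\by+p\bu)]$, take a near-optimal finite witness $\bgamma_0\subset B(\bx,\delta)$ for $\hat F$, form the symmetrized lifted set $\bgamma$, invoke vector concentration) is the natural approach and the first three steps are sound. You also correctly identify the real obstruction: the lifted points live in $B(\bx,\delta+p)$, not $B(\bx,\delta)$.

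Where the argument breaks down is exactly at the step you yourself flag as ``most delicate,'' and neither of your two proposed fixes actually closes the gap. The radial-projection fix is wrong for a fundamental reason: $G$-Lipschitzness of $F$ gives $\|\nabla F(\bz)\|\le G$ almost everywhere, but it gives \emph{no} control on $\|\nabla F(\bz_1)-\nabla F(\bz_2)\|$ in terms of $\|\bz_1-\bz_2\|$ --- that is $G$-smoothness, which is precisely the hypothesis this paper is trying to avoid. Moving a point a distance $p$ can change $\nabla F$ by as much as $2G$, not $pG$, so the ``extra $pG=\epsilon$'' accounting is not justified; and projection also destroys the constraint $\frac{1}{|\bgamma|}\sum_{\bz\in\bgamma}\bz=\bx$, which would then need to be repaired. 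The alternative fix of ``rerun the construction with an inner radius $\delta-p$ \ldots using continuity in $\delta$'' is also unjustified: the hypothesis only bounds $\|\nabla\hat F(\bx)\|_\delta$, and the infimum defining $\|\nabla\hat F(\bx)\|_{\delta-p}$ can be arbitrarily larger than the infimum over $B(\bx,\delta)$ (a near-optimal $\bgamma_0$ may concentrate near the boundary), so there is no continuity argument letting you shrink the witness without cost. As written, what your construction does legitimately establish is $\|\nabla F(\bx)\|_{\delta+p}\le\epsilon$; turning that into $\|\nabla F(\bx)\|_\delta\le 2\epsilon$ at the same radius needs a genuinely new idea (or a slightly restated corollary), and the proof as proposed does not supply it.
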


Our ultimate goal is to use $N$ stochastic gradient evaluations of $F$ to identify a point $\bx$ with as small a value of $\E[\|\nabla F(\bx)\|_\delta]$ as possible. For the rest of this paper we  will consider exclusively the case of well-behaved and differentiable objectives $F$. We focus our development on this conceptually simpler case in order to simplify the proofs as much as possible, however due to Proposition~\ref{prop:wellbehaved} and Corollary~\ref{cor:almostdiff}, our results will immediately extend from differentiable $F$ to those $F$ that are locally Lipschitz and for which $\O(\bx,\bz)$ returns a unbiased estimate of $\nabla F(\bx)$ whenever $F$ is differentiable at $\bx$.

\subsection{Online Learning}
\label{sec:online}

Here, we very briefly introduce the setting of online linear learning with shifting competitors, that will be the core of our online-to-non-convex conversion. We refer the interested reader to \citet{cesabianchi06prediction,hazan2019introduction, orabona2019modern} for a comprehensive introduction to online learning.
In the online learning setting, the learning process goes on in rounds. In each round the algorithm outputs a point $\bDelta_t$ in a feasible set $V$, and then receives a linear loss function $\ell_t(\cdot)= \langle \bg_t,\cdot\rangle$ and it pays $\ell_t(\bDelta_t)$. The goal of the algorithm is to minimize the \emph{static regret} over $T$ rounds, defined as the difference between its cumulative loss and the one of an arbitrary comparison vector $\bu \in V$:
\begin{equation*}
\label{eqn:static_regret}
    R_T(\bu) 
    \triangleq \sum_{t=1}^T \langle \bg_t, \bDelta_t - \bu\rangle~.
\end{equation*}
With \emph{no stochastic assumption}, it is possible to design online algorithms that guarantee that the regret is upper bounded by $O(\sqrt{T})$.
In this work, we frequently make use of a more challenging objective: minimizing the \emph{$K$-shifting regret}. This is the regret with respect to an arbitrary \emph{sequence} of 
$K$ vectors $\bu^1,\dots,\bu^K \in V$ that changes every $T$ iterations:
\begin{equation}
\label{eqn:regret}
    R_T(\bu^1,\dots, \bu^K) 
    \triangleq \sum_{k=1}^K \sum_{n=(k-1)T+1}^{kT} \langle \bg_n, \bDelta_n - \bu^k\rangle~.
\end{equation}
It should be intuitive that resetting the online algorithm every $T$ iterations can achieve a  shifting regret of $O(K\sqrt{T})$.

\subsection{Related Work}

In addition to the papers discussed in the introduction, here we discuss further related work.

In this paper we build on top of the definition of $(\delta, \epsilon)$-stationary points proposed by \cite{zhang2020complexity}. There, they prove a complexity rate of $O(\epsilon^{-4} \delta^{-1})$ for stochastic Lipschitz functions, which we improve to $O(\epsilon^{-3} \delta^{-1})$ and prove the optimality of this result.

In a concurrent work, \cite{chen2023faster} consider the setting of \emph{zeroth-order} stochastic optimization (i.e. evaluation of function values only rather than gradients) and achieve a similar asymptotic rate of $O(d^{3/2} \epsilon^{-3}\delta^{-1})$ by applying variance-reduction to a smoothed version of the objective $F$. This result is an intriguing contrast to many zeroth-order algorithms based on such smoothing in that the algorithm is not obtained by applying applying smoothing to a first-order algorithm. More recently, \cite{kornowski2023algorithm} improved the dimension dependence in the rate for zeroth-order optimization to $O(d\epsilon^{-3}\delta^{-1})$ by employing the algorithm described in this paper in concert with a refined analysis of the smoothing operation.

The idea to reduce machine learning to online learning was pioneered by \citet{cesa2004generalization} with the online-to-batch conversion. There is also previous work exploring the possibility of transforming non-convex problems into online learning ones. \citet{ghai2022non} provides some conditions under which online gradient descent on non-convex losses is equivalent to a convex online mirror descent. \citet{hazan2017efficient} defines a notion of regret which can be used to find approximate stationary points of smooth objectives. \citet{zhuang2019surrogate} transform the problem of tuning of learning rates in stochastic non-convex optimization into an online learning problem. Our proposed approach differs from all the ones above in applying to non-smooth objectives. Moreover, as discusses in the next section, we employ online learning algorithms with \emph{shifting} regret~\cite{HerbsterW98b} to generate the \emph{updates} (i.e. the differences between successive iterates), rather than the iterates themselves.

\section{Online-to-Non-Convex Conversion}

In this section, we explain the online-to-non-convex conversion.
The core idea transforms the minimization of a non-convex and non-smooth function onto the problem of minimizing the shifting regret over linear losses. In particular, consider an optimization algorithm that updates a previous iterate $\bx_{n-1}$ by moving in a direction $\bDelta_n$: $\bx_n=\bx_{n-1} + \bDelta_n$. For example, SGD sets $\bDelta_n=-\eta \bg_{n-1}=-\eta\cdot \O(\bx_{n-1},\bz_{n-1})$ for a learning rate $\eta$. Instead, we let an online learning algorithm $\A$  decide the update direction $\bDelta_n$, using linear losses $\ell_n(\bx)= \langle \bg_n, \bx\rangle$.

The motivation behind essentially all first order algorithms is that $F(\bx_{n-1} + \bDelta_n) -F(\bx_{n-1}) \approx \langle \bg_n, \bDelta_n\rangle $. This suggests that $\bDelta_n$ should be chosen to minimize the inner product $\langle \bg_n, \bDelta_n\rangle$. However, we are faced with two difficulties. The first difficulty is the the approximation error in the first-order expansion. The second is the fact that $\bDelta_n$ needs to be chosen before $\bg_n$ is revealed, so that $\bDelta_n$ needs in some sense to ``predict the future''. Typical analysis of algorithms such as SGD use the remainder form of Taylor's theorem to address both difficulties simultaneously for smooth objectives, but in our non-smooth case this is not a valid approach. Instead, we tackle these difficulties independently. We overcome the first difficulty using the same randomized scaling trick employed by \citet{zhang2020complexity}: define $\bg_n$ to be a gradient evaluated not at $\bx_{n-1}$ or $\bx_{n-1} +\bDelta_n$, but at a \emph{random} point along the line segment connecting the two. Then for a well-behaved function we will have $F(\bx_{n-1} + \bDelta_n) -F(\bx_{n-1}) = \E[\langle \bg_n, \bDelta_n\rangle]$. The second difficulty is where online learning shines: online learning algorithms are \emph{specifically designed} to predict \emph{completely arbitrary} sequences of vectors as accurately as possible.

The previous intuition is formalized in Algorithm~\ref{alg:template_no_epochs} and the following result, which we will elaborate on in Theorem~\ref{thm:nonsmooth} before yielding our main result in Corollary~\ref{thm:nonsmoothogd}.
\begin{Theorem}
\label{thm:otnc}
Suppose $F$ is well-behaved. Define $\bnabla_n = \int_0^1 \! \nabla F(\bx_{n-1} + s\bDelta_n)\, \mathrm{d}s$. 
Then, with the notation in Algorithm~\ref{alg:template_no_epochs} and for any sequence of vectors $\bu_1, \dots, \bu_N$, we have the \emph{equality}:
\begin{align*}
    F(\bx_M)&=F(\bx_0) + \sum_{n=1}^M \langle \bg_n, \bDelta_n - \bu_n\rangle
    \iftoggle{icml}{\\ &\quad +}{+}
     \sum_{n=1}^M \langle \bnabla_n - \bg_n,\bDelta_n\rangle+\sum_{n=1}^M \langle \bg_n, \bu_n\rangle~.
\end{align*}
Moreover, if we let $s_n$ be independent random variables uniformly distributed in $[0,1]$, then we have
\begin{align*}
    \E[F(\bx_M)]&= F(\bx_0) + \E\left[\sum_{n=1}^M \langle \bg_n, \bDelta_n - \bu_n\rangle\right] 
    \iftoggle{icml}{\\ &\quad +}{+}
     \E\left[\sum_{n=1}^M \langle \bg_n, \bu_n\rangle\right]~.
\end{align*}
\end{Theorem}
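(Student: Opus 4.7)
The plan is to derive the first (pointwise) equality by a one-line telescoping argument using well-behavedness, and then obtain the second (expectational) equality by observing that the ``error'' term $\sum_n \langle \bnabla_n - \bg_n, \bDelta_n\rangle$ vanishes in expectation thanks to the randomized interpolation point and the unbiasedness of the oracle $\O$.

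First, I would apply the definition of well-behaved to the pair $(\bx_{n-1}, \bx_n) = (\bx_{n-1}, \bx_{n-1} + \bDelta_n)$. Since $\bx_n - \bx_{n-1} = \bDelta_n$, the change of variables $t \mapsto s$ in the defining integral yields
\[
F(\bx_n) - F(\bx_{n-1}) \;=\; \int_0^1 \langle \nabla F(\bx_{n-1} + s\bDelta_n), \bDelta_n\rangle\, \mathrm{d}s \;=\; \langle \bnabla_n, \bDelta_n\rangle~.
\]
Summing over $n = 1, \ldots, M$ telescopes to $F(\bx_M) - F(\bx_0) = \sum_{n=1}^M \langle \bnabla_n, \bDelta_n\rangle$. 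The first claim then follows from the purely algebraic identity
\[
\langle \bnabla_n, \bDelta_n\rangle = \langle \bg_n, \bDelta_n - \bu_n\rangle + \langle \bnabla_n - \bg_n, \bDelta_n\rangle + \langle \bg_n, \bu_n\rangle,
\]
which one checks by expanding and cancelling the $\pm\langle \bg_n, \bDelta_n\rangle$ and $\pm \langle \bg_n, \bu_n\rangle$ terms.

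For the second claim, I would take expectation of the first equality and argue that the middle sum vanishes term-by-term. The key observation is the measurability structure dictated by Algorithm~\ref{alg:template_no_epochs}: the online learner selects $\bDelta_n$ as a function of the earlier losses $\bg_1, \ldots, \bg_{n-1}$, hence independently of the fresh randomness $(s_n, \bz_n)$ used to produce $\bg_n = \O(\bx_{n-1} + s_n \bDelta_n, \bz_n)$. Conditioning on the filtration $\mathcal{F}_{n-1}$ generated by everything up to (and including) the choice of $\bDelta_n$, the tower rule together with the oracle's unbiasedness gives
\[
\E[\bg_n \mid \mathcal{F}_{n-1}] = \E_{s_n}\bigl[\E_{\bz_n}[\O(\bx_{n-1} + s_n\bDelta_n, \bz_n) \mid s_n, \mathcal{F}_{n-1}]\bigr] = \int_0^1 \nabla F(\bx_{n-1} + s\bDelta_n)\, \mathrm{d}s = \bnabla_n~.
\]
Since $\bDelta_n$ is $\mathcal{F}_{n-1}$-measurable, it pulls out of the conditional expectation, yielding $\E[\langle \bnabla_n - \bg_n, \bDelta_n\rangle \mid \mathcal{F}_{n-1}] = 0$, and hence $\E[\langle \bnabla_n - \bg_n, \bDelta_n\rangle] = 0$ unconditionally. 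Summing over $n$ eliminates the middle sum, leaving the advertised expectational identity.

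The only subtle point, and the step I would take most care over, is correctly pinning down this independence: one must verify that $s_n$ and $\bz_n$ are drawn freshly at round $n$ (so they are independent of $\bDelta_n$) and that $\O$ is unbiased in its first argument. Given these, both claims reduce to a telescoping sum plus a martingale-difference argument, with no quantitative estimates needed.
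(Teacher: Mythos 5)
Your argument is correct and matches the paper's proof essentially line for line: the same application of well-behavedness to get $F(\bx_n)-F(\bx_{n-1})=\langle\bnabla_n,\bDelta_n\rangle$, the same telescoping and add-and-subtract decomposition for the pointwise identity, and the same observation that $\E[\bg_n \mid \mathcal{F}_{n-1}]=\bnabla_n$ makes the middle sum vanish in expectation. You spell out the conditioning/measurability step in more detail than the paper (which simply states the conditional unbiasedness), but this is an elaboration, not a different route.
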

\begin{proof}
By the well-behaveness of $F$, we have
\begin{align*}
    F
    \iftoggle{icml}{%
    & (\bx_n) - F(\bx_{n-1}) \\
    }{%
    (\bx_n) - F(\bx_{n-1}) }
    &=\int_0^1 \! \langle \nabla F(\bx_{n-1} + s(\bx_n - \bx_{n-1})),\bx_n-\bx_{n-1}\rangle \, \mathrm{d}s\\
    &= \int_0^1 \!\langle \nabla F(\bx_{n-1} + s\bDelta_n), \bDelta_n\rangle\, \mathrm{d}s\\
    &= \langle \bnabla_n,\bDelta_n\rangle\\
    &= \langle \bg_n, \bDelta_n-\bu_n\rangle + \langle \bnabla_n -\bg_n, \bDelta_n\rangle + \langle \bg_n,\bu_n\rangle~.
\end{align*}
Now, sum over $n$ and telescope to obtain the stated bound.

For the second statement, simply observe that by definition we have $\E[ \bg_n]=\int_0^1 \! \nabla F(\bx_{n-1} + s\bDelta_n) \, \mathrm{d}s = \bnabla_n$.
\end{proof}

\begin{algorithm}[t]
   \caption{Online-to-Non-Convex Conversion}
   \label{alg:template_no_epochs}
  \begin{algorithmic}
      \STATE{\bfseries Input: } Initial point $\bx_0$, $K \in\N$, $T \in \N$, online learning algorithm $\A$.
      \STATE Set $M=K \cdot T$
      \FOR{$n=1\dots M$}
      \STATE Get $\bDelta_n$ from $\A$
      \STATE Set $\bx_n =\bx_{n-1}+\bDelta_n$
      \STATE Generate $s_n\in[0,1]$ // usually uniformly random, see Theorem statements for precise settings.

      \STATE Set $\bw_n = \bx_{n-1} + s_n\bDelta_n$
      \STATE Sample random $\bz_n$
      \STATE Generate gradient $\bg_n= \O(\bw_n, \bz_n)$
      \STATE Send $\bg_n$ to $\A$ as gradient
      \ENDFOR
      \STATE Set $\bw^k_t=\bw_{(k-1)T+t}$ for $k=1,\dots, K$ and $t=1,\dots, T$
      \STATE Set $\overline{\bw}^k = \frac{1}{T}\sum_{t=1}^T \bw^k_t$  for $k=1,\dots, K$
      \STATE{\bfseries Return} $\{\overline{\bw}^1, \dots, \overline{\bw}^K\}$
   \end{algorithmic}
\end{algorithm}

\subsection{Guarantees for Non-Smooth Non-Convex Functions}
The primary value of Theorem~\ref{thm:otnc} is that the term $\sum_{n=1}^M \langle \bg_n, \bDelta_n - \bu_n\rangle$ is exactly the regret of an online learning algorithm: lower regret clearly translates to a smaller bound on $F(\bx_M)$. Next, by carefully choosing $\bu_n$, we will be able to relate the term $\sum_{n=1}^M \langle \bg_n ,\bu_n\rangle$ to the gradient averages that appear in the definition of $(\delta, \epsilon)$-stationarity. 
Formalizing these ideas, we have the following:



\begin{Theorem}
\label{thm:nonsmooth}
Assume $F$ is well-behaved.
With the notation in  Algorithm~\ref{alg:template_no_epochs}, set $s_n$ to be a random variable sampled uniformly from $[0,1]$. Set $T,K \in \N$ and $M=KT$.
Define $\bu^k = -D\frac{\sum_{t=1}^T\nabla F\left( \bw^k_t\right)}{\left\|\sum_{t=1}^T \nabla F\left( \bw^k_t\right)\right\|}$ for some $D>0$ for $k=1,\dots,K$.
Finally, suppose $\text{Var}(\bg_n)\le \sigma^2$. Then:
\begin{align*}
\E
\iftoggle{icml}{ & 
\left[\frac{1}{K}\sum_{k=1}^K \left\|\frac{1}{T}\sum_{t=1}^T \nabla F(\bw^k_t)\right\|\right] \\ }{%
\left[\frac{1}{K}\sum_{k=1}^K \left\|\frac{1}{T}\sum_{t=1}^T \nabla F(\bw^k_t)\right\|\right]}
&\le \frac{F(\bx_0)-F^\star}{D M} + \frac{\E[R_T(\bu^1,\dots,\bu^K)]}{D M}+\frac{\sigma}{\sqrt{T}}~.
\end{align*}
\end{Theorem}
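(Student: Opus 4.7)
The plan is to use Theorem~\ref{thm:otnc} as a black box and then massage the term $\E[\sum_{n=1}^M\langle\bg_n,\bu_n\rangle]$ into a lower bound involving the quantities $\|\tfrac{1}{T}\sum_t\nabla F(\bw^k_t)\|$. Since $\bu_n$ is constant over each length-$T$ epoch (equal to $\bu^k$ for $n\in\{(k-1)T+1,\dots,kT\}$), the middle term in the second identity of Theorem~\ref{thm:otnc} is exactly $R_T(\bu^1,\dots,\bu^K)$ as defined in equation~\eqref{eqn:regret}. After moving everything except the $\|\bar{\bnabla}^k\|$ terms to one side and invoking $F(\bx_M)\ge F^\star$, the theorem will follow from a bound of the form $\E[\sum_n\langle\bg_n,\bu_n\rangle]\le -DT\sum_k\E[\|\bar{\bnabla}^k\|]+DM\sigma/\sqrt{T}$, where I write $\bar{\bnabla}^k\triangleq\tfrac{1}{T}\sum_{t=1}^T\nabla F(\bw^k_t)$ and $\bar{\bg}^k\triangleq\tfrac{1}{T}\sum_{t=1}^T\bg^k_t$.

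The algebraic heart is to aggregate the inner products inside each epoch, $\sum_{n=1}^M\langle\bg_n,\bu_n\rangle=T\sum_{k=1}^K\langle\bar{\bg}^k,\bu^k\rangle$, and then split $\bar{\bg}^k=\bar{\bnabla}^k+(\bar{\bg}^k-\bar{\bnabla}^k)$. By the explicit choice $\bu^k=-D\,\bar{\bnabla}^k/\|\bar{\bnabla}^k\|$, the ``signal'' part collapses to $\langle\bar{\bnabla}^k,\bu^k\rangle=-D\|\bar{\bnabla}^k\|$, and Cauchy--Schwarz with $\|\bu^k\|=D$ gives $\langle\bar{\bg}^k-\bar{\bnabla}^k,\bu^k\rangle\le D\|\bar{\bg}^k-\bar{\bnabla}^k\|$. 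Thus the one remaining task is to control $\sum_k\E[\|\bar{\bg}^k-\bar{\bnabla}^k\|]$.

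The main obstacle is that $\bu^k$ is not independent of the noise terms $\bg^k_t-\nabla F(\bw^k_t)$: both $\bw^k_t$ and hence $\bu^k$ are built from earlier stochastic gradients through the online learner $\A$ and the random scalings $s_n$. I would resolve this by appealing to the natural filtration $\scrF_t$ generated by the history up through round $t-1$ together with $s_t$, under which $\bw^k_t$ is $\scrF_t$-measurable and $\E[\bg^k_t\mid\scrF_t]=\nabla F(\bw^k_t)$ with conditional variance at most $\sigma^2$. Hence $\{\bg^k_t-\nabla F(\bw^k_t)\}_{t=1}^T$ is a vector martingale-difference sequence, so orthogonality gives $\E\|\bar{\bg}^k-\bar{\bnabla}^k\|^2=\tfrac{1}{T^2}\sum_{t=1}^T\E\|\bg^k_t-\nabla F(\bw^k_t)\|^2\le\sigma^2/T$, and Jensen's inequality yields $\E\|\bar{\bg}^k-\bar{\bnabla}^k\|\le\sigma/\sqrt{T}$. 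Summing over $k$, combining with the signal part, substituting into the rearranged form of Theorem~\ref{thm:otnc}, and dividing by $DM=DKT$ produces the claimed bound.
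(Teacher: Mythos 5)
Your proposal is correct and follows essentially the same route as the paper's proof: instantiate Theorem~\ref{thm:otnc} with the epoch-wise constant comparator sequence, split $\E\bigl[\sum_n\langle\bg_n,\bu_n\rangle\bigr]$ into the signal term $-DT\sum_k\bigl\|\frac{1}{T}\sum_t\nabla F(\bw^k_t)\bigr\|$ (via the explicit form of $\bu^k$) plus a noise term bounded by Cauchy--Schwarz and the variance assumption, then rearrange and divide by $DM$. Your martingale-difference/filtration justification for $\E\bigl\|\bar{\bg}^k-\bar{\bnabla}^k\bigr\|\le\sigma/\sqrt{T}$ is in fact slightly more explicit than the paper, which states the analogous bound directly, but it is the same underlying orthogonality-plus-Jensen argument.
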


\begin{proof}
From Theorem~\ref{thm:otnc}, we have
\begin{align*}
    \E[F(\bx_M)]
    &= F(\bx_0) + \E[R_T(\bu^1,\dots,\bu^K)]
    \iftoggle{icml}{ \\ &\quad +}{+}
     \E\left[\sum_{n=1}^M\langle \bg_n, \bu_n\rangle\right]~.
\end{align*}
Now, since $\bu^k=-D \frac{\sum_{t=1}^T\nabla F(\bw^k_t)}{\left\|\sum_{t=1}^T \nabla F(\bw^k_t)\right\|}$, $\E[\bg_n] = \E[\nabla F(\bw_n)]$, and $\text{Var}(\bg_n)\le \sigma^2$, we have
\begin{align*}
    \iftoggle{icml}{ &\E\left[\sum_{n=1}^M\langle \bg_n, \bu_n\rangle\right]}{%
    \E\left[\sum_{n=1}^M\langle \bg_n, \bu_n\rangle\right]&}
    \le \E\left[\sum_{k=1}^K\left\langle \sum_{t=1}^T \nabla F(\bw_t^k), \bu^k\right\rangle\right]
    \iftoggle{icml}{ \\ &\quad +}{+}
     \E\left[D \sum_{k=1}^K\left\|\sum_{t=1}^T(\nabla F(\bw_t^k) -\bg_{T(k-1)+t})\right\| \right]\\
    &\le \E\left[\sum_{k=1}^K\left\langle \sum_{t=1}^T \nabla F(\bw_t^k), \bu^k\right\rangle\right] +D \sigma K\sqrt{T}\\
    &= \E\left[-\sum_{k=1}^K DT\left\|\frac{1}{T}\sum_{t=1}^T \nabla F\left( \bw^k_t\right)\right\|\right] +D \sigma K\sqrt{T}~.
\end{align*}

Putting this all together, we have
\begin{align*}
F^\star 
&\leq \E[F(\bx_M)]
\le  F(\bx_0) + \E[R_T(\bu^1,\dots, \bu^K)] 
\iftoggle{icml}{ \\ &\quad + }{+}
\sigma DK\sqrt{T}-DT \sum_{k=1}^K\E\left[\left\|\frac{1}{T}\sum_{t=1}^T \nabla F\left( \bw^k_t\right)\right\|\right]~.
\end{align*}
Dividing by $K D T = D M$ and reordering, we have the stated bound.
\end{proof}

We now instantiate Theorem~\ref{thm:nonsmooth} with the simplest online learning algorithm: online gradient descent (OGD)~\citep{zinkevich2003online}. OGD takes input a radius $D$ and a step size $\eta$ and makes the update $\bDelta_{n+1} = \Pi_{\|\bDelta\|\le D} [\bDelta_n - \eta \bg_n]$ with $\bDelta_1=0$. The standard analysis shows that if $\E[\|\bg_n\|^2]\le G^2$ for all $n$, then with $\eta=\frac{D}{G\sqrt{T}}$, OGD will ensure\footnote{For completeness a proof of this statement is in Appendix~\ref{sec:online_algos}.} static regret $\E[R_T(\bu) ]\le DG\sqrt{T}$ for \emph{any}  $\bu$ satisfying $\|\bu\|\le D$. Thus, by resetting the algorithm every $T$ iterations, we achieve $\E[R_T(\bu^1,\dots\bu^K)]\le K D G\sqrt{T}$. This powerful guarantee for all sequences is characteristic of online learning. We are now free to optimize the remaining parameters $K$ and $D$ to achieve our main result, presented in Corollary~\ref{thm:nonsmoothogd}.
\begin{Corollary}\label{thm:nonsmoothogd}
Suppose we have a budget of $N$ gradient evaluations.
Under the assumptions of Theorem~\ref{thm:nonsmooth}, suppose in addition $\E[\|\bg_n\|^2]\le G^2$ and that $\A$ guarantees $\|\bDelta_n\|\le D$ for some user-specified $D$ for all $n$ and ensures the worst-case $K$-shifting regret bound $\E[R_T(\bu^1, \dots, \bu^K)]\le DGK\sqrt{T}$ for all $\|\bu^k\|\le D$ (e.g., as achieved by the OGD algorithm that is reset every $T$ iterations). Let $\delta>0$ be an arbitrary number. Set $D=\delta/T$, $T=\min(\lceil(\frac{GN\delta}{F(\bx_0)-F^\star})^{2/3}\rceil,\frac{N}{2})$, and $K=\lfloor\frac{N}{T}\rfloor$.
Then,
for all $k$ and $t$, we have $\|\overline{\bw}^k-\bw^k_t\|\le \delta$.

Moreover, we have the inequality
\begin{align*}
\E&\left[\frac{1}{K}\sum_{k=1}^K \left\|\frac{1}{T}\sum_{t=1}^T \nabla F(\bw^k_t)\right\|\right] \leq \frac{2(F(\bx_0)-F^\star)}{\delta N}
\iftoggle{icml}{ \\ & +}{+}
 \max\left(\frac{5G^{2/3}(F(\bx_0)-F^\star)^{1/3}}{(N\delta)^{1/3}},\frac{6G}{\sqrt{N}}\right), 
\end{align*}
which implies
\begin{align*}
\iftoggle{icml}{\E&}{\E}\left[\frac{1}{K}\sum_{t=1}^K \|\nabla F(\overline{\bw}^k)\|_\delta \right]\leq \frac{2(F(\bx_0)-F^\star)}{\delta N}
\iftoggle{icml}{ \\ & +}{+}
\max\left(\frac{5G^{2/3}(F(\bx_0)-F^\star)^{1/3}}{(N\delta)^{1/3}},\frac{6G}{\sqrt{N}}\right)~. 
\end{align*}
\end{Corollary}
Before providing the proof, let us discuss the implications. Notice that if we select $\hat \bw$ at random from $\{\overline{\bw}^1,\dots,\overline{\bw}^K\}$, then we clearly have $\E[\|\nabla  F(\hat \bw)\|_{\delta}]=\E\left[\frac{1}{K}\sum_{t=1}^K \|\nabla F(\overline{\bw}^k)\|_\delta\right]$. Therefore, the Corollary asserts that for a function $F$ with $F(\bx_0)-\inf F(\bx)\le \gap$ with a stochastic gradient oracle whose second moment is bounded by $G^2$, a properly instantiated Algorithm~\ref{alg:template_no_epochs} finds a $(\delta,\epsilon)$ stationary point in $N=O(G \gap \epsilon^{-3}\delta^{-1})$ gradient evaluations. In Section~\ref{sec:mainlowerbounds}, we will provide a lower bound showing that this rate is optimal essentially whenever $\delta G^2 \ge \epsilon \gap$. Together, the Corollary and the lower bound provide a nearly complete characterization of the complexity of finding $(\delta, \epsilon)$-stationary points in the stochastic setting.

It is also interesting to note that the bound does not appear to improve if the gradients are deterministic. Specifically, in the assumptions for Corollary~\ref{thm:nonsmoothogd}, we could try to relax $\E[\|\bg_t\|^2]\le G$ to $\|\nabla F(\bw_t)\|\le G$ and $\text{Var}(\bg_t)\le \sigma^2$ for some $\sigma$. We might then hope to improve the bound as $\sigma\to 0$ by taking advantage of the $\sigma$-dependency in Theorem~\ref{thm:nonsmooth}.  However, it turns out that the $\sigma$-dependency in Corollary~\ref{thm:nonsmoothogd} is dominated by a dependency on $G$ coming from the regret bound of OGD. This highlights an interesting open question: is it actually possible to improve in the deterministic setting? It is conceivable that the answer is ``no'': in the non-smooth \emph{convex} optimization setting,  it is well-known that the optimal rates for stochastic and  deterministic optimization are the same  (see, e.g., \citet{bubeck2015convex} for proofs of both upper and lower bounds).

\begin{remark}
We conjecture that by employing martingale concentration, the above can be extended to identify a $(\delta, O(\frac{G^{2/3}(F(\bx_0)-F^\star)^{1/3}}{(N\delta)^{1/3}}))$-stationary point with high probability, although we do not establish such a result here. 
\end{remark}



It is also interesting to explicitly write  the update of the overall algorithm:
\begin{align*}
\bx_n&=\bx_{n-1}+\bDelta_n\\
\bg_n &= \O(\bx_n+ (s_n-1)\bDelta_n,\bz_n)\\
\bDelta_{n+1}&=\text{clip}_D(\bDelta_n+\eta\bg_n)
\end{align*}
where clip$_D(\bx)=\bx\min(\frac{D}{\|\bx\|},1)$. In words, the update is reminiscent of the SGD update with momentum and clipping. The primary different element is the fact that the stochastic gradient is taken on a slightly perturbed $\bx_n$.

\begin{proof}[Proof of Corollary~\ref{thm:nonsmoothogd}]
Since $\A$ guarantees $\|\bDelta_n\|\le D$, for all $n<n'\leq n+T-1$, we have
\begin{align*}
\|\bw_n - \bw_{n'}\|&=\|\bx_{n}-(1-s_n)\bDelta_n -\bx_{n'-1} + s_{n'}\bDelta_{n'}\|\\
&\le \left\|\sum_{i=n+1}^{n'-1} \bDelta_i\right\| +\|\bDelta_n\|+\|\bDelta_{n'}\|\\
&\le D((n'-1) -(n+1) +1) + 2D\le DT~.
\end{align*}
Therefore, we clearly have $\|\bw^k_t-\overline{\bw}^k\|\le DT=\delta$.

Note that from the choice of $K$ and $T$ we have $M=KT\geq N-T\geq N/2$.
So, for the second fact, notice that $\text{Var}(\bg_n)\le\E[\|\bg_t\|^2]\le G^2$ for all $n$. Thus, applying Theorem~\ref{thm:nonsmooth} in concert with the additional assumption $\E[R_T(\bu^1,\dots, \bu^K)]\le DGK\sqrt{T}$, we have:
\begin{align*}
\E
\iftoggle{icml}{&\left[\frac{1}{K}\sum_{k=1}^K \left\|\frac{1}{T}\sum_{t=1}^T \nabla F(\bw^k_t)\right\|\right]\\}{%
\left[\frac{1}{K}\sum_{k=1}^K \left\|\frac{1}{T}\sum_{t=1}^T \nabla F(\bw^k_t)\right\|\right]}
&\le 2\frac{F(\bx_0)-F^\star}{DN} + 2\frac{KDG\sqrt{T}}{DN}+\frac{G}{\sqrt{T}}\\
&\le \frac{2T(F(\bx_0)-F^\star)}{\delta N} +\frac{3G}{\sqrt{T}}\\
&\le \max\left(\frac{5G^{2/3}(F(\bx_0)-F^\star)^{1/3}}{(N\delta)^{1/3}}, \frac{6G}{\sqrt{N}}\right)
\iftoggle{icml}{ \\ &\quad +}{+}
 \frac{2(F(\bx_0)-F^\star)}{\delta N},
\end{align*}
where the last inequality is due to the choice of $T$.

Finally, observe that $\|\bw^k_t-\overline{\bw}^k\|\le \delta$ for all $t$ and $k$, and also that $\overline{\bw}^k =\frac{1}{T}\sum_{t=1}^T \bw^k_t$. Therefore $\bgamma=\{\bw^k_1,\dots,\bw^k_T\}$ satisfies the conditions in the infimum in Definition~\ref{def:critmeasure} so that $\|\nabla F(\overline{\bw}^k)\|_\delta \le \left\|\frac{1}{T}\sum_{t=1}^T \nabla F(\bw^k_t)\right\|$.
\end{proof}

\section{Bounds for the $L_1$ Norm}
It is a well-known trick in the online learning literature that running a separate instance of an online learning algorithm on each coordinate of $\bDelta$ yields regret bounds with respect to $L_1$ norms of the linear costs (e.g., as in AdaGrad  \citep{duchi10adagrad,mcmahan2010adaptive}). For example, we can run the online gradient descent algorithm with a separate learning rate for each coordinate: $\bDelta_{n+1,i} = \Pi_{[-D_\infty, D_\infty]}[\bDelta_{n,i}-\eta_i \bg_{n,i}]$. The regret of this procedure is simply the sum of the regrets of each of the individual algorithms. In particular, if $\E[\bg_{n,i}^2]\le G_i^2$, then setting $\eta_i = \frac{D_\infty}{G_i\sqrt{T}}$ yields the regret bound $\E[R_T(\bu)]\le D_\infty \sqrt{T}\sum_{i=1}^N G_i$ for any $\bu$ satisfying  $\|\bu\|_{\infty}\le D_\infty$. By employing such online algorithms with our online-to-non-convex conversion, we can obtain a guarantee on the $L_1$ norm of the gradients. 
%
\begin{Definition}
\label{def:crit_coordinate}
A point $\bx$ is a $(\delta,\epsilon)$-stationary point with respect to the $L_1$ norm of an almost-everywhere differentiable function $F$ if there exists a finite subset $\bgamma$ of the $L_\infty$ ball of radius $\delta$ centered at $\bx$ such that if $\by$ is selected uniformly at random from $\bgamma$, $\E[\by]=\bx$ and $\|\E[\nabla F(\by)]\|_1\le \epsilon$.
\end{Definition}

As a counterpart to this definition, we define:
\begin{Definition}\label{def:critmeasure_coordinate}
Given a point $\bx$, a number $\delta>0$ and an almost-everywhere differentiable function $F$, define 
\begin{align*}
\|\nabla F(\bx)\|_{1,\delta} \triangleq \inf_{\bgamma\subset B_\infty(\bx,\delta)|, \frac{1}{|\bgamma|}\sum_{\by\in \bgamma} \by = \bx} \left\|\frac{1}{|\bgamma|}\sum_{\by\in \bgamma} \nabla F(\by)\right\|_1~.
\end{align*}
\end{Definition}

We now can state a theorem similar to Corollary~\ref{thm:nonsmoothogd}. Given that the proof is also very similar, we defer it to Appendix~\ref{sec:proof_nonsmoothogd_coordinate}.
\begin{Theorem}
\label{thm:nonsmoothogd_coordinate}
Suppose we have a budget of $N$ gradient evaluations.
Assume $F:\R^d \to \R$ is well-behaved. With the notation in Algorithm~\ref{alg:template_no_epochs}, set $s_n$ to be a random variable sampled uniformly from $[0,1]$. Set $T,K \in \N$ and $M=KT$. Assume that $\E[g_{n,i}^2] \leq G^2_i$ for $i=1, \dots, d$ for all $n$. Assume that $\A$ guarantees $\|\bDelta_n\|_\infty\le D_\infty$ for some user-specified $D_\infty$ for all $n$ and ensures the $K$-shifting regret bound $\E[R_T(\bu^1, \dots, \bu^K)]\le D_\infty K \sqrt{T} \sum_{i=1}^d G_i$ for all $\|\bu^k\|_\infty\le D_\infty$. Let $\delta>0$ be an arbitrary number. Set $D_\infty=\delta/T$, $T=\min(\lceil(\frac{N\delta \sum_{i=1}^d G_i }{F(\bx_0)-F^\star})^{2/3}\rceil,\frac{N}{2})$, and $K=\lfloor\frac{N}{T}\rfloor$.
Then we have:

\begin{align*}
&\frac{1}{K}\sum_{t=1}^K \|\nabla F(\overline{\bw}^k)\|_{1,\delta}
\le \frac{2(F(\bx_0)-F^\star)}{\delta N}
\iftoggle{icml}{ \\& +}{+}
\max\left(\frac{5(\sum_{i=1}^d G_i)^{2/3}(F(\bx_0)-F^\star)^{1/3}}{(N\delta)^{1/3}}, \frac{6 \sum_{i=1}^d G_i}{\sqrt{N}}\right).
\end{align*}
\end{Theorem}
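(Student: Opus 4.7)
The plan is to run the proof of Corollary~\ref{thm:nonsmoothogd} verbatim but swap the $L_2$ self-duality for the $L_\infty$/$L_1$ Hölder pairing. The diameter claim $\|\bw_t^k - \overline{\bw}^k\|_\infty \le \delta$ is immediate from the assumption $\|\bDelta_n\|_\infty \le D_\infty$: the same telescoping used in Corollary~\ref{thm:nonsmoothogd} gives $\|\bw_n - \bw_{n'}\|_\infty \le (n' - n + 1) D_\infty \le T D_\infty = \delta$ for any two indices in the same epoch, and the triangle inequality against the epoch average finishes the claim.

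The heart of the proof is re-deriving Theorem~\ref{thm:nonsmooth} in the $L_1$ geometry, and this is where the only genuinely new calculation lives. Starting from the equality of Theorem~\ref{thm:otnc} with the block-constant competitor $\bu_n = \bu^{\lceil n/T \rceil}$, the natural choice is
\[
\bu^k = -D_\infty \, \text{sign}\!\left(\textstyle\sum_{t=1}^T \nabla F(\bw_t^k)\right),
\]
so that $\|\bu^k\|_\infty \le D_\infty$ and Hölder gives $\langle \sum_t \nabla F(\bw_t^k), \bu^k\rangle = -D_\infty \|\sum_t \nabla F(\bw_t^k)\|_1$. The Euclidean proof bounds the noise term by a single Jensen step; here the obstacle is that the dual pairing lives in the $L_1$ norm, which does not concentrate as cleanly, so one has to work coordinate-by-coordinate. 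For each $i$, the sequence $\{\bg_{n,i} - \nabla F(\bw_n)_i\}$ is a martingale difference with conditional second moment at most $G_i^2$, so within one epoch $\E[(\sum_t (\bg - \nabla F(\bw_t^k))_i)^2] \le T G_i^2$ and Jensen gives $\E|\sum_t (\bg - \nabla F(\bw_t^k))_i| \le \sqrt{T}\, G_i$. Summing over $i$ and $k$ produces a stochastic penalty of $D_\infty K \sqrt{T} \sum_{i=1}^d G_i$ in place of the $D \sigma K \sqrt{T}$ term. Rearranging as in Theorem~\ref{thm:nonsmooth} then yields
\[
\E\!\left[\frac{1}{K}\sum_{k=1}^K \left\|\frac{1}{T}\sum_{t=1}^T \nabla F(\bw_t^k)\right\|_1\right] \le \frac{F(\bx_0) - F^\star}{D_\infty M} + \frac{\E[R_T]}{D_\infty M} + \frac{\sum_{i=1}^d G_i}{\sqrt{T}}.
\]

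The remainder is algebra. Substituting the assumed $K$-shifting regret bound $\E[R_T] \le D_\infty K \sqrt{T} \sum_i G_i$, using $D_\infty = \delta/T$ so that $D_\infty M \ge \delta N/2$, and balancing the two dominant terms $T/(\delta N)$ and $\sum_i G_i / \sqrt{T}$ at the prescribed choice of $T$ (with the $N/2$ cap handling the regime where the unconstrained optimum would be too large) reproduces the first displayed bound. The implication for $\|\nabla F(\overline{\bw}^k)\|_{1,\delta}$ is immediate from Definition~\ref{def:critmeasure_coordinate}: the set $\bgamma = \{\bw_1^k, \dots, \bw_T^k\}$ lies in $B_\infty(\overline{\bw}^k, \delta)$ by the diameter bound and averages to $\overline{\bw}^k$ by construction, so it is a feasible competitor in the infimum.
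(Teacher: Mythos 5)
Your proposal is correct and follows essentially the same route as the paper's proof: choose the block-constant comparator $\bu^k = -D_\infty\,\mathrm{sign}(\sum_t \nabla F(\bw^k_t))$ (so Hölder turns the comparator term into $-D_\infty\|\cdot\|_1$), handle the noise coordinate-by-coordinate via $\E[(\sum_t(\bg-\nabla F)_i)^2]\le TG_i^2$ and Jensen to get the $\sqrt{T}\sum_i G_i$ penalty, then plug in the regret bound and the choice of $T$. Your derivation even fixes a small typo in the paper's intermediate Theorem (the stray $D_\infty$ factor on the $\sum_i\sigma_i/\sqrt{T}$ term): your version $\sum_i G_i/\sqrt{T}$ is the one the paper actually uses in its own final substitution.
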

Let's compare this result with Corollary~\ref{thm:nonsmoothogd}.
For a fair comparison, we set $G_i$ and $G$ such that $\sum_{i=1}^d G_i^2 = G^2$. Then, we can lower bound $\|\cdot\|_\delta$ with $\frac{1}{\sqrt{d}}\|\cdot\|_{1,\delta}$. Hence, under the assumption $\E[\|\bg_n\|^2]= \sum_{i=1}^d \E[g_{n,i}^2] \leq \sum_{i=1}^d G_i^2 = G^2$, Corollary~\ref{thm:nonsmoothogd} implies $\frac{1}{K}\sum_{t=1}^K \|\nabla F(\overline{\bw}^k)\|_{1,\delta}\leq O(\frac{G^{2/3}\sqrt{d}}{(N\delta)^{1/3}})$.

Now, let us see what would happen if we instead employed the above Corollary~\ref{thm:nonsmoothogd_coordinate}. First, observe that $\sum_{i=1}^d G_i \leq \sqrt{d} \sqrt{\sum_{i=1}^d G_i^2} \leq \sqrt{d} G$.
Substituting this expression into Theorem~\ref{thm:nonsmoothogd_coordinate} now gives an upper bound on $\|\cdot\|_{1,\delta}$ that is  $O(\frac{d^{1/3} G^{2/3}}{(N \delta)^{1/3}})$, which is better than the one we could obtain from Corollary~\ref{thm:nonsmoothogd} under the same assumptions.

\section{From Non-smooth to Smooth Guarantees}
\label{sec:to_smooth}

Let us now see what our results imply for smooth objectives. The following two propositions show that for smooth $F$, a $(\delta, \epsilon)$-stationary point is automatically a $(0,\epsilon')$-stationary point for some appropriate $\epsilon'$. The proofs are in Appendix~\ref{sec:proof_to_smooth}.
\begin{restatable}{Proposition}{nonsmoothtosmooth}\label{prop:nonsmoothtosmooth}
Suppose that $F$ is $H$-smooth (that is, $\nabla F$ is $H$-Lipschitz) and $x$ also satisfies $\|\nabla F(\bx)\|_\delta \le \epsilon$. Then, $\|\nabla F(\bx)\|\le \epsilon + H\delta$.
\end{restatable}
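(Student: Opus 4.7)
The plan is to unpack Definition~\ref{def:critmeasure} and apply $H$-smoothness pointwise inside the averaging set, then use the triangle inequality together with the centering condition $\frac{1}{|\bgamma|}\sum_{\by\in\bgamma}\by=\bx$.

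Concretely, since $\|\nabla F(\bx)\|_\delta\le\epsilon$ is defined as an infimum, for every $\eta>0$ I would first pick a finite set $\bgamma\subset B(\bx,\delta)$ with $\frac{1}{|\bgamma|}\sum_{\by\in\bgamma}\by=\bx$ and $\bigl\|\frac{1}{|\bgamma|}\sum_{\by\in\bgamma}\nabla F(\by)\bigr\|\le\epsilon+\eta$. Then I would write
\[
\nabla F(\bx) \;=\; \frac{1}{|\bgamma|}\sum_{\by\in\bgamma}\nabla F(\by) \;+\; \frac{1}{|\bgamma|}\sum_{\by\in\bgamma}\bigl(\nabla F(\bx)-\nabla F(\by)\bigr),
\]
and bound the second term coordinate-wise using $H$-smoothness: $\|\nabla F(\bx)-\nabla F(\by)\|\le H\|\bx-\by\|\le H\delta$ for each $\by\in\bgamma\subset B(\bx,\delta)$. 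The triangle inequality applied to the average then yields $\bigl\|\frac{1}{|\bgamma|}\sum_{\by\in\bgamma}(\nabla F(\bx)-\nabla F(\by))\bigr\|\le H\delta$.

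Combining these via one more triangle inequality gives $\|\nabla F(\bx)\|\le\epsilon+\eta+H\delta$, and sending $\eta\to 0$ closes the argument. There is essentially no obstacle here: the centering condition on $\bgamma$ is not even needed for the bound itself (it only matters that every $\by$ lies within distance $\delta$ of $\bx$), so the $H$-smoothness estimate is all the analytic content. The only mild subtlety is the infimum versus minimum in the definition of $\|\cdot\|_\delta$, which is handled by the $\eta\to 0$ limit as above.
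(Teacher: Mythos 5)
Your proof is correct and is essentially the same as the paper's: both use the $H$-Lipschitzness of $\nabla F$ to get $\|\nabla F(\by)-\nabla F(\bx)\|\le H\delta$ for each $\by$ in the ball, average, apply the triangle inequality, and pass from the infimum to the conclusion by letting the slack parameter tend to zero. Your side observation that the centering condition on $\bgamma$ is unused here is also accurate — it only becomes load-bearing in the second-order analogue (Proposition~\ref{prop:nonsmoothtosecondsmooth}), where it cancels the linear Hessian term.
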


\begin{restatable}{Proposition}{nonsmoothtosecondsmooth}\label{prop:nonsmoothtosecondsmooth}
Suppose that $F$ is $J$-second-order-smooth (that is, $\|\nabla^2 F(\bx)-\nabla^2F(\by)\|_{\text{op}}\le J\|\bx-\by\|$ for all $\bx$ and $\by$). Suppose also that $\bx$ satisfies $\|\nabla F(\bx)\|_\delta \le \epsilon$. Then, $\|\nabla F(\bx)\|\le \epsilon + \frac{J}{2}\delta^2$. 
\end{restatable}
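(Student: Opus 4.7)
The plan is to exploit the mean-preserving condition $\frac{1}{|\bgamma|}\sum_{\by\in\bgamma}\by = \bx$ in Definition~\ref{def:critmeasure} together with a second-order Taylor expansion of $\nabla F$ around $\bx$. Second-order smoothness gives the standard remainder bound
\[
\left\|\nabla F(\by) - \nabla F(\bx) - \nabla^2 F(\bx)(\by-\bx)\right\| \le \tfrac{J}{2}\|\by-\bx\|^2,
\]
which I would derive by integrating the Hessian-Lipschitz condition along the segment from $\bx$ to $\by$ (this is standard and the only place $J$-second-order-smoothness enters).

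The key step is then the following cancellation. Fix any finite $\bgamma\subset B(\bx,\delta)$ with $\frac{1}{|\bgamma|}\sum_{\by\in\bgamma}\by = \bx$. Averaging the Taylor expansion over $\by\in\bgamma$, the linear term becomes $\nabla^2 F(\bx)\cdot\frac{1}{|\bgamma|}\sum_{\by\in\bgamma}(\by-\bx) = 0$ by the mean condition. Hence
\[
\left\|\frac{1}{|\bgamma|}\sum_{\by\in\bgamma}\nabla F(\by) - \nabla F(\bx)\right\| \le \frac{1}{|\bgamma|}\sum_{\by\in\bgamma}\tfrac{J}{2}\|\by-\bx\|^2 \le \tfrac{J}{2}\delta^2,
\]
where the last inequality uses $\bgamma\subset B(\bx,\delta)$.

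To finish, I would apply the triangle inequality: for any $\eta>0$, by the infimum in Definition~\ref{def:critmeasure} there is a valid $\bgamma$ with $\|\frac{1}{|\bgamma|}\sum_{\by\in\bgamma}\nabla F(\by)\|\le \epsilon+\eta$, so
\[
\|\nabla F(\bx)\| \le \left\|\frac{1}{|\bgamma|}\sum_{\by\in\bgamma}\nabla F(\by)\right\| + \tfrac{J}{2}\delta^2 \le \epsilon + \eta + \tfrac{J}{2}\delta^2,
\]
and letting $\eta\to 0$ yields the claim. There is no real obstacle here; the only thing to be careful about is invoking the infimum properly (the set $\bgamma$ attaining $\epsilon$ need not exist, so one argues up to an arbitrary $\eta$), and to remember that it is precisely the unbiasedness built into Definition~\ref{def:crit}/\ref{def:critmeasure} that kills the first-order term, which is what makes the $\delta^2$ rate (rather than $\delta$, as in Proposition~\ref{prop:nonsmoothtosmooth}) possible.
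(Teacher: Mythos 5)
Your proof is correct and follows essentially the same approach as the paper: a second-order Taylor expansion of $\nabla F$ around $\bx$, cancellation of the linear term via the mean-preserving condition $\frac{1}{|\bgamma|}\sum_{\by\in\bgamma}\by=\bx$, and a triangle-inequality plus approximating-$\bgamma$ argument to close out the infimum. The only cosmetic difference is notation ($\eta$ for the paper's $p$) and the direction of the final triangle inequality, which is immaterial.
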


Now, recall that Corollary~\ref{thm:nonsmoothogd} shows that we can find a $(\delta,\epsilon)$ stationary point in $O(\epsilon^{-3}\delta^{-1})$ iteration. Thus, Proposition~\ref{prop:nonsmoothtosmooth} implies that by setting $\delta = \epsilon/H$, we can find a $(0,\epsilon)$-stationary point of an $H$-smooth objective $F$ in $O(\epsilon^{-4})$ iterations, which matches the  (optimal) guarantee of standard SGD~\citep{ghadimi2013stochastic, arjevani2019lower}. Further, Proposition~\ref{prop:nonsmoothtosecondsmooth} shows that by setting $\delta = \sqrt{\epsilon/J}$, we can find a $(0,\epsilon)$-stationary point of a $J$-second order smooth objective in $O(\epsilon^{-3.5})$ iterations. This matches the performance of more refined SGD variants and is also known to be tight~\citep{fang2019sharp,cutkosky2020momentum,arjevani2020second}. In summary: \textbf{the online-to-non-convex conversion also recovers the optimal results for smooth stochastic losses.}


\section{Deterministic and Smooth Case}

We will now consider the case of a non-stochastic oracle (that is, $\O(\bx,\bz) = \nabla F(\bx)$ for all $\bz$, $\bx$) and $F$ is $H$-smooth (i.e. $\nabla F$ is $H$-Lipschitz). We will show that \emph{optimistic} online algorithms~\citep{rakhlin2013online, hazan2010extracting} achieve rates matching the optimal deterministic results. In particular, we consider online algorithms that ensure static regret: 
\begin{align}
    R_T(\bu)
    \le O\left(D \sqrt{\sum_{t=1}^T \|\bh_{t}-\bg_{t}\|^2}\right), \label{eqn:omd_bound}
\end{align}
for some ``hint'' vectors $\bh_t$. In Appendix~\ref{sec:online_algos}, we provide an explicit construction of such an algorithm for completeness. The standard setting for the hints is $\bh_t=\bg_{t-1}$. As explained in Section~\ref{sec:online}, to obtain a $K$-shifting regret it will be enough to reset the algorithm every $T$ iterations.


\begin{Theorem}
\label{thm:detsmooth}
Suppose we have a budget of $N$ gradient evaluations.
and that we have an online algorithm $\A_{static}$ that guarantees $\|\bDelta_n\|\le D$ for all $n$ and ensures the optimistic regret bound $R_T(\bu)\le CD\sqrt{\sum_{t=1}^T \|\bg_t-\bg_{t-1}\|^2}$ for some constant $C$, and we define $\bg_0=\boldsymbol{0}$. 
In Algorithm~\ref{alg:template_no_epochs}, set $\A$ to be $\A_\text{static}$ that is reset every $T$ rounds.
Let $\delta>0$ be an arbitrary number. Set $D=\delta/T$, $T=\min(\lceil\frac{(C\delta^2 \sqrt{H}N)^{2/5}}{(F(\bx_0)-F^\star)^{2/5}}\rceil, \frac{N}{2})$, and $K=\lfloor \frac{N}{T}\rfloor$.
Finally, suppose that $F$ is $H$-smooth and that the gradient oracle is deterministic (that is, $\bg_n=\nabla F(\bw_n)$). Then we have:


\begin{align*}
\iftoggle{icml}{ &}
\E\left[\frac{1}{K}\sum_{t=1}^K \|\nabla F(\overline{\bw}^k)\|_\delta \right]
\iftoggle{icml}{}{&}
\le \frac{2CG_1}{N}+\frac{2(F(\bx_0)-F^\star)}{\delta N}
\\&
\quad
+\max\left(6\frac{(CH)^{2/5}(F(\bx_0)-F^\star)^{3/5}}{\delta^{1/5}N^{3/5}}, \frac{17 C \delta \sqrt{H}}{N^{3/2}}\right)~.
\end{align*}
\end{Theorem}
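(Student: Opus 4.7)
The plan is to mirror the proof of Corollary~\ref{thm:nonsmoothogd}, with two modifications specific to the deterministic smooth setting: (i)~in Theorem~\ref{thm:otnc} the ``variance'' term $\sum_n \langle \bnabla_n - \bg_n, \bDelta_n\rangle$ vanishes identically since $\bg_n = \nabla F(\bw_n)$ exactly, and (ii)~the static regret is bounded via the optimistic inequality $R_T(\bu) \le CD\sqrt{\sum_t \|\bg_t - \bg_{t-1}\|^2}$ rather than $O(DG\sqrt{T})$, leveraging $H$-smoothness to control consecutive gradient differences. The diameter claim $\|\overline{\bw}^k - \bw^k_t\| \le \delta$ is identical to its counterpart in Corollary~\ref{thm:nonsmoothogd}, since the only properties used there are $\|\bDelta_n\| \le D$ and $D = \delta/T$.

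For the main inequality, I would substitute $\bu^k = -D\frac{\sum_t \nabla F(\bw^k_t)}{\|\sum_t \nabla F(\bw^k_t)\|}$ into the expectation form of Theorem~\ref{thm:otnc} exactly as in the proof of Theorem~\ref{thm:nonsmooth}, noting that the entire $\sigma$-dependent contribution disappears in the deterministic case. This leaves
\begin{align*}
F^\star \le F(\bx_0) + \E[R_T(\bu^1,\dots,\bu^K)] - DT \sum_k \E\!\left[\left\|\frac{1}{T}\sum_t \nabla F(\bw^k_t)\right\|\right].
\end{align*}

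The heart of the argument is bounding $\E[R_T(\bu^1,\dots,\bu^K)]$. Since $\A_\text{static}$ is reset every $T$ rounds, I would apply the per-epoch optimistic bound once per epoch, \emph{threading} the hint sequence across resets so that $\bh^k_1 = \bg^{k-1}_T$ for $k \ge 2$ and only $\bh^1_1 = \mathbf{0}$. Smoothness then gives $\|\bg^k_t - \bg^k_{t-1}\| \le H\|\bw^k_t - \bw^k_{t-1}\| \le 2HD$ for every pair $(k,t)\ne (1,1)$, because $\bw^k_t - \bw^k_{t-1} = (1-s^k_{t-1})\bDelta^k_{t-1} + s^k_t \bDelta^k_t$ has norm at most $2D$. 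For the exceptional pair, $\|\bg_1\| \le \|\nabla F(\bx_1)\| + HD = G_1 + HD$. Summing the per-epoch bound $CD\sqrt{\sum_t \|\bg^k_t - \bh^k_t\|^2}$ and applying Jensen's inequality $\E[\sqrt{\cdot}] \le \sqrt{\E[\cdot]}$ yields
\begin{align*}
\E[R_T(\bu^1,\dots,\bu^K)] \le 2CHD^2 K\sqrt{T} + CDG_1 + O(CHD^2\sqrt{T}).
\end{align*}
Dividing by $DM = DKT$, using $M \ge N/2$ (from the $\min$ defining $T$), and substituting $D = \delta/T$ produces an upper bound of the form
\begin{align*}
\frac{2T(F(\bx_0)-F^\star)}{\delta N} + \frac{2CG_1}{N} + O\!\left(\frac{CH\delta}{T^{3/2}}\right).
\end{align*}
Tuning $T$ to balance the first and third terms gives the prescribed $T$ up to constants, producing the leading argument of the $\max$; the saturation case $T=N/2$ produces the other argument. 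Finally, the ``which implies'' inequality follows from Definition~\ref{def:critmeasure}, since $\overline{\bw}^k$ is the uniform average of $\{\bw^k_1,\dots,\bw^k_T\}$ and each element lies within $\delta$ of $\overline{\bw}^k$, exactly as in Corollary~\ref{thm:nonsmoothogd}.

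The main obstacle is the bookkeeping across epoch resets. Naively treating each reset as also resetting the hint to $\mathbf{0}$ would cost $\|\bg^k_1\|^2$ per epoch, and since $\|\nabla F\|$ is not globally bounded these first-of-epoch gradients could grow with $k$ via the cumulative distance from $\bx_0$, ruining the bound. Threading the hint sequence across boundaries ensures every increment is a legitimate consecutive-gradient difference controlled by $2HD$; only the very first hint is exceptional, and this produces the one-off $2CG_1/N$ contribution via a triangle inequality anchored at $\bx_1$.
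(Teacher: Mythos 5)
Your proposal follows the same line as the paper's proof: thread the hints across epoch boundaries so that $\bh^k_1 = \bg^{k-1}_T$, use $H$-smoothness and $\|\bw^k_t-\bw^k_{t-1}\|\le 2D$ to bound consecutive gradient differences by $2HD$, pay a one-off $G_1$-term for the very first hint $\bh^1_1=\mathbf{0}$, then divide by $DM$, use $M\ge N/2$, substitute $D=\delta/T$, and balance with the tuning of $T$. Your identification of the threading as the crux is exactly right — if each reset also zeroed the hint, the per-epoch first gradients $\|\bg^k_1\|$ could grow linearly in $k$ and the final rate would degrade to roughly $N^{-1/5}$ instead of $N^{-3/5}$ — and this is precisely how the paper handles the $k>1$ term $\|\nabla F(\bw^k_1)-\nabla F(\bw^{k-1}_T)\|$. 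Two small remarks: the invocation of Jensen's inequality is unnecessary since $\|\bw^k_t-\bw^k_{t-1}\|\le 2D$ holds pointwise (no randomness needs to be integrated out before taking square roots); and your observation that $\|\bg_1\| = \|\nabla F(\bw_1)\| \le G_1 + HD$ rather than $\le G_1$ is slightly more careful than the paper's treatment, though it only perturbs constants and not the rate.
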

Note that the expectation here encompasses only the randomness in the choice of $s^k_t$, because the gradient oracle is assumed to be deterministic.
Theorem~\ref{thm:detsmooth} finds a $(\delta,\epsilon)$ stationary point in $O(\epsilon^{-5/3}\delta^{-1/3})$ iteratations. Thus, by setting $\delta = \epsilon/H$, Proposition~\ref{prop:nonsmoothtosmooth} shows we can find a $(0,\epsilon)$ stationary point  in $O(\epsilon^{-2})$ iterations, which matches the standard optimal rate~\citep{carmon2021lower}.

\begin{proof}
First, observe that for all $k,t$, $\|\overline{\bw}^k-\bw^k_t\|\le \delta$. This holds for precisely the same reason that it holds in Corollary~\ref{thm:nonsmoothogd}.


Next, observe that for $k=1$ we have
\begin{align*}
 R_T(\bu^k)
 &\le CD\sqrt{\sum_{t=1}^T \|\bg^k_t-\bg^k_{t-1}\|^2}\\
 &\le CD\sqrt{G_1^2+\sum_{t=2}^T\|\nabla F(\bw^k_t) - \nabla F(\bw^k_{t-1})\|^2}\\
 &\le CD\sqrt{G_1^2 + \sum_{t=2}^T H^2\|\bw^k_t-\bw^k_{t-1}\|^2}\\
 &\le CD\sqrt{G_1^2 + 4H^2TD^2}
 \le CDG_1 + 2CD^2H\sqrt{T}~.
\end{align*}
Similarly, for $k>1$, we observe that
\begin{align*}
\iftoggle{icml}{&\sum_{t=1}^T}{\sum_{t=1}^T}
\|\bg^k_t-\bg^k_{t-1}\|^2
\iftoggle{icml}{\le \sum_{t=2}^T }{&\le\sum_{t=2}^T}
\|\nabla F(\bw^k_t) - \nabla F(\bw^k_{t-1})\|^2 
\iftoggle{icml}{ \\ &\qquad\qquad\qquad\qquad\quad +}{+} 
\|\nabla F(\bw^k_1)-\nabla F(\bw^{k-1}_T)\|^2\\
&\le H^2(\|\bw^k_1-\bw^{k-1}_{T}\|^2 + \sum_{t=2}^T \|\bw^k_t-\bw^k_{t-1}\|^2) 
\\&
\le 4  TH^2 D^2~.
\end{align*}
Thus, we have
\begin{align*}
 R_T(\bu^k)&\le CD\sqrt{\sum_{t=1}^T \|\bg^k_t-\bg^k_{t-1}\|^2}\\
 &\le CD\sqrt{ 4H^2TD^2}
 \\&
 \le  2CD^2H\sqrt{T}~.
\end{align*}
Now, applying Theorem~\ref{thm:nonsmooth} in concert with the above bounds on $R_T(\bu^k)$, we have
\begin{align*}
\iftoggle{icml}{&\E\left[\frac{1}{K}\sum_{k=1}^K \left\|\frac{1}{T}\sum_{t=1}^T \nabla F(\bw^k_t)\right\|\right]\\}{%
\E\left[\frac{1}{K}\sum_{k=1}^K \left\|\frac{1}{T}\sum_{t=1}^T \nabla F(\bw^k_t)\right\|\right]}
&\le \frac{2(F(\bx_0)-F^\star)}{DN} + \frac{2CDG_1 + 4CKD^2\sqrt{HT}}{DN}\\
&=\frac{2T(F(\bx_0)-F^\star)}{\delta N} +\frac{2CG_1}{N} +\frac{4C\delta \sqrt{H}}{T^{3/2}}\\
&\le \max\left(6\frac{(CH)^{2/5}(F(\bx_0)-F^\star)^{3/5}}{\delta^{1/5}N^{3/5}}, \frac{17 C \delta \sqrt{H}}{N^{3/2}}\right)\\
&\quad+ \frac{2CG_1}{N}+\frac{2(F(\bx_0)-F^\star)}{\delta N}~. 
\end{align*}
Recalling that $\|\bw_t^k -\overline{\bw}^k\|\le \delta$, the conclusion follows.
\qedhere
\end{proof}

\subsection{Better Results with Second-Order Smoothness}

When $F$ is $J$-second-order smooth (i.e., $\nabla^2F$ is $J$-Lipschitz) we can do even better. First, observe that by Theorem~\ref{thm:detsmooth}, if $F$ is $J$-second-order-smooth, then by Proposition~\ref{prop:nonsmoothtosecondsmooth}, the $O(\epsilon^{-5/3}\delta^{-1/3})$ iteration complexity of Theorem~\ref{thm:detsmooth} implies an $O(\epsilon^{-11/6})$ iteration complexity for finding $(0,\epsilon)$ stationary points by setting $\delta = \sqrt{\epsilon/J}$. This already improves upon the  $O(\epsilon^{-2})$ result for smooth losses, but we can improve still further.
The key idea is to generate more informative hints $\bh_t$. If we can make $\bh_t\approx \bg_t$, then by (\ref{eqn:omd_bound}), we can achieve smaller regret and so a better guarantee.

To do so, we abandon randomization: instead of choosing $s_n$ randomly, we just set $s_n=1/2$. This setting still allows $F(\bx_n)\approx F(\bx_{n-1}) + \langle \bg_n, \bDelta_n\rangle$ with very little error when $F$ is second-order-smooth. By inspecting the optimistic mirror descent update formula, we can identify an $\bh_t$ with $\|\bh_t-\bg_t\|\le O(1/\sqrt{N})$ using $O(\log(N))$ gradient queries. This more advanced online learning algorithm is presented in Algorithm~\ref{alg:omd} (full analysis in Appendix~\ref{sec:proof_goodhints}).

Overall, Algorithm~\ref{alg:omd}'s update has an ``implicit'' flavor:
\begin{align*}
\bDelta_{n} &= \Pi_{\|\bDelta\|\le D} \left[\bDelta_{n-1} - \frac{\bg_n}{2H}\right],\\
\bg_n &= \nabla F(\bx_{n-1} + \bDelta_n/2)~.
\end{align*}

\begin{algorithm}[t]
\caption{Optimistic Mirror Descent with Careful Hints}
\label{alg:omd}
\begin{algorithmic}
\STATE{\bfseries Input: } Learning rate $\eta$, number $Q$ ($Q$ will be $O(\log N)$), function $F$, horizon $T$, radius $D$
\STATE Receive initial iterate $\bx_0$
\STATE Set $\bDelta'_1=\boldsymbol{0}$
\FOR{$t=1\dots T$}
    \STATE Set $\bh^0_t = \nabla F(\bx_{t-1})$
    \FOR{$i=1\dots Q$}
        \STATE Set $\bh^{i}_t = \nabla F\left(\bx_{t-1}+ \frac{1}{2}\Pi_{\|\bDelta\|\le D} \left[\bDelta'_t - \eta \bh^{i-1}_t\right]\right)$
    \ENDFOR
    \STATE Set $\bh_t = \bh^{Q}_t$
    \STATE Output $\bDelta_t = \Pi_{\|\bDelta\|\le D} [\bDelta'_t - \eta \bh_t]$
    \STATE Receive $t$th gradient $\bg_t$
    \STATE Set $\bDelta'_{t+1} = \Pi_{\|\bDelta\|\le D} [\bDelta'_t - \eta \bg_t]$
\ENDFOR
\end{algorithmic}
\end{algorithm}

With this refined online algorithm, we can show the following convergence guarantee, whose proof is in Appendix~\ref{sec:proof_detsecondsmooth}.
\begin{Theorem}\label{thm:detsecondsmooth}
In Algorithm~\ref{alg:template_no_epochs},  assume that $\bg_n = \nabla F(\bw_n)$, 
and set $s_n=\frac{1}{2}$. Use 
Algorithm~\ref{alg:omd} restarted every $T$ rounds as $\A$.
Let $\delta>0$ an arbitrary number. Set $T=\min(\lceil\frac{(\delta^2(H+J\delta)N)^{1/3}}{(F(\bx_0)-F^\star)^{1/3}}\rceil,N/2)$ and $K=\lfloor\frac{N}{T}\rfloor$. 
In Algorithm~\ref{alg:omd}, set $\eta=1/2H$, $D=\delta/T$, and $Q=\lceil \log_2(\sqrt{NG/HD})\rceil$. Finally, suppose that $F$ is $J$-second-order-smooth. Then, the following facts hold:
\begin{enumerate}
\item For all $k,t$, $\|\overline{\bw}^k-\bw^k_t\|\le \delta$.
\item We have the inequality
\begin{align*}
\iftoggle{icml}{&\frac{1}{K}}{\frac{1}{K}}
\sum_{k=1}^K \left\|\frac{1}{T}\sum_{t=1}^T \nabla F(\bw^k_t)\right\|
\iftoggle{icml}{}{&}
\leq \frac{4G}{N} +\frac{2(F(\bx_0)-F^\star)}{N\delta }
\\
&\quad+ 3\frac{(H+J \delta)^{1/3}(F(\bx_0)-F^\star)^{2/3}}{\delta^{1/3}N^{2/3}}  + 10\frac{\delta(H+J\delta)}{N^2}~.
\end{align*}

\item With $\delta = \frac{H^{1/7} (F(\bx_0)-F(\bx_N))^{2/7}}{J^{3/7}N^{2/7}}$, we have
\begin{align*}
\frac{1}{K}&\sum_{t=1}^K \|\nabla F(\overline{\bw}^k)\|
\le O\left(\tfrac{J^{1/7}H^{2/7} (F(\bx_0)-F^\star)^{4/7}}{N^{4/7}}\right).
\end{align*}
Moreover, the total number of gradient queries consumed is $NQ=O(N\log(N))$
\end{enumerate}
\end{Theorem}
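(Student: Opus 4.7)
The plan is to adapt the blueprint of Corollary~\ref{thm:nonsmoothogd} and Theorem~\ref{thm:detsmooth} with two crucial modifications: since $s_n = 1/2$ is deterministic, I cannot rely on the cancellation $\E[\bg_n] = \bnabla_n$ from Theorem~\ref{thm:otnc}, and instead I will use $J$-second-order smoothness to bound the discretization error $\|\bnabla_n - \bg_n\|$ pointwise; and the optimistic regret bound~\eqref{eqn:omd_bound} is now much sharper because the hints $\bh_t$ produced by Algorithm~\ref{alg:omd} are genuinely good approximations of $\bg_t$. Part~1 is identical to the corresponding step in Corollary~\ref{thm:nonsmoothogd}: Algorithm~\ref{alg:omd} enforces $\|\bDelta_n\| \le D = \delta/T$, and the same telescoping argument yields $\|\bw^k_t - \overline{\bw}^k\| \le TD = \delta$.

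For Part~2, I would first handle the discretization error. Writing $u(s) := \nabla F(\bx_{n-1} + s\bDelta_n)$ and performing a second-order Taylor expansion about $s = 1/2$, the linear term integrates to zero on $[0,1]$ and $J$-second-order smoothness yields
\begin{equation*}
\|\bnabla_n - \bg_n\| = \left\|\int_0^1 u(s)\,\mathrm{d}s - u(1/2)\right\| \le \tfrac{J}{24}\|\bDelta_n\|^2 \le \tfrac{J D^2}{24},
\end{equation*}
so $\sum_n \langle \bnabla_n - \bg_n, \bDelta_n\rangle$ is deterministically bounded by $M J D^3/24$.

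The core of the argument is the hint-quality analysis for Algorithm~\ref{alg:omd}. I would define the one-shot map
\begin{equation*}
G(\bh) := \nabla F\!\left(\bx_{t-1} + \tfrac{1}{2}\Pi_{\|\bDelta\|\le D}[\bDelta'_t - \eta \bh]\right),
\end{equation*}
observe that the inner loop is exactly the Picard iteration $\bh^i_t = G(\bh^{i-1}_t)$, and then make the key identification
\begin{equation*}
\bg_t = \nabla F(\bw_t) = \nabla F(\bx_{t-1} + \tfrac{1}{2}\bDelta_t) = G(\bh^Q_t),
\end{equation*}
so $\bg_t$ is \emph{literally} the $(Q{+}1)$-st Picard iterate. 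Because $\Pi$ is $1$-Lipschitz, $\nabla F$ is $H$-Lipschitz, and $\eta = 1/(2H)$, the map $G$ is a $\tfrac{1}{4}$-contraction, hence
\begin{equation*}
\|\bh_t - \bg_t\| = \|\bh^Q_t - G(\bh^Q_t)\| \le (1/4)^Q \|\bh^1_t - \bh^0_t\| \le (1/4)^Q \cdot HD/2,
\end{equation*}
and the choice $Q = \lceil \log_2 \sqrt{NG/HD}\, \rceil$ forces $\|\bh_t - \bg_t\| \le H^2 D^2/(2NG)$. Substituting this into~\eqref{eqn:omd_bound} yields $R_T(\bu^k) \le O(CD\sqrt{T}\cdot H^2 D^2/(NG))$ per epoch, plus a one-time $O(CDG)$ boundary term from the first epoch handled exactly as in Theorem~\ref{thm:detsmooth}.

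Plugging the discretization estimate and the regret bound into the deterministic identity of Theorem~\ref{thm:otnc}, dividing by $DM = \delta K$, and substituting $D = \delta/T$ yields Part~2 after routine algebra; the $(H + J\delta)$ factor arises because the regret contributes an $H$-piece through the hint quality while the discretization contributes a $J\delta$-piece, and the two combine additively. The implication for $\|\nabla F(\overline{\bw}^k)\|_\delta$ is then immediate from Definition~\ref{def:critmeasure}, exactly as in Corollary~\ref{thm:nonsmoothogd}. Part~3 is a one-parameter optimization: substituting the prescribed $\delta = H^{1/7}(F(\bx_0) - F^\star)^{2/7}/(J^{3/7} N^{2/7})$ into Part~2 and checking that each of the four terms simplifies to $O(J^{1/7} H^{2/7} (F(\bx_0) - F^\star)^{4/7}/N^{4/7})$. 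Finally, the gradient budget is $(Q{+}1) M = O(N \log N)$ because each of the $M \le N$ online rounds consumes $Q$ hint gradients plus the evaluation at $\bw_t$.

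The principal obstacle is the hint-quality step: one must recognize that the algorithmic design ties $\bg_t$ to $G(\bh^Q_t)$ \emph{exactly} rather than approximately, so that $\bh_t$ and $\bg_t$ are genuinely consecutive iterates of a contraction with factor $1/4$; without this identification, one would only obtain a polynomial decay in $Q$, destroying the logarithmic overhead required to preserve the $N^{-4/7}$ rate. A secondary subtlety is that the final balance in Part~3 involves $H + J\delta$ rather than $J\delta$ alone, so one must verify that at the prescribed $\delta$ the $J\delta$-term dominates, which is a short sanity check but needs to be confirmed.
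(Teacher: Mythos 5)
Your treatment of Part 1, the discretization error $\|\bnabla_n-\bg_n\|\le J\|\bDelta_n\|^2/24$ via a centered Taylor expansion, and the identification of the inner loop of Algorithm~\ref{alg:omd} as a Picard iteration with $\bg_t = G(\bh^Q_t)$ all match the paper's proof; indeed your contraction factor $1/4$ and your bound $\|\bh^1_t-\bh^0_t\|\le HD/2$ (via smoothness) are slightly tighter than the paper's $1/2$ and $2G$ (via Lipschitzness). That part is sound.

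The genuine gap is in the regret step. You invoke the optimistic bound~\eqref{eqn:omd_bound}, $R_T(\bu)\le O(D\sqrt{\sum_t\|\bh_t-\bg_t\|^2})$, and substitute $\|\bh_t-\bg_t\|\le H^2D^2/(2NG)$ to conclude $R_T(\bu^k)\le O(CD\sqrt{T}\cdot H^2D^2/(NG))$. But~\eqref{eqn:omd_bound} is only valid when the OMD step size is tuned adaptively to the realized hint errors. Algorithm~\ref{alg:omd} is constrained to run with the \emph{fixed} step size $\eta=1/(2H)$, because that is exactly what makes the Picard map a contraction. With a fixed $\eta$, the correct regret bound is the one from Proposition~\ref{prop:omd}, $R_T(\bu)\le \frac{D^2}{2\eta}+\sum_t\frac{\eta}{2}\|\bh_t-\bg_t\|^2$, and the first term $\frac{D^2}{2\eta}=HD^2$ does \emph{not} shrink as the hints improve — it is the dominant per-epoch contribution, as reflected in the statement of Theorem~\ref{thm:goodhints}, $R_T(\bu)\le \frac{HD^2}{2}+\frac{2GTD}{N}$. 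After dividing by $DM$ and substituting $D=\delta/T$, this $\frac{HD^2}{2}$ per epoch is precisely what produces the $\frac{H\delta}{2T^2}$ term that, combined with the discretization term $\frac{J\delta^2}{2T^2}$, yields the $(H+J\delta)$ factor in Part 2. Your claimed regret of $O(CD\sqrt{T}H^2D^2/(NG))$ is orders of magnitude too small (roughly $O(T^{-7/2}N^{-1})$ after dividing by $DM$), and with it you cannot recover the $(H+J\delta)^{1/3}$ factor in the theorem. Relatedly, your closing explanation that ``the regret contributes an $H$-piece through the hint quality'' is inverted: the hint-quality term $\sum_t\frac{\eta}{2}\|\bh_t-\bg_t\|^2$ is designed to be vanishingly small, while the $H$-piece comes entirely from $D^2/(2\eta)$, i.e. from being \emph{forced} to use $\eta=1/(2H)$. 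There is also no ``one-time boundary term from the first epoch'' analogous to Theorem~\ref{thm:detsmooth} here, since Algorithm~\ref{alg:omd} generates its own hints internally rather than setting $\bh_t=\bg_{t-1}$; the $O(GD)$ term in the paper's proof comes from summing the hint-quality contribution $\frac{2GTD}{N}$ over $K$ epochs.
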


This result finds a $(\delta,\epsilon)$ stationary point in $\tilde O(\epsilon^{-3/2}\delta^{-1/2})$ iterations. Via Proposition~\ref{prop:nonsmoothtosecondsmooth}, this translates to $\tilde O(\epsilon^{-7/4})$ iterations for finding a $(0,\epsilon)$ stationary point, matching the best known rate (up to a logarithmic factor)~\citep{carmon2017convex}. Note that this may not be optimal: the best lower bound is $\Omega(\epsilon^{-12/7})$ \citep{carmon2021lower}. Intriguingly, our technique seems distinct from previous work, which usually relies on acceleration and detecting or exploiting negative curvature~\citep{carmon2017convex, agarwal2016finding, carmon2018accelerated, li2022restarted}. 


\section{Lower Bounds}\label{sec:mainlowerbounds}

In this section, we show that our $O(\epsilon^{-3}\delta^{-1})$ complexity achieved in Corollary~\ref{thm:nonsmoothogd} is tight.
We do this by a simple extension of the lower bound for stochastic \emph{smooth} non-convex optimization of \citet{arjevani2019lower}. We provide an informal statement and proof-sketch below. The formal result (Theorem~\ref{thm:formallower}) and proof is provided in Appendix~\ref{sec:app_lowerbounds}.

\begin{Theorem}[informal]\label{thm:lower_from_smooth}
There is a universal constant $C$ such that for any $\delta$, $\epsilon$, $\gap$ and $G\ge C\frac{\sqrt{\epsilon\gap}}{\sqrt{\delta}}$, for any first-order algorithm $\A$, there is a $G$-Lipschitz, $C^\infty$ function $F:\R^d\to \R$ for some $d$ with $F(0)-\inf_{\bx} F(\bx) \le \gap$ and a stochastic first-order gradient oracle  for $F$ whose outputs $\bg$ satisfy $\E[\|\bg\|^2]\le G^2$ such that such that $\A$ requires $\Omega(G^2\gap/\delta\epsilon^3)$ stochastic oracle queries to identify a point $\bx$ with $\E[\|\nabla F(\bx)\|_\delta]\le \epsilon$.
\end{Theorem}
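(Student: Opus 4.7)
The plan is to reduce this bound to the classical $\Omega(\gap L \sigma^2/\epsilon^4)$ stochastic smooth non-convex lower bound of \citet{arjevani2019lower}. The connecting tool is Proposition~\ref{prop:nonsmoothtosmooth}: if $F$ is $L$-smooth and $\bx$ satisfies $\|\nabla F(\bx)\|_\delta \le \epsilon$, then $\bx$ is also a classical $(\epsilon + L\delta)$-stationary point. So if we engineer a hard instance $F$ whose smoothness satisfies $L\delta \le \epsilon$, then producing a $(\delta,\epsilon)$-stationary point is at least as hard as producing a classical $O(\epsilon)$-stationary point of $F$, and the Arjevani et al.\ lower bound applies.

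Concretely, I would import the canonical $C^\infty$ hard instance of \citet{arjevani2019lower} at unit scale (unit smoothness, unit initial gap, unit oracle variance) and rescale it in both input and output to obtain a function $F:\R^d\to\R$ with smoothness $L = \epsilon/\delta$, initial suboptimality $F(0)-\inf F \le \gap$, and a stochastic gradient oracle with variance $\sigma^2 = \Theta(G^2)$. Their lower bound of $\Omega(\gap L \sigma^2/\epsilon^4)$ queries for finding a classical $\epsilon$-stationary point then becomes, after substituting $L=\epsilon/\delta$ and $\sigma^2 = \Theta(G^2)$, a lower bound of
\[
\Omega\!\left(\frac{\gap \cdot (\epsilon/\delta) \cdot G^2}{\epsilon^4}\right) \;=\; \Omega\!\left(\frac{G^2 \gap}{\delta \epsilon^3}\right)
\]
queries for producing a classical $O(\epsilon)$-stationary point of $F$, and hence via Proposition~\ref{prop:nonsmoothtosmooth} for producing a $(\delta,\epsilon)$-stationary point of $F$.

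Next I would verify that the rescaled hard instance meets the stated $G$-Lipschitz and $\E[\|\bg\|^2]\le G^2$ requirements. Any $L$-smooth function with initial gap $\gap$ has gradient norm bounded by $\sqrt{2L\gap}$, so after rescaling $\|\nabla F\| = O(\sqrt{L\gap}) = O(\sqrt{\epsilon \gap/\delta})$; consequently $F$ is $O(\sqrt{\epsilon \gap/\delta})$-Lipschitz and the unbiased oracle satisfies $\E[\|\bg\|^2] = \|\nabla F\|^2 + \sigma^2 = O(\epsilon \gap/\delta) + \Theta(G^2)$. Under the hypothesis $G \ge C\sqrt{\epsilon \gap/\delta}$ for a sufficiently large universal constant $C$, both quantities are $\le G^2$, so the instance simultaneously satisfies the $G$-Lipschitz condition and the second-moment bound while preserving $\sigma^2 = \Theta(G^2)$, which is the factor that actually drives the lower bound.

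The main obstacle is the bookkeeping required to keep the rescalings consistent: the four target parameters $(\delta,\epsilon,\gap,G)$ must be realized simultaneously by one choice of input scale and output scale applied to the base Arjevani construction, while also achieving $L\delta \le \epsilon$ (so that Proposition~\ref{prop:nonsmoothtosmooth} collapses $(\delta,\epsilon)$-stationarity to classical $O(\epsilon)$-stationarity) and $\sigma^2 = \Theta(G^2)$ (so that the $\sigma^2$-factor in the classical bound converts to the claimed $G^2$). The hypothesis $G \ge C\sqrt{\epsilon \gap/\delta}$ is exactly the regime in which these constraints are simultaneously feasible; in the complementary regime $G \ll \sqrt{\epsilon \gap/\delta}$ no $G$-Lipschitz function can drop by $\gap$ over a bounded region in the first place, so the statement cannot be expected to extend there. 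Once the parameter map is written down explicitly, the claimed $\Omega(G^2 \gap/(\delta \epsilon^3))$ bound follows by direct substitution into the smooth stochastic lower bound.
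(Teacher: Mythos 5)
Your proposal follows essentially the same strategy as the paper's proof sketch: both instantiate the smooth stochastic lower bound of \citet{arjevani2019lower} with smoothness $H=\epsilon/\delta$ and noise $\sigma=\Theta(G)$, invoke Proposition~\ref{prop:nonsmoothtosmooth} to convert $\|\nabla F(\bx)\|_\delta\le\epsilon$ into classical $\|\nabla F(\bx)\|\le 2\epsilon$, and use the hypothesis $G\ge C\sqrt{\epsilon\gap/\delta}$ to verify the $G$-Lipschitz and $\E[\|\bg\|^2]\le G^2$ constraints. One small imprecision: the claim that any $L$-smooth function with initial gap $\gap$ has gradient norm bounded by $\sqrt{2L\gap}$ everywhere does not follow from smoothness and initial gap alone (it requires $F(\bx)-F^\star\le\gap$ uniformly); the correct justification is that the Arjevani construction itself is engineered (via the shrinking map $\rho_{R,d}$) to be $O(\sqrt{H\gap})$-Lipschitz, which is the property the paper's proof cites directly.
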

\begin{proof}[Proof sketch]
The construction of \citet{arjevani2019lower} provides, for any $\sigma$ a function $F$ and stochastic oracle whose outputs have variance at most $\sigma^2$ such that $F$ is $H$-smooth, $O(\sqrt{H\gap})$-Lipschitz and $\A$ requires $\Omega(\sigma^2 H\gap/\epsilon^4)$ oracle queries to find a point $\bx$ with $\|\nabla F(\bx)\|\le 2\epsilon$. By setting $H=\frac{\epsilon}{\delta}$ and $\sigma=G/\sqrt{2}$, this becomes an $\sqrt{\epsilon\gap/\delta}$-Lipschitz function, and so is at most $G/\sqrt{2}$-Lipschitz. Thus, the second moment of the gradient oracle is at most $G^2/2 + G^2/2=G^2$. Further, the algorithm requires $\Omega(G^2\gap/\delta\epsilon^3)$ queries to find a point $\bx$ with $\|\nabla F(\bx)\|\le 2\epsilon$. Now, if $\|\nabla F(\bx)\|_\delta \le \epsilon$, then since $F$ is $H=\frac{\epsilon}{\delta}$-smooth, by Proposition~\ref{prop:nonsmoothtosmooth}, $\|\nabla F(\bx)\| \le \epsilon + \delta H = 2\epsilon$. Thus, we see that we need $\Omega(G^2\gap/\delta\epsilon^3)$ queries to find a point with $\|\nabla F(\bx)\|_\delta\le \epsilon$ as desired.
\end{proof}

\section{Conclusion}\label{sec:conclusion}

We have presented a new online-to-non-convex conversion technique that applies online learning algorithms to non-convex and non-smooth stochastic optimization. When used with online gradient descent, this achieves the optimal $\epsilon^{-3}\delta^{-1}$ complexity for finding $(\delta,\epsilon)$ stationary points.

These results suggest new directions for work in online learning. Much past work is motivated by the online-to-batch conversion relating \emph{static} regret to \emph{convex} optimization. We employ \emph{switching} regret for \emph{non-convex} optimization. More refined analysis may be possible via generalizations such as strongly adaptive or dynamic regret \citep{ daniely2015strongly, jun2017improved, zhang2018adaptive, jacobsen2022parameter, cutkosky2020parameter, lu2022adaptive, luo2022corralling, zhang2021dual, baby2022optimal, zhang2022adversarial}. Moreover, our analysis assumes perfect tuning of constants (e.g., $D, T, K$) for simplicity. In practice, we would prefer to adapt to unknown parameters, motivating new applications and problems for \emph{adaptive} online learning, which is already an area of active current investigation \citep[see, e.g.,][]{orabonaP15, hoeven2018many,cutkosky2018black,cutkosky2019combining,mhammedi2020lipschitz, chen2021impossible, sachs2022between, zhang2022parameter, wang2022adaptive}. We hope that this expertise can be applied in the non-convex setting as well. 

Finally, our results leave an important question unanswered: the current best-known algorithm for \emph{deterministic} non-smooth optimization still requires $O(\epsilon^{-3}\delta^{-1})$  iterations to  find a $(\delta, \epsilon)$-stationary point~\citep{zhang2020complexity}. We achieve this same result even in the stochastic case. Thus it is natural to wonder if the deterministic rate is tight.
For example, is the $O(\epsilon^{-3/2}\delta^{-1/2})$ complexity we achieve in the smooth setting also achievable in the non-smooth setting? Intriguingly, prior work \cite{kornowski2022complexity,jordan2022complexity} shows that randomization is necessary, even if the gradient oracle  itself is deterministic.

\section*{Acknowledgements}
The authors would like to thank Zijian Liu for identifying an error in the original proof of Theorem~\ref{thm:lower_from_smooth}.

Ashok Cutkosky is supported by the National Science Foundation grant CCF-2211718 as well as a Google gift.
Francesco Orabona is supported by the National Science Foundation under the grants no. 2022446
``Foundations of Data Science Institute'' and no. 2046096 ``CAREER: Parameter-free Optimization Algorithms
for Machine Learning''.

\bibliography{references, more_refs}
\bibliographystyle{icml2023}
\onecolumn
\appendix

\section{Proof of Proposition~\ref{prop:wellbehaved}}
\label{sec:proof_wellbehaved}

First, we state a technical lemma that will be used to prove Proposition~\ref{prop:wellbehaved}.
\begin{Lemma}\label{lem:annoyingtechnicalities}
Let $F:\R^d \to \R$ be locally Lipschitz. Then, $F$ is differentiable almost everywhere, is Lipschitz on all compact sets, and for all $\bv\in \R^d$, $\bx\mapsto \langle  \nabla F(\bx), \bv\rangle$ is integrable on all compact sets. Finally, for any compact measurable set $D\subset \R^d$, the vector $\bw=\int_D \! \nabla F(\bx)\, \mathrm{d}\bx$ is well-defined and the operator $\rho(\bv)= \int_{D}\! \langle \nabla F(\bx),\bv\rangle\, \mathrm{d}\bx$ is linear and equal to $ \langle \bw,\bv\rangle$.
\end{Lemma}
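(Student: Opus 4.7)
The plan is to establish the four claims in order, with each relying on standard real-analysis tools once one invokes Rademacher's theorem.

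First, almost-everywhere differentiability is just Rademacher's theorem: locally Lipschitz functions on $\R^d$ are differentiable Lebesgue-almost-everywhere. Next, for Lipschitz-ness on compact sets, I would argue by a covering/compactness argument. Given a compact set $K$, for each $\bx\in K$ choose a ball $B(\bx, r_\bx)$ on which $F$ is $L_\bx$-Lipschitz. Extract a finite subcover $\{B(\bx_i, r_i/2)\}_{i=1}^n$ of $K$ and let $r=\min_i r_i/2$ and $L=\max_i L_{\bx_i}$. For $\bx,\by\in K$ with $\|\bx-\by\|<r$, both points lie in a common ball $B(\bx_i,r_i)$, so $|F(\bx)-F(\by)|\le L\|\bx-\by\|$. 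For $\|\bx-\by\|\ge r$, since $F$ is continuous on a compact set it is bounded by some $M$, giving $|F(\bx)-F(\by)|\le 2M\le (2M/r)\|\bx-\by\|$. Thus $F$ is Lipschitz on $K$ with constant $\max(L,2M/r)$.

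Third, since $F$ is Lipschitz on any compact set with some constant $L_K$ and is differentiable a.e., its gradient satisfies $\|\nabla F(\bx)\|\le L_K$ at a.e.\ $\bx\in K$ (a standard consequence of the definition of the derivative together with the Lipschitz bound). Measurability of $\nabla F$ follows because each partial derivative $\partial_i F$ is the a.e.\ pointwise limit of the continuous, hence measurable, difference quotients $h\mapsto (F(\bx+he_i)-F(\bx))/h$. Thus $\bx\mapsto\langle\nabla F(\bx),\bv\rangle$ is bounded and measurable on any compact set, and so integrable there.

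Finally, for the fourth claim, write $\bw=\int_D\nabla F(\bx)\,d\bx$ coordinatewise; each coordinate $w_i=\int_D \partial_i F(\bx)\,d\bx$ is a finite integral by the previous step, so $\bw\in\R^d$ is well-defined. Linearity of $\rho$ is immediate from bilinearity of $\langle\cdot,\cdot\rangle$ and linearity of the integral. To identify $\rho(\bv)$ with $\langle\bw,\bv\rangle$, expand $\bv=\sum_i v_i\be_i$ and compute
\[
\rho(\bv)=\int_D\sum_i v_i\,\partial_i F(\bx)\,d\bx=\sum_i v_i\int_D\partial_i F(\bx)\,d\bx=\sum_i v_iw_i=\langle\bw,\bv\rangle,
\]
where exchanging sum and integral is justified because the sum is finite.

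I do not expect any real obstacle here; the only mildly delicate step is the compact-set Lipschitz argument, which must accommodate compact sets that are not connected (handled above by splitting into the ``close'' and ``far'' cases using boundedness of $F$ on $K$). Everything else is a direct application of Rademacher's theorem and elementary properties of the Lebesgue integral.
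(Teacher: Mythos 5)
Your proof is correct and follows essentially the same line as the paper's: Rademacher for a.e.\ differentiability, a compactness/covering argument for Lipschitzness on compacts, and then integrability and the representation of $\rho$ via coordinatewise computations with the Lebesgue integral. Two small points of comparison. First, on Lipschitzness on compacts, your close/far case split (using a Lebesgue-number-style radius $r$ together with boundedness of the continuous $F$ on the compact set) is actually \emph{more} careful than the paper's, which asserts that a finite cover by sets on which $F$ is Lipschitz yields Lipschitzness with the maximum constant on the union, without addressing pairs of points lying in no common cover element; your argument closes that gap. Second, for integrability, you argue via measurability of the partials as a.e.\ limits of difference quotients plus an a.e.\ bound $\|\nabla F\|\le L$, whereas the paper cites the bounded convergence theorem; these are the same idea in slightly different clothing. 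One minor technicality worth noting in your bound $\|\nabla F(\bx)\|\le L_K$ a.e.\ on $K$: at points near the boundary of $K$ the difference quotients sample points outside $K$, so you should take $L$ to be the Lipschitz constant on a slightly enlarged compact set $K'\supset K$; this is immediate from local Lipschitzness but worth stating. For the final claim, the paper invokes the finite-dimensional Riesz-representation viewpoint to produce $\bw$ and then verifies its coordinates, whereas you simply define $\bw$ coordinatewise and expand $\bv$ in the standard basis; your route is a bit more elementary and equally valid.
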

\begin{proof}
First, observe that since $F$ is locally Lipschitz, for every point $\bx\in \R^d$ with rational coordinates there is a neighborhood $U_{\bx}$ of $\bx$ on which $F$ is Lipschitz. Thus, by Rademacher's theorem, $F$ is differentiable almost everywhere in $U_{\bx}$. Since the set of points with rational coordiantes is dense in $\R^d$, $\R^d$ is equal to the countable union $\bigcup U_{\bx}$. Thus, since the set of points of non-differentiability of $F$ in $U_{\bx}$ is measure zero, the total set of points of non-differentability is a countable union of sets of measure zero and so must be measure zero. Thus $F$ is differentiable almost everywhere. This implies that $F$ is differentiable at $\bx + p\bu$ with probability 1.

Next, observe that for any compact set $S\subset \R^d$, for every point $\bx\in S$ with rational coordinates, $F$ is Lipschitz on some neighborhood $U_{\bx}$ containing $\bx$ with Lipschitz constant $G_{\bx}$. Since $S$ is compact, there is a finite set $\bx_1,\dots,\bx_K$ such that $S=\bigcup U_{\bx_i}$. Therefore, $F$ is $\max_i G_{\bx_i}$-Lipschitz on $S$ and so $F$ is Lipschitz on every compact set.


Now, for almost all $\bx$, for all $\bv$ we  have that the limit $\lim_{\delta\to 0} \frac{F(\bx+\delta \bv) - F(\bx)}{\delta}$ exists and is equal to $\langle \nabla F(\bx), \bv\rangle$ by definition of differentiability. Further, on any compact set $S$ we have that $\frac{|F(\bx+\delta \bv) - F(\bx)|}{\delta}\le L$ for some $L$ for all $\bx\in S$. Therefore, by the bounded convergence theorem~(see e.g., \citet[Theorem 1.4]{stein2009real}), we have that $\langle \nabla F(\bx), \bv\rangle$ is integrable on $S$.

Next, we prove the linearity of the operator $\rho$. Observe that for any vectors $\bv$ and $\bw$, and scalar $c$, by linearity of integration, we have
\begin{align*}
 \int_{D} \! \langle \nabla F(\bx),c \bv + \bw\rangle\, \mathrm{d} \bx 
 &= \int_{D}\! \langle \nabla F(\bx),c \bv\rangle + \langle \nabla F(\bx),\bw\rangle\, \mathrm{d}\bx\\
&=  \int_{D}\! c\langle \nabla F(\bx),\bv\rangle + \langle \nabla F(\bx),\bw\rangle\, \mathrm{d}\bx\\
&=  c\int_{D}\! \langle \nabla F(\bx),\bv\rangle \, \mathrm{d}\bx+ \int_D \!\langle \nabla F(\bx),\bw\rangle\, \mathrm{d}\bx~.
\end{align*}

For the remaining statement, given that $\R^d$ is finite dimensional, there must exist $\bw\in \R^d$ such that $\rho(\bv) = \langle \bw,\bv\rangle$ for all $\bv$. Further, $\bw$ is uniquely determined by $\langle \bw, \be_i\rangle$ for $i=1,\dots,d$  where $\be_i$ indicates the $i$th standard basis vector. Then, since $\nabla F(\bx)_i = \langle \nabla F(\bx), \be_i\rangle$ is integrable on compact sets, we have
\begin{align*}
\langle \bw, \be_i\rangle 
= \int_{D} \! \langle \nabla F(\bx),\be_i\rangle\, \mathrm{d}\bx 
= \int_{D}\! \nabla  F(\bx)_i\, \mathrm{d}\bx, 
\end{align*}
which  is  the definition of $\int_D\! \nabla F(\bx)\, \mathrm{d}\bx$ when the integral is defined.
\end{proof}

We can now prove the Proposition.
\begin{proof}[Proof of Proposition~\ref{prop:wellbehaved}]
Since $F$ is locally Lipschitz, by Proposition~\ref{lem:annoyingtechnicalities}, $F$ is Lipschitz on compact sets. Therefore, $F$ must be Lipschitz on the line segment connecting $\bx$ and $\by$. Thus the function $k(t)=F(\bx+t(\by-\bx))$ is absolutely continuous on $[0,1]$. As a result, $k'$ is integrable  on $[0,1]$ and $F(\by)-F(\bx) = k(1)-k(0) = \int_0^1 \! k'(t)\, \mathrm{d}t= \int_0^1 \! \langle \nabla F(\bx+t(\by-\bx)), \by-\bx\rangle\, \mathrm{d}t$ by the Fundamental Theorem of Calculus (see, e.g., \citet[Theorem 3.11]{stein2009real}).

Now, we tackle the case that $F$ is not differentiable everywhere. Notice that the last statement of the Proposition  is an immediate  consequence of Lipschitzness. So, we focus on showing the remaining parts.

Now, by Lemma~\ref{lem:annoyingtechnicalities}, we have that $F$ is differentiable almost everywhere. Further, $\bg_{\bx} = \E_{\bu}[\nabla F(\bx+ p \bu)]$ exists and satisfies for all $\bv\in \R^d$:
\begin{align*}
\langle \bg_{\bx}, \bv\rangle 
= \E_{\bu}[\langle \nabla F(\bx+p \bu), \bv\rangle]~.
\end{align*}
Notice also that
\begin{align*}
\E_{\bz,\bu}[\hat{\O}(\bx,(\bz,\bu))]
=\E_{\bu}\E_{\bz}[\O(\bx+p \bu, \bz)]
= \E_{\bu}[\nabla F(\bx+p \bu)]
= \bg_{\bx}~.
\end{align*}
So, it remains to show that $\hat F$ is differentiable and $\nabla \hat F(\bx) = \bg_{\bx}$.

Now,  let $\bx$ be  an arbitrary elements of $\R^d$ and let $\bv_1,\bv_2,\dots$ be any sequence of vectors such that  $\lim_{n\to \infty} \bv_n=0$ and $\|\bv_i\|\le p$ for all $i$. Then, since the ball of radius $2p$ centered at $\bx$ is compact, $F$ is $L$-Lipschitz inside this ball for some $L$. Then, we have 
\begin{align*}
&\lim_{n\to  \infty } \frac{\hat F(\bx+ \bv_n) - \hat F(\bx) - \langle \bg_{\bx}, \bv_n\rangle}{\|\bv_n\|} 
 = \lim_{n\to \infty} \E_{u}\left[\frac{F(\bx+\bv_n+ p \bu) - F(\bx+p \bu) - \langle \nabla F(\bx+p \bu), \bv_n\rangle }{\|\bv_n\|}\right]~.
\end{align*}
Now, observe $\frac{|F(\bx+\bv_i+ p \bu) - F(\bx+p \bu)|}{\|\bv_i\|}\le L$. Further, for all almost all $\bu$, $F$ is differentiable at $\bx+p \bu$ so that $\lim_{n\to \infty}  \frac{F(\bx+\bv_n+ p \bu) - F(\bx+p \bu) - \langle \nabla F(\bx+p \bu), \bv_n\rangle] }{\|\bv_n\|} = 0$ for almost all $\bu$. Thus, by the bounded convergence theorem, we have 
\begin{align*}
\lim_{n\to \infty } \frac{\hat F(\bx+ \bv_n) - \hat F(\bx) - \langle \bg_{\bx}, \bv_n\rangle}{\|\bv_n\|}& = 0~.
\end{align*}
which shows that $\bg_{\bx}=\nabla  \hat F(\bx)$.

Finally, observe that since $F$ is Lipschitz on compact sets, $\hat F$ must be also, and so by the first part of the proposition, $\hat F$ is well-behaved.
\end{proof}

\section{Analysis of (Optimistic) Online Gradient Descent}
\label{sec:online_algos}

Optimistic Online Gradient Descent (in its simplest form) is described by Algorithm~\ref{alg:gen_opt_md}.
Here we collect the standard analysis of the algorithm for completeness. None of this analysis is new, and more refined versions can be found in a variety of sources (e.g. \cite{chen2021impossible}).

\begin{algorithm}[h!]
   \caption{Optimistic Mirror Descent}
   \label{alg:gen_opt_md}
  \begin{algorithmic}
      \STATE{\bfseries Input: } Regularizer function $\phi$, domain $V$, time horizon $T$.
      \STATE $\hat \bw_1 =\boldsymbol{0}$
      \FOR{$t=1\dots T$}
      \STATE Generate ``hint'' $h_t$
      \STATE Set $\bw_t = \argmin_{\bx\in V} \ \langle \bh_t, \bx\rangle + \frac{1}{2}\|\bx- \hat \bw_t\|^2$
      \STATE Output $\bw_t$ and receive loss vector $\bg_t$
      \STATE Set $\hat \bw_{t+1} = \argmin_{\bx\in V} \ \langle \bg_t, \bx\rangle + \frac12 \|\bx- \hat \bw_t\|^2$
      \ENDFOR
   \end{algorithmic}
\end{algorithm}

We will analyze only a simple version of this algorithm, that is when $V$ is an $L_2$ ball of radius $D$ in some real Hilbert space (such as $\R^d$).
Then, Algorithm~\ref{alg:gen_opt_md} satisfies the following guarantee.
\begin{Proposition}
\label{prop:omd}
Let $V =\{\bx\ :\ \|\bx\|\le D\}\subset \mathcal{H}$ for some real Hilbert space $\mathcal{H}$.
Then, with for all $\bu\in V$, Algorithm~\ref{alg:gen_opt_md} ensures
\begin{align*}
\sum_{t=1}^T \langle \bg_t, \bw_t - \bu\rangle \le \frac{D^2}{2\eta} + \sum_{t=1}^T \frac{\eta}{2} \|\bg_t -\bh_t\|^2~.
\end{align*}
\end{Proposition}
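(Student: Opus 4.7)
The plan is the standard Bregman-divergence / three-point argument for optimistic mirror descent, specialized to the Euclidean regularizer $\tfrac{1}{2}\|\cdot\|^2$ on a Hilbert space. To make the $\eta$ in the bound meaningful, I read the algorithm as implicitly scaling the regularizer to $\tfrac{1}{2\eta}\|\bx-\hat\bw_t\|^2$ (equivalently, passing $\eta\bh_t$ and $\eta\bg_t$ into the argmin). Then both $\bw_t$ and $\hat\bw_{t+1}$ are constrained minimizers of $1$-strongly convex quadratics on the convex set $V$, so each one obeys a first-order variational inequality against an arbitrary test point in $V$.

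The first concrete step is to write these two inequalities. For $\bw_t$ tested against $\hat\bw_{t+1}\in V$, and for $\hat\bw_{t+1}$ tested against $\bu\in V$, expanding the linear part and rearranging gives two inequalities of the form $\eta\langle \bh_t,\bw_t-\hat\bw_{t+1}\rangle \le \langle\hat\bw_t-\bw_t,\bw_t-\hat\bw_{t+1}\rangle$ and $\eta\langle \bg_t,\hat\bw_{t+1}-\bu\rangle \le \langle\hat\bw_t-\hat\bw_{t+1},\hat\bw_{t+1}-\bu\rangle$. Next I would apply the three-point identity $2\langle \bA-\bB,\bB-\bC\rangle = \|\bA-\bC\|^2 - \|\bA-\bB\|^2 - \|\bB-\bC\|^2$ to each right-hand side, converting the inner products into signed sums of squared distances.

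Now I would use the key decomposition $\langle \bg_t,\bw_t-\bu\rangle = \langle \bg_t-\bh_t,\bw_t-\hat\bw_{t+1}\rangle + \langle \bh_t,\bw_t-\hat\bw_{t+1}\rangle + \langle \bg_t,\hat\bw_{t+1}-\bu\rangle$ and plug in the two bounds above. The $\pm\tfrac12\|\hat\bw_t-\hat\bw_{t+1}\|^2$ contributions from the two three-point expansions cancel, leaving a clean Lyapunov-style expression consisting of the telescoping pair $\tfrac12\|\hat\bw_t-\bu\|^2 - \tfrac12\|\hat\bw_{t+1}-\bu\|^2$, a negative term $-\tfrac12\|\bw_t-\hat\bw_{t+1}\|^2$, and the still-to-be-handled prediction-error term $\eta\langle \bg_t-\bh_t,\bw_t-\hat\bw_{t+1}\rangle$. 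The latter is controlled by Fenchel--Young, $\eta\langle \bg_t-\bh_t,\bw_t-\hat\bw_{t+1}\rangle \le \tfrac{\eta^2}{2}\|\bg_t-\bh_t\|^2 + \tfrac12\|\bw_t-\hat\bw_{t+1}\|^2$, and the quadratic piece is exactly annihilated by the negative term already present.

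All that remains is to divide by $\eta$, sum over $t=1,\dots,T$, telescope the distance terms, and use $\hat\bw_1=\mathbf{0}$ together with $\|\bu\|\le D$ to bound $\tfrac{1}{2\eta}\|\hat\bw_1-\bu\|^2 \le \tfrac{D^2}{2\eta}$, giving the claimed inequality. The main (and essentially only) obstacle is bookkeeping: keeping the two three-point identities oriented so the cross term $\|\hat\bw_t-\hat\bw_{t+1}\|^2$ cancels with the right sign, and ensuring the Fenchel--Young split is chosen so that the residual $\tfrac12\|\bw_t-\hat\bw_{t+1}\|^2$ cancels exactly, with no stray positive multiple of $\|\bw_t-\hat\bw_{t+1}\|^2$ left over. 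Once these signs are aligned, the proof is essentially mechanical.
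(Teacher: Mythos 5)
Your proof is correct and follows essentially the same route as the paper's. You correctly diagnose that Algorithm~\ref{alg:gen_opt_md} is written with the $\eta$-scaling suppressed (the regularizer should be read as $\tfrac{1}{2\eta}\|\bx-\hat\bw_t\|^2$, which is consistent with how the algorithm is later instantiated in the proof of Theorem~\ref{thm:goodhints}); the only stylistic difference is that where the paper invokes an external lemma \citep{chen2021impossible} for the per-round inequality, you re-derive it from scratch via the two variational inequalities and the three-point identity before applying Young's inequality and telescoping. The sign bookkeeping you flag as the main risk does work out as you describe: the $\pm\tfrac12\|\hat\bw_t-\hat\bw_{t+1}\|^2$ terms cancel, and the Young split exactly consumes the leftover $-\tfrac12\|\bw_t-\hat\bw_{t+1}\|^2$.
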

\begin{proof}
Now, by \citet[Lemma 15]{chen2021impossible} instantiated with the squared Euclidean distance as Bregman divergence, we have
\begin{align*}
\langle \bg_t, \bw_t - \bu\rangle 
&\le \langle \bg_t - \bh_t, \bw_t - \hat \bw_{t+1}\rangle +\frac12\|\bu- \hat w_t\|^2 - \frac12\|\bu- \hat \bw_{t+1}\|^2 - \frac12\|\hat \bw_{t+1}- \bw_t\|^2 -\frac12\|\bw_t-  \hat \bw_t\|^2
\intertext{From Young inequality:}
 &\le \frac{\eta\|\bg_t-\bh_t\|^2}{2} + \frac{\|\bw_t - \hat \bw_{t+1}\|^2}{2\eta} +\frac12\|\bu- \hat \bw_t\|^2 - \frac12\|\bu- \hat \bw_{t+1}\|^2 - \frac12\|\hat \bw_{t+1}- \bw_t\|^2 \\
 &\qquad\qquad-\frac{\|\bw_t -  \hat \bw_t\|^2}{2\eta}\\
 &\le  \frac{\eta\|\bg_t-\bh_t\|^2}{2}  +\frac12\|\bu- \hat \bw_t\|^2 - \frac12\|\bu- \hat \bw_{t+1}\|^2~.
 \end{align*}
 Summing over $t$ and telescoping, we have
 \begin{align*}
 \sum_{t=1}^T \langle \bg_t, \bw_t -\bu\rangle
 &\le \frac12\|\bu- \hat \bw_1\|^2 - \frac12\|\bu- \hat \bw_{T+1}\|^2 + \sum_{t=1}^T \frac{\eta\|\bg_t-\bh_t\|^2}{2}\\
 &\le \frac{\|\bu-\hat \bw_1\|^2}{2\eta} +  \sum_{t=1}^T \frac{\eta\|\bg_t-\bh_t\|^2}{2}
 \le \frac{D^2}{2\eta} + \sum_{t=1}^T \frac{\eta\|\bg_t-\bh_t\|^2}{2}~. \qedhere
\end{align*}
\end{proof}

In the case that the hints $\bh_t$ are not present, the algorithm becomes online gradient descent~\citep{zinkevich2003online}. In this case, assuming $\E[\|\bg_t\|^2]\leq G^2$ and setting $\eta=\frac{D}{G\sqrt{T}}$ we obtain the $\E[R_T(\bu)]\leq D G \sqrt{T}$ for all $\bu$ such that $\|\bu\|\leq D$.

\section{Algorithm~\ref{alg:omd} and Regret Guarantee}
\label{sec:proof_goodhints}


\begin{Theorem}
\label{thm:goodhints}
Let $F$ be an $H$-smooth and $G$-Lipschitz function. Then, when $Q=\lceil \log_2(\sqrt{NG/HD})\rceil$, Algorithm~\ref{alg:omd} with $\bx_t = \bx_{t-1} + \frac{1}{2} \bDelta_t$ and $\bg_t = \nabla F(\bx_t)$ and $\eta \le \frac{1}{2H}$ ensures for all $\|\bu\|\le D$
\begin{align*}
\sum_{t=1}^T \langle \bg_t, \bDelta_t - \bu \rangle &\le \frac{HD^2}{2} + \frac{2G T D}{N}~.
\end{align*}
Furthermore, a total of at most $T\lceil \log_2(\sqrt{NG/HD})\rceil$ gradient evaluations are required.
\end{Theorem}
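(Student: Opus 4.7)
The plan hinges on recognizing that the inner for-loop in Algorithm~\ref{alg:omd} is a fixed-point iteration for the map
\[
\Phi_t(\bh) \;\triangleq\; \nabla F\!\Bigl(\bx_{t-1} + \tfrac{1}{2}\Pi_{\|\bDelta\|\le D}\bigl[\bDelta'_t - \eta\,\bh\bigr]\Bigr),
\]
so that $\bh^i_t = \Phi_t(\bh^{i-1}_t)$. The crucial observation I would make first is that the gradient $\bg_t = \nabla F(\bx_t)$ that the outer algorithm queries is exactly one further application of $\Phi_t$: since $\bx_t = \bx_{t-1} + \tfrac{1}{2}\bDelta_t$ with $\bDelta_t = \Pi[\bDelta'_t - \eta\bh^Q_t]$, we have $\bg_t = \Phi_t(\bh^Q_t) = \bh^{Q+1}_t$. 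Therefore the hint error that controls the optimistic regret, $\|\bg_t - \bh_t\| = \|\bh^{Q+1}_t - \bh^Q_t\|$, is precisely the gap between two consecutive iterates of a fixed-point iteration.

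Next I would establish that $\Phi_t$ contracts. Using that $\nabla F$ is $H$-Lipschitz, that Euclidean projections are non-expansive, and that the argument of $\nabla F$ carries an extra $\tfrac12$ scaling, the map $\Phi_t$ is $(H\eta/2)$-Lipschitz, so under $\eta \le 1/(2H)$ the contraction factor is at most $\tfrac14$. Iterating gives
\[
\|\bg_t - \bh_t\| \;\le\; (H\eta/2)^Q\,\|\bh^1_t - \bh^0_t\| \;\le\; (H\eta/2)^Q \cdot 2G,
\]
where I bound $\|\bh^1_t - \bh^0_t\| \le 2G$ using the $G$-Lipschitzness of $F$ (so both $\bh^0_t$ and $\bh^1_t$ are gradients of norm at most $G$). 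Plugging in $Q = \lceil \log_2(\sqrt{NG/HD})\rceil$ makes $(H\eta/2)^Q \le HD/(NG)$ (adjusting the base slightly depending on the exact step size), yielding $\|\bg_t - \bh_t\| \le O(HD/N)$.

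With this hint error controlled, I would invoke Proposition~\ref{prop:omd} to obtain
\[
\sum_{t=1}^T \langle \bg_t, \bDelta_t - \bu\rangle \;\le\; \frac{D^2}{2\eta} + \sum_{t=1}^T \frac{\eta}{2}\|\bg_t - \bh_t\|^2.
\]
Choosing $\eta$ of order $1/H$ makes the first term $O(HD^2)$. For the second, I would bound $\|\bg_t - \bh_t\|^2 \le 2G\cdot\|\bg_t - \bh_t\|$ (using the trivial $\|\bg_t - \bh_t\|\le 2G$) before substituting the contraction estimate, which yields $\sum_{t=1}^T \tfrac{\eta}{2}\|\bg_t - \bh_t\|^2 = O(GTD/N)$. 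Summing the two contributions matches the advertised $\tfrac{HD^2}{2} + \tfrac{2GTD}{N}$ up to absolute constants.

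The query count is then transparent: each of the $T$ rounds evaluates $\nabla F$ at the $Q$ points required to form $\bh^1_t,\dots,\bh^Q_t$ (the initial $\bh^0_t = \nabla F(\bx_{t-1})$ is inherited from the previous round's $\bg_{t-1}$), giving a total of $TQ = T\lceil \log_2(\sqrt{NG/HD})\rceil$ gradient evaluations. The main technical obstacle I anticipate is nailing the constants: one has to decide carefully which of the two upper bounds $\|\bh^1_t - \bh^0_t\|\le 2G$ (from $G$-Lipschitzness) versus $\|\bh^1_t - \bh^0_t\| \le HD/2$ (from $H$-smoothness applied to an argument of norm $\le D/2$) to deploy, and mixing them via $\|\bg_t - \bh_t\|^2 \le 2G\,\|\bg_t - \bh_t\|$ is precisely what lets the second-order regret term shrink at the desired $1/N$ rate using only $O(\log N)$ hint-refinement steps.
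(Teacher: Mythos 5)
Your proof is correct and follows essentially the same route as the paper's: identify $\bg_t = \bh^{Q+1}_t$ as one more application of the fixed-point map, show the map is a contraction (factor $H\eta/2$) via $H$-smoothness of $F$ and non-expansiveness of projection, bound the initial gap $\|\bh^1_t-\bh^0_t\|\le 2G$ by Lipschitzness, and substitute into Proposition~\ref{prop:omd}. One cosmetic note: the ``mixing'' step $\|\bg_t-\bh_t\|^2\le 2G\|\bg_t-\bh_t\|$ is unnecessary --- your contraction factor $H\eta/2 \le 1/4$ already gives $\|\bg_t-\bh_t\| = O(HD/N)$, hence $\|\bg_t-\bh_t\|^2 = O(H^2D^2/N^2) \le O(GHD/N)$ directly (since $Q\ge 0$ forces $HD\le NG$), while the paper reaches the same place with its weaker factor $1/2$ simply by squaring $\|\bg_t-\bh_t\|=O(\sqrt{GHD/N})$.
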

\begin{proof}
The count of gradient evaluations is immediate from inspection of the algorithm, so it remains only to prove the regret bound.

First, we observe that the choices of $\bDelta_t$ specified by Algorithm~\ref{alg:omd} correspond to the values of $\bw_t$ produced by Algorithm~\ref{alg:gen_opt_md} when $\psi(\bw) = \frac{1}{2\eta}\|\bw\|^2$. This can be verified by direct calculation (recalling that $D_{\psi}(\bx,\by) = \frac{\|\bx-\by\|^2}{2\eta})$.

Therefore, by Proposition~\ref{prop:omd}, we have
\begin{align}
\sum_{t=1}^T \langle \bg_t, \bDelta_t - \bu\rangle \le \frac{D^2}{2\eta} + \sum_{t=1}^T \frac{\eta}{2} \|\bg_t -\bh_t\|^2~.\label{eqn:omd_regret_pf}
\end{align}
So, our primary task is to show that $\|\bg_t-\bh_t\|$ is small. To this end, recall that $\bg_t = \nabla F(\bw_t) = \nabla F(\bx_{t-1} + \bDelta_t/2)$.

Now, we define $h^{M+1}_t = \nabla F\left(\bx_{t-1} + \frac{1}{2} \Pi_{\|\bDelta\|\le D}\left[\bDelta_t' - \eta \bh^{M}_t\right]\right)$ (which simply continues the recursive definition of $\bh^i_t$ in Algorithm~\ref{alg:omd} for one more step).
Then, we claim that for all $0\le i \le M$, $\|\bh^{i+1}_t - \bh^i_t\|\le \frac{1}{2^i}\|\bh^1_t- \bh^0_t\|$. We establish the claim by induction on $i$. First, for $i=0$ the claim holds by definition. Now suppose $\|\bh^{i}_t - \bh^{i-1}_t\|\le \frac{1}{2^{i-1}}\|\bh^1_t- \bh^0_t\|$ for some $i$. Then, we have
\begin{align*}
     \|\bh^{i+1}_t - \bh^i_t\|
     &\le \left\|\nabla F\left(\bx_{t-1} + \frac{1}{2}\Pi_{\|\bDelta\|\le D} \left[\bDelta'_t - \eta \bh^i_t\right]\right) - \nabla F\left(\bx_{t-1}+\frac{1}{2}\Pi_{\|\bDelta\|\le D} \left[\bDelta'_t - \eta \bh^{i-1}_t\right]\right)\right\|
     \intertext{Using the $H$-smoothness of $F$:}
     &\le \frac{H}{2}\left\|\Pi_{\|\bDelta\|\le D}\left[ \bDelta'_t - \eta \bh^i_t\right] - \Pi_{\|\bDelta\|\le D}\left[ \bDelta'_t - \eta \bh^{i-1}_t\right]\right\|
     \intertext{Using the fact that projection is a contraction:}
     &\le \frac{H\eta}{2} \left\| \bh^i_t- \bh^{i-1}_t\right\|
     \intertext{Using $\eta \le \frac{1}{H}$:}
     &= \frac{1}{2} \left\| \bh^i_t- \bh^{i-1}_t\right\|
     \intertext{From the induction assumption:}
     &\le \frac{1}{2^i}\|\bh^1_t- \bh^0_t\|~.
\end{align*}
So that the claim holds.

Now, since $\bh_t=\bh^Q_t$, we have $\bDelta_t =  \Pi_{\|\bDelta\|\le D}\left[\bDelta_t' - \eta \bh^{i-1}_t\right]$. Therefore $g_t = \nabla F(\bx_t) = \nabla F(\bx_{t-1} + \bDelta_t/2) = \bh^{Q+1}_t$. Thus,
\begin{align*}
    \|\bg_t - \bh^Q_t\|&= \|\bh^{Q+1}_t - \bh^Q_t\|
    \le\frac{1}{2^Q} \|\bh^1_t - \bh^0_t\|
    \le \frac{2G}{2^Q},
\end{align*}
where in the last inequality we used the fact that $F$ is $G$-Lipschitz.
So, for $Q=\lceil \log_2(\sqrt{NG/HD})\rceil$, we have $\|\bg_t - \bh^Q_t\|\le \frac{2G\sqrt{HD}}{\sqrt{NG}}$ for all $t$. The result now follows by substituting into equation (\ref{eqn:omd_regret_pf}). 
\end{proof}

\section{Proof of Theorem~\ref{thm:detsecondsmooth}}
\label{sec:proof_detsecondsmooth}

\begin{proof}[Proof of Theorem~\ref{thm:detsecondsmooth}]
Once more, the first part of the result is immediate from the fact that $\|\bDelta_n\|\le D$. So, we proceed to show the second part.

Define $\bnabla_n = \int_0^1 \! \nabla F(\bx_{n-1} + s\bDelta_n)\, \mathrm{d}s$. Then, we have
\begin{align*}
&\|\langle \bnabla_n-\bg_n,\bDelta_n\rangle\| \\
&= \left\|\left\langle \int_0^1\!\nabla F(\bx_{n-1} + s\bDelta_n)-\nabla F\left(\bx_{n-1} + \frac{1}{2}\bDelta_n\right)\, \mathrm{d}s,\ \bDelta_n\right\rangle\right\|\\
&\le D\left\|\int_0^1 \! \nabla F(\bx_{n-1} + s\bDelta_n)-\nabla F\left(\bx_{n-1} + \frac{1}{2}\bDelta_n\right)\, \mathrm{d}s\right\|\\
&= D\left\|\int_0^1 \! \left(\nabla F(\bx_{n-1} + s\bDelta_n)-\nabla F\left(\bx_{n-1} + \frac{1}{2}\bDelta_n\right)-\nabla^2F\left(\bx_{n-1}+\frac{1}{2}\bDelta_n\right)\bDelta_n(s-1/2)\right)\right.\\
&\qquad\qquad\left.+\nabla^2F(\bx_{n-1}+\frac{1}{2}\bDelta_n)\bDelta_n(s-1/2)\, \mathrm{d}s\right\| 
\intertext{(observing that $\int_0^1 \! s-1/2 \, \mathrm{d}s=0$)}
&= D\left\|\int_0^1 \! \nabla F(\bx_{n-1} + s\bDelta_n)-\nabla F(\bx_{n-1} + \frac{1}{2}\bDelta_n)-\nabla^2F\left(\bx_{n-1}+\frac{1}{2}\bDelta_n\right)\bDelta_n(s-1/2) \, \mathrm{d}s\right\|
\intertext{(using second-order smoothness)}
&\le D\int_0^1 \! \frac{J}{2}\|\bDelta_n\|^2(s-1/2)^2 \, \mathrm{d}s
\le \frac{JD^3}{48}~.
\end{align*}

In Theorem~\ref{thm:otnc}, set $\bu_{n}$ to be equal to $\bu^1$ for the first $T$ iterations, $\bu^2$ for the second $T$ iterations and so on. In other words, $\bu_n=\bu^{\lfloor n/T\rfloor+1}$ for $n=1, \dots,M$.
So, we have
\begin{align*}
F(\bx_M) - F(\bx_0)
&= R_T(\bu^1,\dots,\bu^K) + \sum_{n=1}^M \langle \bnabla_n - \bg_n,\bDelta_n\rangle + \sum_{n=1}^M \langle \bg_n, \bu_n\rangle\\
&\le R_T(\bu^1, \dots, \bu^K) + \frac{NJD^3}{48} +\sum_{n=1}^M \langle \bg_n, \bu_n\rangle~.
\end{align*}
Now, set $\bu_k = -D\frac{\sum_{t=1}^T \nabla F(\bw^k_t)}{\|\sum_{t=1}^T \nabla F(\bw^k_t)\|}$. Then, by Theorem~\ref{thm:goodhints}, we have that $R_T(\bu_k)\le \frac{HD^2}{2} + \frac{2TGD}{N}$. Therefore:
\begin{align*}
F(\bx_M)&\le F(\bx_0) +\frac{HD^2K}{2} + 2GD + \frac{MJD^3}{48} -DT\sum_{k=1}^K \left\|\frac{1}{T}\sum_{t=1}^T \nabla F(\bw^k_t)\right\|~.
\end{align*}
Hence, we obtain
\begin{align*}
\frac{1}{K}&\sum_{k=1}^K \left\|\frac{1}{T}\sum_{t=1}^T \nabla F(\bw^k_t)\right\|
\le \frac{F(\bx_0)-F(\bx_M)}{MD}  +\frac{HD}{2T} + \frac{2G}{M} + \frac{JD^2}{48}~.
\end{align*}
Note that from the choice of $K$ and $T$ we have $M=KT\geq N-T\geq N/2$.
So, using $D=\delta/T$, we have can upper bound the r.h.s. with
\begin{align*}
&\frac{2T(F(\bx_0)-F^\star)}{N\delta }  +\frac{H\delta }{2T^2} + \frac{4G}{N} + \frac{J\delta^2}{2T^2}
\intertext{and with $T=\min \left(\left\lceil\frac{(\delta^2(H+J\delta)N)^{1/3}}{(F(\bx_0)-F^\star)^{1/3}}\right\rceil,N/2\right)$ :}
&\le 3\frac{(H+J \delta)^{1/3}(F(\bx_0)-F^\star)^{2/3}}{\delta^{1/3}N^{2/3}} + \frac{4G}{N} +\frac{2(F(\bx_0)-F^\star)}{N\delta } + 10\frac{\delta(H+J\delta)}{N^2}~.
\end{align*}

Now, the third fact follows by observing that Proposition~\ref{prop:nonsmoothtosecondsmooth} implies that
\begin{align*}
\frac{1}{K}\sum_{k=1}^K \left\|\nabla F(\overline{\bw}^k)\right\|&\le \frac{1}{K}\sum_{k=1}^K \left\|\frac{1}{T}\sum_{t=1}^T \nabla F(\bw^k_t)\right\|+\frac{J\delta^2}{2}~.
\end{align*}
Now, substituting the specified value of $\delta$ completes the identity. Finally, the count of number of gradient evaluations is a direct calculation.
\end{proof}
\section{Proofs for Section \ref{sec:to_smooth}}\label{sec:proof_to_smooth}
\nonsmoothtosmooth*
\begin{proof}
Let $\bgamma\subset B(\bx,\delta)$ with $\bx=\frac{1}{|\bgamma|}\sum_{\by\in \bgamma} \by$. By $H$-smoothness, for all $\by\in \bgamma$, $\|\nabla F(\by)-\nabla F(\bx)\|\le H\|\by-\bx\|\le H\delta$. Therefore, we have
\begin{align*}
\left\|\frac{1}{|\bgamma|}\sum_{\by\in \bgamma} \nabla F(\by)\right\|
& =\left\|\nabla F(\bx)+\frac{1}{|\bgamma|}\sum_{\by\in \bgamma} (\nabla F(\by)-\nabla F(\bx))\right\|\\
&\ge \|\nabla F(\bx)\| - H\delta~.
\end{align*}
Now, since $\|\nabla F(\bx)\|_\delta\le \epsilon$, for any $p>0$, there is a set $\bgamma$ such that $\left\|\frac{1}{|\bgamma|}\sum_{\by\in \bgamma} \nabla F(\by)\right\|\le \epsilon+p$. Thus, $\|\nabla F(\bx)\|\le \epsilon + H\delta +p$ for any $p>0$, which implies $\|\nabla F(\bx)\|\le \epsilon + H\delta$.
\end{proof}

\nonsmoothtosecondsmooth*
\begin{proof}
The proof is similar to that of Proposition~\ref{prop:nonsmoothtosmooth}. 
Let $\bgamma\subset B(\bx,\delta)$ with $\bx=\frac{1}{|\bgamma|}\sum_{\by\in \bgamma} \by$. By $J$-second-order-smoothness, for all $\by\in \bgamma$, we have
\begin{align*}
\|\nabla F(\by) - \nabla F(\bx) - \nabla^2 F(\bx)(\by-\bx)\|
&=\left\|\int_0^1 \! (\nabla^2F(\bx+t(\by-\bx))-\nabla^2F(\bx))(\by-\bx)\, \mathrm{d}t\right\|\\
&\le \int_0^1 \! t J\|\by-\bx\|^2\, \mathrm{d}t
=\frac{J\|\by-\bx\|^2}{2}
\leq \frac{J\delta^2}{2}~.
\end{align*}
Further, since $\frac{1}{|\bgamma|}\sum_{\by\in \bgamma} \by = \bx$, we have $\frac{1}{|\bgamma|}\sum_{\by\in\bgamma}\nabla^2F(\bx)(\by-\bx)=0$. Therefore, we have
\begin{align*}
\left\|\frac{1}{|\bgamma|}\sum_{\by\in \bgamma} \nabla F(\by)\right\|
&=\left\|\nabla F(\bx)+\frac{1}{|\bgamma|}\sum_{\by\in \bgamma} (\nabla F(\by)-\nabla F(\bx) - \nabla^2F(\bx)(\by-\bx))\right\|\\
&\ge \|\nabla F(\bx)\| - \frac{J\delta^2}{2}~.
\end{align*}
Now, since $\|\nabla F(x)\|_\delta\le \epsilon$, for any $p>0$, there is a set $\bgamma$ such that $\left\|\frac{1}{|\bgamma|}\sum_{\by\in \bgamma} \nabla F(\by)\right\|\le \epsilon+p$. Thus, $\|\nabla F(\bx)\|\le \epsilon + \frac{J}{2}\delta^2 +p$ for any $p>0$, which implies $\|\nabla F(\bx)\|\le \epsilon + \frac{J}{2}\delta^2$.
\end{proof}

\section{Lower Bounds}\label{sec:app_lowerbounds}

Our lower bounds are constructed via a mild alteration to the arguments of \citet{arjevani2019lower} for lower bounds on finding $(0,\epsilon)$-stationary points of \emph{smooth} functions with a stochastic gradient oracle.  At a high level, we show that since a $\delta,\epsilon$-stationary point of an $H$-smooth loss is also a $(0,H\delta+\epsilon)$-stationary point, a lower bound on the complexity of the latter implies a lower bound on the of complexity of the former. The lower bound of \citet{arjevani2019lower} is proved by constructing a distribution over ``hard'' functions such that no algorithm can quickly find a $(0,\epsilon)$-stationary point of a random selected function. Unfortunately, these ``hard'' functions are not Lipschitz. Fortunately, they take the form $F(\bx) + \eta \|\bx\|^2$ where $F$ is Lipschitz and smooth so that the ``non-Lipschitz''  part is solely contained in the quadratic term.  We show that one can replace the quadratic term $\|\bx\|^2$ with a Lipschitz function that is quadratic for sufficiently small $\bx$ but proportional  to $\|\bx\|$ for larger values. Our proof consists of carefully reproducing the argument of \citet{arjevani2019lower} to show that this modification does not cause any problems. We emphasize that almost all of this development can be found with more detail in \citet{arjevani2019lower}. We merely restate here the minimum results required to verify our modification to their construction.

\subsection{Definitions and Results from \citet{arjevani2019lower}}

A randomized first-order algorithm is a distribution $P_S$ supported on a set $S$ and a sequence of measurable mappings $A_i(s, \bg_1,\dots,\bg_{i-1})\to  \R^d$ with $s\in S$ and $\bg_i\in \R^d$. Given a stochastic gradient oracle $\O:\R^d\times Z\to  \R^d$, a distribution $P_Z$ supported on $Z$ and an i.i.d. sample $(\bz_1,\dots,\bz_n)\sim P_Z$, we define the \emph{iterates} of $A$  recursively by:
\begin{align*}
    \bx_1 &= A_1(s)\\
    \bx_i&= A_i(s, \O(\bx_1, \bz_1),\O(\bx_2,\bz_2),\dots,\O(\bx_{i-1},\bz_{i-1}))~.
\end{align*}
So, $\bx_i$ is a function of $s$ and $\bz_1,\dots,\bz_{i-1}$. We define $\Arand$ to be the set of such sequences of mappings.

Now, in the notation of \citet{arjevani2019lower}, we define the ``progress function''
\begin{align*}
\text{prog}_c(\bx) = \max \{i\ :\ |\bx_i| \ge c\}~.
\end{align*}

Further, a stochastic gradient oracle $\O$ can be called a probability-$p$ zero-chain if $\text{prog}_0(\O(\bx, \bz)) = \text{prog}_{1/4}(\bx) + 1$ for all $\bx$ with probability at least $1-p$, and $\text{prog}_0(\O(\bx, \bz))\le \text{prog}_{1/4}(\bx)+1$ with probability 1.

Next, let $F_T:\R^T\to \R$ be the function defined by Lemma 2 of \citet{arjevani2019lower}. Restating their Lemma, this function satisfies:
\begin{Lemma}[Lemma 2 of \cite{arjevani2019lower}]\label{lem:FTdef}
There exists a function $F_T:\R^T\to \R$ satisfies that satisfies:
\begin{enumerate}
\item $F_T(0)=0$ and $\inf F_T(\bx)\ge - \gap_0 T$ for $\gap_0 = 12$.
\item $\nabla F_T(\bx)$ is $H_0$-Lipschitz, with $H_0 = 152$.
\item For all $\bx$, $\|\nabla F_T(\bx)\|_\infty\le G_0$ with $G_0=23$
\item For all $\bx$, $\text{prog}_0(\nabla F_T(\bx))\le \text{prog}_{1/2}(\bx)+1$.
\item If $\text{prog}_1(\bx)<T$, then $\|\nabla F_T(\bx)\|\ge |\nabla F_T(\bx)_{\text{prog}_1(\bx)+1}|\ge 1$.
\end{enumerate}
\end{Lemma}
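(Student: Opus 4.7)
The plan is to verify Lemma~\ref{lem:FTdef} by recalling the ``zero-chain'' construction of \cite{arjevani2019lower}, which is what the statement reproduces verbatim. At a high level, I would design $F_T$ so that the gradient can carry information about coordinate $i+1$ only when coordinate $i$ has already been ``unlocked'' past the threshold $1/2$; this is exactly what is needed for item~4, and the remaining items are purely quantitative bounds on a concrete closed-form expression.

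Concretely, I would fix two smooth univariate components: a monotone ``switch'' $\Psi:\R\to[0,1]$ that vanishes on $(-\infty,1/2]$ and equals $1$ on $[1,\infty)$ with a smooth interpolation in between, and a strictly decreasing bounded ``potential'' $\Phi:\R\to \R$ with $|\Phi'(0)|\ge 1$. Then set
\[
F_T(\bx) \;=\; -\Psi(1)\Phi(x_1) \;+\; \sum_{i=1}^{T-1}\bigl[\Psi(-x_i)\Phi(-x_{i+1}) - \Psi(x_i)\Phi(x_{i+1})\bigr].
\]
The zero-chain property (item~4) is immediate from inspection of $\partial_{i+1}F_T$, which carries a factor of $\Psi(\pm x_i)$ and hence vanishes when $|x_i|<1/2$. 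Item~5 follows from the same decomposition: if $\text{prog}_1(\bx)=j<T$, then $|x_j|\ge 1$ and $|x_{j+1}|<1$, so the only surviving term in $(\nabla F_T)_{j+1}$ is $\pm\Psi(x_j)\Phi'(\pm x_{j+1})=\pm\Phi'(0)$, whose magnitude is at least $1$ by the choice of $\Phi$.

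For items~1, 2, and 3, I would plug in $\bx=0$ (all $\Psi$ terms vanish, so $F_T(0)=0$), then telescope a coordinate-by-coordinate descent argument over each of the $T$ summands to get the $-\gap_0 T$ lower bound. The smoothness bound amounts to computing the Hessian, which decomposes as a sum of rank-two blocks coupling adjacent coordinates, so $\|\nabla^2 F_T\|_{\text{op}}$ is controlled by the operator norm of a tridiagonal matrix whose entries are bounded products of $\Psi, \Psi', \Phi, \Phi'$; the $\ell_\infty$ gradient bound is even easier, since each $\partial_i F_T$ involves only a handful of these factors. The main obstacle, and the reason to just cite \cite{arjevani2019lower} rather than rederive everything, is pinning down the specific numerical constants $\gap_0=12$, $H_0=152$, $G_0=23$: these require making concrete choices of $\Psi$ and $\Phi$ and carrying out the arithmetic, none of which is conceptually illuminating. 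Since the lemma is a verbatim restatement and we need nothing stronger for the downstream argument, the cleanest proof is simply to invoke \cite[Lemma 2]{arjevani2019lower}.
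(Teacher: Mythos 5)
Your proposal ultimately defers to citing \cite[Lemma 2]{arjevani2019lower}, which is exactly what the paper does — the lemma is stated as an imported result with no independent proof given. Your sketch of the underlying $\Psi$/$\Phi$ zero-chain construction is a reasonable recollection of the cited argument (though item~5 actually requires a lower bound on $|\Phi'|$ on all of $(-1,1)$, not just at $0$, since $x_{j+1}$ need not be zero), but since both you and the paper resolve the lemma by citation, the approaches coincide.
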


We associate with this function $F_T$ the stochastic gradient oracle $O_T(\bx, z):\R^T\times \{0,1\}\to \R^d$ where $z$ is  Bernoulli$(p)$:
\begin{align*}
 \O_T(\bx ,\bz)_i=\left\{\begin{array}{ll}
\nabla  F_T(\bx)_i, & \text{ if } i\ne \text{prog}_{1/4}(\bx)\\
\frac{\bz \nabla  F_T(\bx)_i }{p }, & \text{ if } i= \text{prog}_{1/4}(\bx)
\end{array}\right.
\end{align*}
It is clear that $\E_z[O_T(\bx ,\bz)] = \nabla F_T(\bx)$.

This construction is so far identical to that in \citet{arjevani2019lower}, and so we have by their Lemma 3:
\begin{Lemma}[Lemma 3 of \citet{arjevani2019lower}]\label{lem:zerochain}
$\O_T$ is a probability-$p$ zero chain, has variance $\E[\|\O_T(\bx, \bz)- \nabla F_T(\bx)\|^2]\le G_0^2/p$, and $\|\O_T(\bx,z)\|\le  \frac{G_0}{p} + G_0\sqrt{T}$.
\end{Lemma}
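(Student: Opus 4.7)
The plan is to verify the three claims separately by reading off the definition of $\O_T$ and invoking the structural properties of $F_T$ collected in Lemma~\ref{lem:FTdef}. Let me write $J := \prog_{1/4}(\bx)$ for the coordinate singled out by the oracle; then $\O_T(\bx,\bz) - \nabla F_T(\bx) = (\bz/p - 1)\,\nabla F_T(\bx)_J\,e_J$ is supported on the single coordinate $J$. This single-coordinate support will reduce the variance and norm bounds to a Bernoulli calculation, leaving the zero-chain property as the only nontrivial step.

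For the variance, I would use $\bz \sim \mathrm{Bernoulli}(p)$ to compute $\E[(\bz/p-1)^2] = (1-p)/p \le 1/p$, and invoke property~3 of Lemma~\ref{lem:FTdef} to bound $|\nabla F_T(\bx)_J| \le \|\nabla F_T(\bx)\|_\infty \le G_0$; multiplying gives $\E\|\O_T(\bx,\bz) - \nabla F_T(\bx)\|^2 \le G_0^2/p$. For the pointwise norm bound, I would apply the triangle inequality together with $\|\nabla F_T(\bx)\| \le \sqrt{T}\,\|\nabla F_T(\bx)\|_\infty \le G_0\sqrt{T}$ (property~3 again) and $\|(\bz/p-1)\,\nabla F_T(\bx)_J\,e_J\| \le G_0/p$, obtaining $\|\O_T(\bx,\bz)\| \le G_0\sqrt{T} + G_0/p$.

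The zero-chain property will be the main obstacle. The key input is the chain structure of $F_T$ underlying Lemma~\ref{lem:FTdef}: the $i$-th partial derivative activates only once earlier coordinates are sufficiently large, so (combined with property~4 of the lemma) one obtains $\prog_0(\nabla F_T(\bx)) \le \prog_{1/4}(\bx) + 1 = J + 1$. Since $\O_T$ differs from $\nabla F_T$ only at coordinate $J$, this upgrades immediately to the deterministic bound $\prog_0(\O_T(\bx,\bz)) \le J+1$. For the probabilistic clause, I would condition on the event $\{\bz = 0\}$, which has probability $1-p$: on this event the $J$-th coordinate of $\O_T(\bx,\bz)$ vanishes, and by the chain structure all coordinates beyond $J$ of $\nabla F_T(\bx)$ were already zero, so $\prog_0(\O_T(\bx,\bz))$ is pinned to the value demanded by the zero-chain definition. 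The hardest part will be justifying the chain structure cleanly, since it is implicit in the explicit construction of $F_T$ in \citet{arjevani2019lower} rather than isolated in the lemma statement, and careful bookkeeping is needed to conclude that coordinates strictly beyond $J+1$ remain zero in $\O_T(\bx,\bz)$ for every realization of $\bz$.
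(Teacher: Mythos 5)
Your variance and norm-bound arguments are correct and agree with the paper in spirit, though the paper proves even less than you do: it cites the zero-chain and variance claims verbatim from Arjevani et al.\ and only argues the pointwise norm bound, using exactly the decomposition you use (``$\O_T(\bx,\bz)=\nabla F_T(\bx)$ in all but one coordinate; in that coordinate it is at most $G_0/p$; triangle inequality''). Your Bernoulli calculation $\E[(\bz/p-1)^2]=(1-p)/p\le 1/p$ is a legitimate proof of the variance bound that the paper simply defers.

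The zero-chain sketch, however, has a real gap. Your probabilistic clause asserts that ``by the chain structure all coordinates beyond $J$ of $\nabla F_T(\bx)$ were already zero,'' but the chain property you derived just above it is $\prog_0(\nabla F_T(\bx))\le J+1$, which only guarantees that coordinates strictly beyond $J+1$ vanish --- coordinate $J+1$ can perfectly well be nonzero. Since the oracle (as written in the paper) censors coordinate $J=\prog_{1/4}(\bx)$ rather than $J+1$, the value at coordinate $J+1$ passes through unchanged even on $\{\bz=0\}$, so $\prog_0(\O_T(\bx,\bz))$ is not ``pinned'' down by your event; it can equal $J+1$ deterministically, which is exactly what the probability-$p$ zero-chain property is supposed to rule out with probability $1-p$. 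Notice your own final sentence switches to ``strictly beyond $J+1$,'' which contradicts the stronger ``beyond $J$'' claim used in the step that needs it. To make the argument go through one must use Arjevani et al.'s actual oracle, which censors coordinate $\prog_{1/4}(\bx)+1$ (not $\prog_{1/4}(\bx)$): on $\{\bz=0\}$ that coordinate is zeroed, the chain property kills everything past it, and one lands at $\prog_0\le J$. As stated, the paper's oracle definition appears to carry the same off-by-one slip, which is presumably why it appeals to Arjevani et al.\ rather than reproving the zero-chain claim --- the construction-level chain structure is not actually captured by the properties listed in Lemma~\ref{lem:FTdef} (whose property~4, as printed, is a $\prog$ comparison of the gradient to itself and cannot supply what you need).
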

\begin{proof}
The probability $p$ zero-chain and variance statements are directly from \citet{arjevani2019lower}. For the bound on $\|\O_T\|$, observe that $\O_T(\bx,\bz)=\nabla F_T(\bx)$ in all but one coordinate. In that one coordinate, $\O_T(\bx,\bz)$ is at most $\frac{\|\nabla F_T(\bx)\|_{\infty}}{p} = \frac{G_0}{p}$. Thus, the bound follows by triangle inequality.
\end{proof}

Next, for any matrix $U\in \R^{d\times T}$ with  orthonormal columns, we define $F_{T, U}:\R^d\to \R$ by:
\begin{align*}
F_{T,U}(\bx) = F_T(U^\top \bx)~.
\end{align*}
The associated stochastic gradient oracle is:
\begin{align*}
\O_{T,U}(\bx, \bz) = U\O_T(U^\top \bx, \bz)~.
\end{align*}

Now, we restate  Lemma 5 of \citet{arjevani2019lower}:
\begin{Lemma}[Lemma 5 of \citet{arjevani2019lower}]\label{lem:rotation}
Let $R>0$ and suppose $A\in \Arand$ is such that $A$  produces iterates $\bx_t$ with $\|\bx_t\|\le  R$. Let $d\ge \left\lceil 18 \frac{R^2T}{p}\log \frac{2T^2}{p c} \right\rceil$
Suppose $U$ is chosen uniformly at random from the set of $d\times T$ matrices with orthonormal columns. Let $\O$ be an probability-$p$ zero chain and let $\O_U(\bx,\bz) = U\O(U^\top \bx, \bz)$. Let $\bx_1,\bx_2,\dots$ be the iterates of $A$ when provided the stochastic gradient oracle $\O_{U}$. Then with probability at least $1-c$ (over the randomness of $U$, the oracle, and also the seed $s$ of $A$):
\begin{align*}
\text{prog}_{1/4}(U^\top \bx_t)< T ~~\text{for all }t\le \frac{T- \log(2/c)}{2p}~.
\end{align*}
\end{Lemma}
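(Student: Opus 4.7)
The plan is to prove Lemma~\ref{lem:rotation} by combining the zero-chain structure (which bounds how fast progress can advance per query in the rotated coordinate system) with a rotational invariance argument (which shows that the algorithm cannot ``guess'' the next column of $U$). Let $k_t \triangleq \text{prog}_{1/4}(U^\top \bx_t)$ and $\tau_k \triangleq \min\{t : k_t \geq k\}$ denote the first iteration at which progress reaches $k$. The goal is to show $\tau_T > (T - \log(2/c))/(2p)$ with probability at least $1-c$.

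First I would establish an \emph{information-limiting} claim: for every $k$ and every $t < \tau_k$, each observed gradient $\O_U(\bx_i, \bz_i) = U \O(U^\top \bx_i, \bz_i)$ lies in $\text{span}(U_1, \dots, U_k)$. This follows from the probability-$p$ zero-chain property: by induction $U^\top \bx_i$ has $\text{prog}_{1/4} < k$, so $\O(U^\top \bx_i, \bz_i)$ has $\text{prog}_0 \leq k$, and hence $U \O(U^\top \bx_i, \bz_i) \in \text{span}(U_1, \dots, U_k)$. Consequently $\bx_t$, as a measurable function of the algorithm's seed $s$ and these gradients, is $\sigma(s,\, U_1, \dots, U_k,\, \bz_1, \dots, \bz_{t-1})$-measurable; in particular it is \emph{independent} of $U_{k+1}, \dots, U_T$ conditional on the observed information.

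Next, conditional on $(U_1, \dots, U_k)$ and the event $\{\tau_k > t\}$, rotational invariance of the Haar measure on orthonormal frames implies that $U_{k+1}$ is uniformly distributed on the unit sphere of $\text{span}(U_1, \dots, U_k)^\perp$, which has dimension $d - k \geq d - T$. Combining this with $\|\bx_t\| \leq R$ and spherical concentration yields
\[
\Pr\bigl(|U_{k+1}^\top \bx_t| \geq 1/4 \,\big|\, U_1, \dots, U_k,\, \tau_k > t\bigr) \leq 2\exp\!\left(-\frac{d - T}{32 R^2}\right)~.
\]
The hypothesis $d \geq \lceil 18 R^2 T / p \cdot \log(2T^2/pc)\rceil$ makes the right-hand side at most $pc/(2T^2)$, so a union bound over $k \leq T$ and $t \leq T/(2p)$ shows that with probability at least $1 - c/2$, no iterate $\bx_t$ ever satisfies $|U_{k_t+1}^\top \bx_t| \geq 1/4$. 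On this ``good'' event, progress from $k$ to $k+1$ at step $s$ requires both that $|U_{k+1}^\top \bx_s| \geq 1/4$ and that the Bernoulli-$p$ oracle bit at step $s$ succeeds; upper-bounding progress by the running count of Bernoulli-$p$ successes, a standard Chernoff bound gives that after $t = \lfloor (T - \log(2/c))/(2p) \rfloor$ queries the number of successes is less than $T$ with probability at least $1 - c/2$, since the mean is $pt \leq (T - \log(2/c))/2$. A final union bound over the two failure events yields the claim.

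The main obstacle is the information-limiting step: cleanly formalizing the inductive argument that $\bx_t$ depends on $U$ only through $U_1, \dots, U_{k_t}$ requires tracking, at each round, the sigma-algebra generated by the observed gradients, and then invoking invariance of the Haar measure under rotations fixing the first $k$ columns. Once this measurability/conditional-uniformity setup is in place, the rest of the proof reduces to textbook spherical concentration followed by a Chernoff tail bound.
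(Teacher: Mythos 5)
The paper does not give a proof of this lemma --- it is restated verbatim from Arjevani et al.\ (2019) and cited directly, so there is no in-paper proof to compare against. On its own merits, your high-level architecture (induction on the ``learned'' columns of $U$ via the zero-chain property, then spherical concentration for the next column, union bound over rounds, Chernoff for the progress count) is exactly the approach used in the original reference, so the skeleton is right.

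That said, there is a substantive gap in the formalization that goes beyond bookkeeping. Your conditioning step is not well-posed: you condition on $(U_1,\dots,U_k)$ \emph{and} the event $\{\tau_k > t\}$, then claim $U_{k+1}$ is Haar-uniform on $\mathrm{span}(U_1,\dots,U_k)^\perp$. But $\{\tau_k > t\}$ is precisely the event that $|U_j^\top \bx_s| < 1/4$ for all $j \ge k$ and $s \le t$, so it already constrains $U_{k+1},\dots,U_T$; conditioning on it destroys rotational invariance. The correct route (and the one in Arjevani et al.) is a stopping-time / coupling argument: define the bad event directly, show that \emph{up to} the first bad time the iterate $\bx_t$ is a deterministic function of $(s, U_{1:k}, \bz_{1:t-1})$ with $k=k_t$, and bound the per-step bad probability against the conditional law of $U_{k+1}$ given only $U_{1:k}$ --- never conditioning on the future trajectory. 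Relatedly, your information-limiting claim needs more than the support statement. Showing each observed gradient $U\O(U^\top\bx_i,\bz_i)$ lies in $\mathrm{span}(U_1,\dots,U_k)$ does \emph{not} give that $\bx_t$ is $\sigma(s,U_{1:k},\bz_{1:t-1})$-measurable: the values of the first $k$ coordinates of $\O(U^\top\bx_i,\bz_i)$ could still depend on $(U^\top\bx_i)_{k+1},\dots,(U^\top\bx_i)_T$, hence on the later columns of $U$. The zero-chain definition as stated in the paper controls only the support of the oracle output; the measurability conclusion additionally requires a local-dependence property of the hard function (that gradient coordinate $j$ of $F_T$ depends only on a bounded window of input coordinates around $j$), which is built into Arjevani et al.'s ``robust'' zero-chain construction but is not implied by the definition quoted here. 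You correctly flag measurability as the main obstacle, but it is worth knowing that the gap is not merely one of writing out sigma-algebras: without the local-dependence property, the induction simply does not close, and without the stopping-time framing, the conditional-uniformity claim is false as stated.
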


\subsection{Defining the ``Hard'' Instance}

Now, we for the first time diverge from the construction of \citet{arjevani2019lower} (albeit only slightly). Their construction uses a ``shrinking function'' $\rho_{R,d}:\R^d\to \R^d$ given by $\rho_{R,d}(\bx) = \frac{\bx}{\sqrt{1+\|\bx\|^2/R^2}}$ as well as an additional quadratic term to overcome the limitation of bounded iterates. We cannot tolerate the non-Lipschitz quadratic term, so we replace it with a Lipschitz version $q_{B,d}(\bx) = \bx^\top \rho_{B,d}(\bx)=\frac{\|\bx\|^2}{\sqrt{1+\|\bx\|^2/R^2}}$. Intuitively, $q_{B,d}$ behaves like $\|\bx\|^2$ for small enough $\bx$, but behaves like $R\|\bx\|$ for large $\|\bx\|$. Overall, we consider the function:
\begin{align*}
\hat F_{T,U}(\bx) &= F_{T,U}(\rho_{R,d}(\bx)) + \eta q_{B,d}(\bx)\\
&= F_{T}(U^\top\rho_{R,d}(\bx)) + \eta q_{B,d}(\bx)~.
\end{align*}
The stochastic gradient oracle associated with $\hat F_{T,U}(\bx)$ is
\begin{align*}
\widehat \O_{T,U}(\bx, \bz) &= J[\rho_{R,d}](\bx)^\top U \O_T(U^\top \rho_{R,d}(\bx),\bz) + \eta \nabla q_{B,d}(\bx)~.
\end{align*}
where $J[f](\bx)$ indicates the Jacobian of the function $f$ evaluated at $\bx$.

A description of the relevant properties of $q_B$ is provided in Section~\ref{sec:qfunc}. 

Next we produce a variant on  Lemma 6 from \citet{arjevani2019lower}. This is the most delicate part of our alteration, although the proof is still almost identical to  that of \citet{arjevani2019lower}.

\begin{Lemma}[variant on Lemma 6 of \cite{arjevani2019lower}]\label{lem:hardbound}
Let $R=B=60G_0\sqrt{T}$. Let $\eta=1/10$ and $c\in(0,1)$ and $p\in(0,1)$ and $T\in \mathbb{N}$. Set $d= \lceil 18\frac{R^2 T}{p}\log\frac{2T^2}{pc}\rceil$ and let $U$ be sampled uniformly from the set of $d\times T$ matrices with orthonormal columns. Define $\hat F_{T,U}$ and $\hat \O_{U,T}$ as above. Suppose $A\in \Arand$ and let $\bx_1,\bx_2,\dots$ be the iterates of $A$ when provided with $\hat \O_{U,T}$ as input. Then with probability at least $1-c$:
\begin{align*}
\|\nabla \hat F_{T,U}(\bx_t)\|\ge 1/2~~~\text{ for all }t\le \frac{T-\log(2/c)}{2p}~.
\end{align*}
\end{Lemma}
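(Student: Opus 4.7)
I would mirror the proof of Lemma~6 in \cite{arjevani2019lower}, verifying at each step that replacing the quadratic stabilizer $\tfrac{\eta}{2}\|\bx\|^2$ by the Lipschitz version $\eta q_{B,d}(\bx)$ does not invalidate the argument. The only two properties of $q_{B,d}$ used in their proof are that $\nabla q_{B,d}(\bx)$ is parallel to $\bx$ (because $q_{B,d}$ is rotationally symmetric) and that $\|\nabla q_{B,d}(\bx)\|\le B$ uniformly; both hold by construction.

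First I would reduce the claim to Lemma~\ref{lem:rotation}. The oracle decomposes as
\[
\hat\O_{T,U}(\bx,\bz) = J[\rho_{R,d}](\bx)^\top U\,\O_T\bigl(U^\top\rho_{R,d}(\bx),\bz\bigr) + \eta\nabla q_{B,d}(\bx),
\]
where the second summand is independent of $U$ and $\bz$ and the first depends on $U$ only through the query point $U^\top\rho_{R,d}(\bx)\in\R^T$, which has norm at most $R$. Any $A\in\Arand$ using $\hat\O_{T,U}$ can therefore be simulated by an algorithm $A''\in\Arand$ whose query points are $\rho_{R,d}(\bx_t)$: at round $t$, $A''$ calls the rotated zero-chain oracle $\O_U(\bv,\bz):=U\O_T(U^\top\bv,\bz)$ on $\rho_{R,d}(\bx_t)$, post-processes the response using $J[\rho_{R,d}](\bx_t)^\top$ and $\eta\nabla q_{B,d}(\bx_t)$ (both computable from $\bx_t$ alone) to reconstruct $\hat\O_{T,U}(\bx_t,\bz)$, and passes this to $A$ to obtain $\bx_{t+1}$. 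Since $\|\rho_{R,d}(\bx_t)\|\le R$ and $\O_T$ is a probability-$p$ zero chain (Lemma~\ref{lem:zerochain}), Lemma~\ref{lem:rotation} applied to $A''$ gives, with probability at least $1-c$,
\[
\text{prog}_{1/4}\bigl(U^\top\rho_{R,d}(\bx_t)\bigr) < T \qquad\text{for all } t\le \tfrac{T-\log(2/c)}{2p}.
\]

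Second, I would translate this progress bound into the desired gradient bound. Let $\by=U^\top\rho_{R,d}(\bx_t)$ and $k=\text{prog}_1(\by)\le\text{prog}_{1/4}(\by)<T$; then $|\nabla F_T(\by)_{k+1}|\ge 1$ by item~5 of Lemma~\ref{lem:FTdef}. The rotational symmetry of $\rho_{R,d}$ and $q_{B,d}$ yields $J[\rho_{R,d}](\bx)=\alpha(\|\bx\|)I+\gamma(\|\bx\|)\bx\bx^\top$ and $\nabla q_{B,d}(\bx)=\mu(\|\bx\|)\bx$ for explicit scalar functions $\alpha,\gamma,\mu$, so
\[
\nabla\hat F_{T,U}(\bx_t) = \alpha(\|\bx_t\|)\,U\nabla F_T(\by) + \bigl[\gamma(\|\bx_t\|)\langle\bx_t,U\nabla F_T(\by)\rangle + \eta\mu(\|\bx_t\|)\bigr]\bx_t,
\]
and every summand except the first lies along $\bx_t$. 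Projecting onto the unit vector $Ue_{k+1}$ picks out $\alpha(\|\bx_t\|)\nabla F_T(\by)_{k+1}$ plus a correction proportional to $(U^\top\bx_t)_{k+1}=\by_{k+1}/\alpha(\|\bx_t\|)$, whose size the zero-chain controls. When $\|\bx_t\|\le R$, $\alpha(\|\bx_t\|)$ is bounded below by a constant and the choices $\eta=1/10$, $R=B=10G_0\sqrt T$ render the correction strictly smaller than the principal term; when $\|\bx_t\|\gg R$, $\alpha$ shrinks but $\|\eta\nabla q_{B,d}(\bx_t)\|$ tends to $\eta B=G_0\sqrt T\gg 1/2$, so the full gradient is already large. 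In either regime $\|\nabla\hat F_{T,U}(\bx_t)\|\ge 1/2$.

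The bulk of the work is this final case analysis: tracking the three scalar functions $\alpha,\gamma,\mu$ uniformly across $\|\bx_t\|\ge 0$ and certifying that the correction along $\bx_t$ cannot annihilate the $\alpha(\|\bx_t\|)\nabla F_T(\by)_{k+1}$ contribution in the $Ue_{k+1}$ direction. The specific constants $\eta=1/10$ and $R=B=10G_0\sqrt T$ are engineered precisely so that this cancellation fails; once the bookkeeping is done, the remainder of the proof is a direct transcription of \cite{arjevani2019lower} with $q_{B,d}$ in place of $\tfrac12\|\cdot\|^2$.
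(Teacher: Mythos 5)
Your proposal follows essentially the same route as the paper: reduce to Lemma~\ref{lem:rotation} by observing that the iterates $\rho_{R,d}(\bx_t)$ have norm $\le R$ and that the oracle response can be assembled from a call to the (rotated) zero-chain oracle at that point, and then prove the gradient lower bound by a case split on $\|\bx_t\|$ with a projection onto the next ``undiscovered'' coordinate. One genuine and pleasant difference: the paper absorbs the radial stabilizer gradient into a new oracle $\widetilde\O_T$ and re-verifies that $\widetilde\O_T$ is still a probability-$p$ zero-chain (using that $\nabla q_B$ is radial), whereas you keep the zero-chain oracle $\O_U$ untouched and treat the $J[\rho_{R,d}]^\top$ pre-multiplication and $\eta\nabla q_{B,d}$ addition as deterministic post-processing inside the simulating algorithm $A''$; this avoids the zero-chain re-verification entirely. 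Both reductions are valid and both subsequently need the radial form of $\nabla q_B$ in the gradient estimate, so neither saves work overall, but your bookkeeping is marginally cleaner.

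One caution about the final case analysis: the paper splits at $\|\bx_t\|\le R/2$ versus $\|\bx_t\|>R/2$, and both regimes require explicit numerics with the chosen constants $\eta=1/10$, $R=B=10G_0\sqrt{T}$, $G_0=23$. Your split into $\|\bx_t\|\le R$ and $\|\bx_t\|\gg R$ leaves an unhandled intermediate regime, and the assertion that ``the full gradient is already large'' when $\|\bx_t\|\gg R$ is not automatic, because the $F_T$-part of the gradient is not small in operator norm (its coefficient $\alpha(\|\bx_t\|)$ decays only like $R/\|\bx_t\|$ and its magnitude is up to $G_0\sqrt{T}$, the same scale as $\eta B$), so the cancellation has to be ruled out numerically at the crossover rather than asymptotically. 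You should therefore fix a single threshold (as in the paper) and carry the numeric bound through both sides; the structure of your argument is otherwise sound.
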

\begin{proof}
Define $\by_i=\rho_{R,d}(\bx_i)$. Recall the defniition:
\begin{align*}
    \O_{T,U}(\by, \bz) = U \O_T(U^\top \by, \bz)
\end{align*}
Then observe that $ \widehat \O_{T,U}(\bx, \bz)$ can be computed from $\bx$ and $\O_{T,U}(\by, \bz)$:
\begin{align*}
    \widehat \O_{T,U}(\bx, \bz) &= J[\rho_{R,d}](\bx)^\top  \O_{T,U}(\by, \bz) + \eta \nabla q_{B,d}(\bx)
\end{align*}
Therefore, we may consider the $\by$ to be the iterates of some \emph{different} algorithm $A^y\in \Arand$ applied to the oracle $\O_{T,U}(\by, \bz)$ ($A^y$ computes $\widehat \O_{T,U}(\bx, \bz)$ from $ \O_{T,U}(\by, \bz)$, and then applies the original algorithm $A$ to get $\bx$ and then $\rho_{B,d}$ to get $\by$).

Furthermore, it is clear from the definition of $\rho_{B,d}$ that $\|\by_i\|=\|\rho_{B,d}(\bx_i)\|\le R$ for all $i$. 

All together, this implies that $A^y$ satisfies the conditions of Lemma~\ref{lem:rotation}, and so we have that with probability at least $1-c$:
\begin{align*}
    \prog_{1/4}(U^\top \by_t)\le T\text{ for all }t\le \frac{T-\log(2/c)}{2p}
\end{align*}

Now, our goal is to show that $\|\nabla F(\bx_i)\| \ge 1/2$. We consider two cases, either $\|\bx_i\|>R/2$ or not.

First, observe that for $\bx_i$ with $\|\bx_i\|>R/2$, we have:
\begin{align*}
\|\nabla \hat F_{T,U}(\bx_i)\|&\ge \eta \|\nabla q_{B,d}(\bx_i)\| - \|J[\rho_{R,d}](\bx_i)\|_{\text{op}}\|\nabla \hat F( U^\top \by_i)\|\\
\intertext{using the fact that $\|J[\rho_{R,d}](\bx_i)\|_{\text{op}}\|\le 1$ (see \cite{arjevani2019lower} Lemma 15) as well as Proposition~\ref{prop:q_bounds} part 3:}
&\ge \eta \frac{\|\bx_i\|}{\sqrt{1+\|\bx_i\|^2/B^2}} - G_0\sqrt{T}\\
\intertext{using $B=R$ and $\|\bx_i\|>R/2$:}
&\ge \frac{\eta B}{\sqrt{5}} - G_0\sqrt{T}\\
&\ge \frac{\eta B}{3} - G_0\sqrt{T}
\intertext{Recalling $B=R=60G_0\sqrt{T}$ and $\eta = 1/10$:}
&= G_0\sqrt{T}
\intertext{Recalling $G_0=23$:}
&\ge 1/2~.
\end{align*}

Alternatively, suppose $\|\bx_i\|\le  R/2$. Then, let us set $j=\prog_1(U^\top \by_i)+1\le T$ (the inequality follows since $\prog_1\le \prog_{1/4}$). Then, if $\bu^j$ indicates the $j$th row of $u$, Lemma~\ref{lem:FTdef} implies:
\begin{align*}
|\langle \bu^j, \by_i\rangle|&< 1,\\
|\langle \bu^j, \nabla F_{T,U}(\by_i)\rangle|&\ge 1~.
\end{align*}
Next, by direct calculation we have $J[\rho_R](\bx_i)  = \frac{I - \rho_R(\bx_i)\rho_R(\bx_i)^\top/R^2}{\sqrt{1+\|\bx_i\|^2/R^2}} = \frac{I - \by_i\by_i^\top/R^2}{\sqrt{1+\|\bx_i\|^2/R^2}} $ so that:
\begin{align*}
\langle \bu^j, \nabla \hat F_{T,U}(\bx_i)\rangle &= \langle \bu^j, J[\rho_R](\bx_i)^\top \nabla F_{T,U}(\by_i)\rangle + \eta \langle \bu^j, \nabla q_B(\bx_i)\rangle\\
&=\frac{\langle \bu^j, \nabla F_{T,U}(\by_i)\rangle}{\sqrt{1+\|\bx_i\|^2/R^2}} - \frac{\langle \bu^j, \by_i\rangle \langle \by_i, \nabla F_{T,U}(\by_i)\rangle/R^2}{\sqrt{1+\|\bx_i\|^2/R^2}} + \eta \langle \bu^j, \nabla q_B(\bx_i)\rangle ~.
\end{align*}
Now, by Proposition~\ref{prop:q_bounds}, we have $\nabla q_B(\bx_i) = \left(2 - \frac{\|\by_i\|^2}{B^2}\right)\by_i$. So, (recalling $R=B$):
\begin{align*}
\langle \bu^j, \nabla \hat F_{T,U}(\bx_i)\rangle &=\frac{\langle \bu^j, \nabla F_{T,U}(\by_i)\rangle}{\sqrt{1+\|\bx_i\|^2/R^2}} - \frac{\langle \bu^j, \by_i\rangle \langle \by_i, \nabla F_{T,U}(\by_i)\rangle/R^2}{\sqrt{1+\|\bx_i\|^2/R^2}} + \eta \left(2 - \frac{\|\by_i\|^2}{R^2}\right) \langle \bu^j, \by_i\rangle
\intertext{Observing that $\|y_i\|\le \|x_i\|\le R/2$ and $|\langle \bu^j, \by_i\rangle|<1$:}
|\langle \bu^j, \nabla \hat F_{T,U}(\bx_i)\rangle|& \ge \frac{2|\langle \bu^j, \nabla F_{T,U}(\by_i)\rangle|}{\sqrt{5}} - \frac{\|\nabla F_{T,U}(\by_i)\|}{2R} -2\eta
\intertext{Using $|\langle \bu^j, \nabla F_{T,U}(\by_i)\rangle|\ge 1$ and $\|\nabla F(\by_i)\|\le G_0\sqrt{T}$:}
&\ge \frac{2}{\sqrt{5}} - \frac{G_0\sqrt{T}}{2R} - 2\eta
\intertext{With $R=60G_0\sqrt{T}$ and $\eta = 1/10$:}
&=\frac{2}{\sqrt{5}} - \frac{1}{120} - \frac{1}{5}\\
&>1/2~. \qedhere
\end{align*}
\end{proof}

Next, we observe some basic facts about the function $\hat F_{T,U}$:
\begin{Lemma}[variation on Lemma 7 in \citet{arjevani2019lower}]\label{lemma:finalFprops}
With the settings of $R,B,\eta$ in  Lemma~\ref{lem:hardbound}, the function $\hat F_{T,U}$ satisfies:
\begin{enumerate}
\item $\hat F_{T,U}(0) - \inf \hat F_{T,U}(\bx) \le \gap_0 T=12 T$
\item $\|\nabla \hat F_{T,U}(\bx)\|\le G_0\sqrt{T}+ 3\eta B\le 437\sqrt{T}$ for all $\bx$.
\item $\nabla \hat F_U(\bx)$ is $H_0 +3+ 8\eta\le 156$-Lipschitz.
\item $\|\widehat \O_{T,U}(\bx, \bz)\|\le \frac{G_0}{p} + G_0\sqrt{T}+ 3\eta B\le \frac{23}{p} + 437\sqrt{T}$ with probability 1.
\item $\widehat \O_{T,U}$ has variance at most $\frac{G_0^2}{p}\le \frac{23^2}{p}$
\end{enumerate}
\end{Lemma}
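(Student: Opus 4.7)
My plan is to verify the five items by combining the properties of $F_T$ and $\O_T$ from Lemmas~\ref{lem:FTdef} and~\ref{lem:zerochain} with the regularity properties of $\rho_{R,d}$ and $q_{B,d}$ summarized in Section~\ref{sec:qfunc}. Specifically, I will need: $q_{B,d}\ge 0$, $q_{B,d}(0)=0$, $\|\nabla q_{B,d}(\bx)\|\le 3B$, and $\nabla q_{B,d}$ is $8$-Lipschitz; and $\rho_{R,d}(0)=0$, $\|J[\rho_{R,d}](\bx)\|_{\text{op}}\le 1$, with the tensor derivative $D^2\rho_{R,d}$ of operator norm $O(1/R)$. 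Throughout, $U$ has orthonormal columns so $\|U\|_{\text{op}}=1$.

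Items~1, 2, 4, 5 are direct. For item~1, $\hat F_{T,U}(0)=F_T(0)+\eta q_{B,d}(0)=0$ by the above, and since $q_{B,d}\ge 0$ and $F_T\ge -\gap_0 T$, we get $\inf \hat F_{T,U} \ge -\gap_0 T$. For item~2, I apply the chain rule to write
\[
\nabla \hat F_{T,U}(\bx)=J[\rho_{R,d}](\bx)^\top U\,\nabla F_T(U^\top\rho_{R,d}(\bx))+\eta \nabla q_{B,d}(\bx),
\]
and bound the first summand by $\|J[\rho_{R,d}]\|_{\text{op}}\|U\|_{\text{op}}\|\nabla F_T\|_2\le G_0\sqrt{T}$ (using $\|\nabla F_T\|_\infty\le G_0$ in $T$ coordinates) and the second by $3\eta B$. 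Item~4 is the same decomposition applied to the oracle, bounding the stochastic part with the almost-sure bound $\|\O_T(\cdot,\bz)\|\le G_0/p+G_0\sqrt{T}$ of Lemma~\ref{lem:zerochain}. For item~5, note that $\eta\nabla q_{B,d}(\bx)$ is deterministic so contributes no variance, and the remaining term has variance at most $\|J[\rho_{R,d}]\|_{\text{op}}^2\|U\|_{\text{op}}^2\cdot G_0^2/p \le G_0^2/p$ again by Lemma~\ref{lem:zerochain}.

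The main obstacle is item~3, the Lipschitzness of $\nabla \hat F_{T,U}$, because it requires controlling the operator norm of $\nabla^2\hat F_{T,U}(\bx)$. Differentiating the chain-rule expression above produces three contributions: (i) the Gauss--Newton-like piece $J[\rho_{R,d}]^\top U\,\nabla^2 F_T\,U^\top J[\rho_{R,d}]$, whose operator norm is at most $H_0\,\|J[\rho_{R,d}]\|_{\text{op}}^2\le H_0=152$ by the smoothness of $F_T$; (ii) a curvature term arising from contracting $D^2\rho_{R,d}$ against $U\nabla F_T$, whose operator norm is at most $\|D^2\rho_{R,d}\|\cdot\|\nabla F_T\|= O(G_0\sqrt{T}/R)$, which for the choice $R=10G_0\sqrt{T}$ I expect to yield at most $3$; and (iii) $\eta\nabla^2 q_{B,d}(\bx)$, whose operator norm is at most $8\eta$ by Lipschitzness of $\nabla q_{B,d}$. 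Adding these bounds gives $H_0+3+8\eta=152+3+0.8\le 156$, as claimed.

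The delicate step is (ii), since controlling $\|D^2\rho_{R,d}\|$ requires a direct computation from $\rho_{R,d}(\bx)=\bx/\sqrt{1+\|\bx\|^2/R^2}$; I expect that the worst-case value of $\|D^2\rho_{R,d}(\bx)\|$ is of the form $c/R$ for an explicit small constant $c$, and that $c\cdot G_0\sqrt{T}/R$ with $R=10G_0\sqrt{T}$ leaves room for the stated constant $3$. Once this calculation is in hand, the remaining items follow by straightforward assembly.
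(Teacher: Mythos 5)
Your proof is essentially correct and, for items 1, 2, 4, and 5, it matches the paper's proof almost line for line: item 1 via $\hat F_{T,U}(0)=0$, $q_{B,d}\ge 0$, and $\inf F_T\ge -\gap_0 T$; item 2 via the chain rule, $\|J[\rho_{R,d}]\|_{\mathrm{op}}\le 1$, $\|U\|_{\mathrm{op}}=1$, $\|\nabla F_T\|\le G_0\sqrt T$, and the $3B$-Lipschitzness of $q_{B,d}$ from Proposition~\ref{prop:q_bounds}; items 4 and 5 via the same operator-norm contraction applied to the almost-sure bound and variance of $\O_T$ from Lemma~\ref{lem:zerochain}.

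The one place you diverge is item 3. The paper does not carry out the Hessian decomposition you propose; it instead handles the $F_T\circ\rho_R$ part by citing Lemma 16 of \citet{arjevani2019lower} to conclude that $\nabla(F_T\circ\rho_R)$ is $(H_0+3)$-Lipschitz whenever $R\ge\max(H_0,1)$, and then simply adds the $8\eta$ contribution from $\eta\nabla^2 q_{B,d}$ using item 4 of Proposition~\ref{prop:q_bounds}. Your approach reconstructs the substance of that cited lemma: splitting $\nabla^2(F_T\circ\rho_R\circ U)$ into the Gauss--Newton piece (bounded by $H_0$) plus the term in which $D^2\rho_{R,d}$ is contracted against $\nabla F_T$ (which must be shown to be at most $3$). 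This is a legitimate route and makes the argument more self-contained, but as you acknowledge, it rests on an unfinished computation of $\|D^2\rho_{R,d}(\bx)\|$. You should either carry out that computation (a direct, if tedious, differentiation of $\rho_{R,d}(\bx)=\bx/\sqrt{1+\|\bx\|^2/R^2}$) and verify that the resulting constant times $G_0\sqrt{T}/R$ is indeed at most $3$ with $R=10G_0\sqrt T$, or simply cite the result from \citet{arjevani2019lower} as the paper does. Until one of those is done, item 3 of your proof has a genuine (though clearly delimited) gap.
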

\begin{proof}
\begin{enumerate}
\item This property follows immediately from the fact that $F_T(0)- \inf F_T(\bx) \le \gap_0 T$.
\item Since $\rho_R$ is 1-Lipschitz for all $R$ and $q_{B}$ is $ 3B$-Lipschitz (see Proposition~\ref{prop:q_bounds}), $\hat F_{T,U}(\bx)$ is $G_0\sqrt{T}+ 3\eta B$-Lipschitz, where $G_0=23$ and $B=60 G_0\sqrt{T}$ and $\eta=1/10$
\item By assumption, $R\ge \max\left( H_0, 1\right)$. Thus, by \citet[Lemma 16]{arjevani2019lower}, $\nabla F_{T}(\rho_R(\bx))$ is $H_0 + 3$-Lipschitz and so $\nabla \hat F_{T,U}$ is $H_0 +3+ 8\eta$-Lipschitz by Proposition~\ref{prop:q_bounds}.
\item Since $\| \O_T\|\le \frac{G_0}{p} + G_0\sqrt{T}$, and $J[\rho_R](\bx)^\top U$ has operator norm at most 1, the bound follows.
\item Just as in the previous part, since $\O_T$ has variance $G_0^2/p$ and $J[\rho_R](\bx)^\top U$ has operator norm at most 1, the bound follows.
\end{enumerate}
\end{proof}

Now, we are finally in a position to prove:
\begin{restatable}{Theorem}{thmsmoothlower}\label{thm:smoothlowerbound}
Given any $\gap$, $H$, $\epsilon$, and $\sigma$ such that $\frac{\gap H}{48\cdot 156 \epsilon^2}\ge 1$, there exists a distribution over functions $F$ and stochastic first-order oracles $\O$ such that with probability 1, $F$ is $H$-smooth, $F(0)-\inf F(\bx)\le \gap$, $F$ is $11 \sqrt{H \gap}$-Lipschitz and $\O$ has variance $\sigma^2$, and for any algorithm in $\Arand$, with probability at least $1-c$, when provided a randomly selected $\O$, $A$ requires at least $\Omega\left(\frac{\gap H \sigma^2}{\epsilon^4}\right)$
iterations to output a point $\bx$ with $\E[\|\nabla F(\bx)\|]\le \epsilon$.
\end{restatable}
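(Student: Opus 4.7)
The plan is to take the hard instance $\hat F_{T,U}$ from Lemma~\ref{lem:hardbound} and Lemma~\ref{lemma:finalFprops}, and rescale it in both the input and output to match the prescribed smoothness, suboptimality gap, Lipschitz constant, and oracle variance. Concretely, I would define
\[
F(\bx) \;\triangleq\; \lambda\, \hat F_{T,U}(\bx/\mu), \qquad \O(\bx,\bz) \;\triangleq\; (\lambda/\mu)\, \widehat{\O}_{T,U}(\bx/\mu,\bz),
\]
for scalars $\lambda,\mu>0$ and for integers $T$ and a zero-chain probability $p\in(0,1)$ yet to be chosen. Since the iterates of any $A\in\Arand$ run on $\O$ can be put in bijection with iterates of some $A'\in\Arand$ run on $\widehat{\O}_{T,U}$ (via a pre-scaling $\bx\mapsto \bx/\mu$), Lemma~\ref{lem:hardbound} transfers directly: with probability $\geq 1-c$, for all $t\leq (T-\log(2/c))/(2p)$,
\[
\|\nabla F(\bx_t)\| \;=\; (\lambda/\mu)\,\|\nabla \hat F_{T,U}(\bx_t/\mu)\| \;\geq\; (\lambda/\mu)/2.
\]

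Next I would choose the scale. Using Lemma~\ref{lemma:finalFprops}, the properties transform as: $F$ is $(\lambda/\mu^2)\cdot 156$-smooth, $F(0)-\inf F\leq 12\lambda T$, $F$ is $(\lambda/\mu)\cdot 92\sqrt{T}$-Lipschitz, and $\O$ has variance at most $(\lambda/\mu)^2\cdot 23^2/p$. I would set $\lambda/\mu = 2\epsilon$ (so the gradient lower bound reads $\geq\epsilon$), then solve $(\lambda/\mu^2)\cdot 156 = H$ to get $\mu = 312\epsilon/H$ and $\lambda = 624\epsilon^2/H$. Setting $p = 4\epsilon^2\cdot 23^2/\sigma^2$ makes the oracle variance exactly $\sigma^2$ (and $p\leq 1$ whenever $\sigma$ is not too small; a harmless edge case one can handle by taking the trivial lower bound). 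Finally I would pick $T = \lfloor \gap/(12\lambda)\rfloor = \lfloor H\gap/(7488\epsilon^2)\rfloor$, which guarantees $F(0)-\inf F\leq \gap$. The hypothesis $\frac{H\gap}{12\cdot 156\cdot 4\epsilon^2}\geq 2$ ensures $T\geq 2$ so the construction is non-degenerate.

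It then remains to verify (i) the Lipschitz bound: $(\lambda/\mu)\cdot 92\sqrt{T} = 184\epsilon\cdot \sqrt{H\gap}/(\epsilon\sqrt{7488}) \leq 3\sqrt{H\gap}$, which is routine arithmetic; and (ii) the iteration-count lower bound $(T-\log(2/c))/(2p) = \Omega(T/p) = \Omega\!\big(\tfrac{H\gap}{\epsilon^2}\cdot \tfrac{\sigma^2}{\epsilon^2}\big) = \Omega(H\gap\sigma^2/\epsilon^4)$. Since $\|\nabla F(\bx_t)\|\geq\epsilon$ deterministically on the high-probability event, any algorithm whose random output satisfies $\E[\|\nabla F(\bx)\|]\leq \epsilon$ (in the standard reduction: by Markov with a suitable constant) must use at least this many oracle queries. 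The randomness over the function distribution comes from the uniformly random orthonormal $U$, and the function is $C^\infty$ and enjoys all the stated properties on the high-probability event.

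The main conceptual obstacle is essentially absent: the substance of the argument is packaged inside Lemma~\ref{lem:hardbound} and Lemma~\ref{lemma:finalFprops}, which in turn adapt the \citet{arjevani2019lower} construction after replacing their non-Lipschitz quadratic regularizer $\|\bx\|^2$ by the Lipschitz surrogate $q_{B,d}$. The only real work in this theorem is the bookkeeping of the two-parameter rescaling $(\lambda,\mu)$ and checking that all five desired properties can be simultaneously met. The slight subtlety is to ensure that $p\leq 1$ and $T\geq 1$ simultaneously given the assumed relationship between $H\gap$ and $\epsilon^2$, which is exactly what the hypothesis $H\gap/(12\cdot 156\cdot 4\epsilon^2)\geq 2$ encodes; outside this regime, the claimed lower bound is vacuous because the trivial $\Omega(1)$ query bound already suffices.
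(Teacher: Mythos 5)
Your proposal is correct and follows essentially the same approach as the paper: the paper uses the single-parameter rescaling $F_\lambda(\bx)=\frac{H\lambda^2}{156}F(\bx/\lambda)$ with $\lambda=312\epsilon/H$, $T=\lfloor H\gap/(7488\epsilon^2)\rfloor$, and $p=\min(4\cdot 23^2\epsilon^2/\sigma^2,1)$, which upon expansion yields exactly your $\mu=312\epsilon/H$, $\lambda=624\epsilon^2/H$, and the same $T,p$. The two-parameter $(\lambda,\mu)$ framing is only notational; the substance (invoking Lemmas~\ref{lem:hardbound} and~\ref{lemma:finalFprops}, transferring iterates via the pre-scaling, and verifying the five properties) coincides with the paper's argument.
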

\begin{proof}
From Lemma~\ref{lemma:finalFprops} and Lemma~\ref{lem:hardbound}, we have a distribution over functions $F$ and first-order oracles such that with probability 1, $F$ is $437\sqrt{T}$ Lipschitz, $F$ is $ 156$-smoooth, $F(0) - \inf F(\bx) \le 12T$, $\O$ has variance at most $G_0^2/p=23^2/p$, and with probability at least $1-c$,
\begin{align*}
\|\nabla F(\bx_t)\|\ge 1/2~~~\text{ for all }t\le \frac{T-\log(2/c)}{2p}~.
\end{align*}
Now, set $\lambda=\frac{156}{H}\cdot 2\epsilon$, $T=\lfloor \frac{156\gap}{12 H \lambda^2}\rfloor = \lfloor \frac{\gap H}{48\cdot 156 \epsilon^2}\rfloor\ge \frac{\gap H}{96\cdot 156 \epsilon^2}$ and $p=\min\left(\frac{23^2H^2\lambda^2}{ 156^2 \sigma^2},1\right)$. Then, define
\begin{align*}
F_\lambda(\bx) = \frac{H \lambda^2}{156} F(\bx/\lambda)~.
\end{align*}
Then $F_\lambda$ is $\frac{H\lambda^2}{H_0}\cdot\frac{1}{\lambda^2}\cdot 156=H$- smooth, $F_\lambda (0)-\inf_{\bx}F_\lambda(\bx) \le 12 \cdot T \cdot \frac{H\lambda^2}{156}\le \gap$, and $F_\lambda$ is $437\sqrt{T} \frac{H\lambda}{156}\le 11\sqrt{H \gap}$- Lipschitz. We can construct an oracle $\O_\lambda$ from $\O$ by:
\begin{align*}
\O_\lambda(\bx, \bz) = \frac{H \lambda}{156} \O(\bx/\lambda,\bz)~.
\end{align*}
so that if $p<1$, we have:
\begin{align*}
\E[\|\O_\lambda(\bx,\bz) - \nabla F_\lambda(\bx)\|^2]&\le \frac{H^2 \lambda^2}{156^2} \cdot \frac{23^2}{p}
= \sigma^2~.
\end{align*}
Alternatively, if $p=1$, clearly the variance is 0.

Further, since an oracle for $F_\lambda$ can be constructed from the oracle for $F$, if we run $A$ on $F_\lambda$, with probability at least $1-c$,
\begin{align*}
\|\nabla F_\lambda(\bx_t)\|= \frac{H\lambda}{156} \|\nabla F(\bx_t)\|\ge \epsilon~~~\text{ for all }t\le \frac{T-\log(2/c)}{2p}~.
\end{align*}
Finally, we calculate:
\begin{align*}
    \frac{T-\log(2/c)}{2p} \ge 1.5\cdot 10^{-8}\cdot \frac{\sigma^2 \gamma H}{\epsilon^4} - 3\cdot 10^{-4} \frac{\sigma^2 \log(2/c)}{\epsilon^2}~.
\end{align*}
Thus, there exists a constant $K$ and an $\epsilon_0$ such that for all $\epsilon<\epsilon_0$,
\[
\E[\|\nabla F_\lambda(\bx_t)\|] \ge \epsilon~~~~\text{for all }t\le K \frac{H\gap \sigma^2}{\epsilon^4}~. \qedhere
\]
\end{proof}

From this result, we have our main lower bound (the formal version of Theorem~\ref{thm:lower_from_smooth}):
\begin{restatable}{Theorem}{thmformallower}\label{thm:formallower}
For any $\delta$, $\epsilon$, $\gap$, $G\ge \frac{11\sqrt{2\epsilon\gap}}{\sqrt{\delta}}$, there is a distribution over $G$-Lipschitz $C^\infty$ functions $F$ with $F(0)-\inf F(\bx)\le \gap$ and stochastic gradient oracles $\O$ with $\E[\|\O(\bx, \bz)\|^2]\le G^2$ such that for any algorithm $A\in \Arand$, if $A$ is provided as input a randomly selected oracle $\O$, $A$ will require $\Omega(G^2\gap/\delta \epsilon^3)$ iterations to identify a point $x$ with $\E[\|\nabla F(\bx)\|_\delta]\le \epsilon$.
\end{restatable}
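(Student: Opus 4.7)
The plan is to derive Theorem~\ref{thm:formallower} as a direct consequence of Theorem~\ref{thm:smoothlowerbound} combined with the reduction supplied by Proposition~\ref{prop:nonsmoothtosmooth}. The key observation is that once a function is smooth, small $(\delta,\epsilon)$-stationarity implies small ordinary gradient norm (at a penalty of $H\delta$), so any lower bound on finding points with $\|\nabla F(\bx)\| \le 2\epsilon$ for $H$-smooth $F$ immediately lifts to a lower bound on finding points with $\|\nabla F(\bx)\|_\delta \le \epsilon$, provided $H = \epsilon/\delta$.

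First, I would choose the parameters to invoke in Theorem~\ref{thm:smoothlowerbound}: set $H = \epsilon/\delta$ and $\sigma = G/\sqrt{2}$, and ask for the target accuracy $2\epsilon$ in the smooth lower bound. Substituting into Theorem~\ref{thm:smoothlowerbound} produces a distribution over functions $F$ that are $H=\epsilon/\delta$-smooth, satisfy $F(0)-\inf F \le \gap$, and are $3\sqrt{H\gap} = 3\sqrt{\epsilon\gap/\delta}$-Lipschitz. The hypothesis $G \ge 3\sqrt{2\epsilon\gap}/\sqrt{\delta}$ in the present theorem is exactly what is needed to ensure $3\sqrt{\epsilon\gap/\delta} \le G/\sqrt{2}$, so $F$ is $G/\sqrt{2}$-Lipschitz, which in particular makes $\|\nabla F(\bx)\| \le G/\sqrt{2}$ pointwise. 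Combined with the variance bound $\E[\|\O(\bx,\bz)-\nabla F(\bx)\|^2]\le \sigma^2 = G^2/2$, one obtains $\E[\|\O(\bx,\bz)\|^2] \le \|\nabla F(\bx)\|^2 + \sigma^2 \le G^2/2 + G^2/2 = G^2$, which matches the second-moment requirement of the statement. The lower bound furnished by Theorem~\ref{thm:smoothlowerbound} is $\Omega(\sigma^2 H \gap/\epsilon^4) = \Omega(G^2 \gap/\delta \epsilon^3)$ oracle queries to reach $\E[\|\nabla F(\bx)\|]\le 2\epsilon$; note that applying the smooth theorem with target $2\epsilon$ only rescales by a constant, and the condition $H\gap /(12\cdot 156 \cdot 4\epsilon^2)\ge 2$ in that theorem is equivalent to a mild lower bound on $\gap/\epsilon\delta$ that is implicit in our scaling hypothesis.

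Finally, to conclude, apply Proposition~\ref{prop:nonsmoothtosmooth}: for the $H$-smooth $F$ constructed above with $H\delta = \epsilon$, any point $\bx$ with $\|\nabla F(\bx)\|_\delta \le \epsilon$ automatically satisfies $\|\nabla F(\bx)\| \le \epsilon + H\delta = 2\epsilon$. By Jensen's/Markov's inequality this also transfers to expectations: if $\E[\|\nabla F(\bx)\|_\delta]\le \epsilon$, then in expectation $\E[\|\nabla F(\bx)\|]\le 2\epsilon$ after the same reduction argument applied pointwise. Hence any algorithm finding $\E[\|\nabla F(\bx)\|_\delta]\le \epsilon$ also finds $\E[\|\nabla F(\bx)\|]\le 2\epsilon$, and must therefore spend at least $\Omega(G^2\gap/\delta \epsilon^3)$ stochastic oracle queries, as required.

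The main obstacle I anticipate is purely bookkeeping: ensuring the second-moment bound on the oracle is genuinely $G^2$ (not merely $G^2/2$), which requires splitting the oracle into the signal $\nabla F(\bx)$ (bounded via the Lipschitz constant, hence the $3\sqrt{2}$ factor in the hypothesis on $G$) plus the noise (bounded by $\sigma$), and verifying that the parameter regime $G \ge 3\sqrt{2\epsilon\gap}/\sqrt{\delta}$ precisely matches both this decomposition and the side condition of Theorem~\ref{thm:smoothlowerbound}. Once these quantitative checks are in place, the reduction is essentially mechanical.
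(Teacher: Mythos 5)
Your argument reproduces the paper's proof: apply Theorem~\ref{thm:smoothlowerbound} with $H=\epsilon/\delta$, noise scale $\sigma=G/\sqrt{2}$, and target accuracy $\epsilon'=2\epsilon$, then use Proposition~\ref{prop:nonsmoothtosmooth} (really the pointwise fact from its proof, $\|\nabla F(\bx)\|\le \|\nabla F(\bx)\|_\delta + H\delta$, which passes to expectations by linearity alone --- no Jensen or Markov needed) to transfer the lower bound from $\|\nabla F\|$ to $\|\nabla F\|_\delta$. The one inaccuracy is your claim that the side condition $\frac{H\gap}{12\cdot 156\cdot 4\epsilon'^2}\ge 2$ of Theorem~\ref{thm:smoothlowerbound} is ``implicit in the scaling hypothesis'': under your substitutions it becomes $\gap\gtrsim \delta\epsilon$, which is an independent constraint (a floor on $\gap$) and is \emph{not} implied by $G\ge 3\sqrt{2\epsilon\gap}/\sqrt{\delta}$ (a ceiling on $\gap$ given $G$); the paper quietly omits this regime restriction too, so it is not a gap unique to your write-up, but the claim that it follows from the hypothesis on $G$ is wrong as stated.
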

\begin{proof}
From Theorem~\ref{thm:smoothlowerbound}, for any $H$, $\epsilon'$, $\gap$, and $\sigma$ we have a distribution over $C^\infty$ functions $F$ and oracles $\O$ such that $F$ is $H$-smooth, $11\sqrt{H \gap}$-Lischitz and $F(0)-\inf F(\bx)\le \gap$ and $\O$ has variance $\sigma^2$ such that $A$ requires $\Omega(H \gap \sigma^2/\epsilon'^4)$ iterations to output a point $\bx$ such that $\E[\|\nabla F(\bx)\|]\le \epsilon'$. Set $\sigma = G/\sqrt{2}$, $H = \epsilon/\delta$ and $\epsilon'=2\epsilon$. Then, we see that $\O$ has variance $G^2/2$, and $F$ is $11\sqrt{H\gap} = \frac{11\sqrt{\epsilon \gap}}{\sqrt{\delta}}\le G/\sqrt{2}$-Lipschitz so that $\E[\|\O(\bx, \bz)\|^2]\le G^2$. Further by Proposition!\ref{prop:nonsmoothtosmooth}, if $\|\nabla F(\bx)\|_\delta \le \epsilon$, then $\|\nabla F(\bx)\|\le \epsilon + H\delta = 2\epsilon$. Therefore, since $A$ cannot output a point $\bx$ with $\E[\|\nabla F(\bx)\|]\le \epsilon'=2\epsilon$ in less than $\Omega( H\gap\sigma^2/\epsilon'^4)$ iterations, we see that $A$ also cannot output a point $\bx$ with $\E[\|\nabla F(\bx)\|_\delta]\le \epsilon$ in less than $\Omega( H\gap\sigma^2/\epsilon'^4)=\Omega(\gap G^2/\epsilon^3\delta)$ iterations.
\end{proof}

\subsection{Definition and Properties of $q_B$}\label{sec:qfunc}

Consider the function $q_{B,d}:\R^d \to \R$ defined by
\begin{align*}
q_{B,d}(\bx) = \frac{\|\bx\|^2}{\sqrt{1+\|\bx\|^2/B^2}} = \bx^\top\rho_{B,d}(\bx)~.
\end{align*}

This function has the following properties, all of which follow from direct calculuation:
\begin{Proposition}\label{prop:q_bounds}
$q_{B,d}$ satisfies:
\begin{enumerate}
\item 
\begin{align*}
\nabla q_{B,d}(\bx) = \frac{2\bx}{\sqrt{1+\|\bx\|^2/B^2}} - \frac{\bx\|\bx\|^2}{B^2(1+\|\bx\|^2/B^2)^{3/2}} = \left(2-\frac{\|\rho_B(\bx)\|^2}{B^2}\right) \rho_B(\bx)~.
\end{align*}
\item 
\begin{align*}
\nabla^2 q_{B,d}(\bx) &= \frac{1}{\sqrt{1+\|\bx\|^2/B^2}}\left(2I - \frac{3\bx\bx^\top}{B^2(1+\|\bx\|^2/B^2)} - \frac{\|\bx\|^2 I}{B^2(1+\|\bx\|^2/B^2)} + \frac{2\|\bx\|^2 \bx\bx^\top}{B^4(1+\|\bx\|^2/B^2)^2}\right)~.
\end{align*}

\item
\begin{align*}
\frac{\|\bx\|}{\sqrt{1+\|\bx\|^2/B^2}}&\le \|\nabla q_{B,d}(\bx)\|\le \frac{3\|\bx\|}{\sqrt{1+\|\bx\|^2/B^2}}\le 3B~.
\end{align*}

\item
\begin{align*}
\|\nabla^2 q_{B,d}(\bx)\|_{\text{op}}&\le \frac{8}{\sqrt{1+\|\bx\|^2/B^2}}\le 8~.
\end{align*}
\end{enumerate}
\end{Proposition}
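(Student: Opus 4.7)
The plan is to establish all four claims by direct computation, exploiting the radial symmetry of $q_{B,d}$. Since $q_{B,d}$ depends on $\bx$ only through $r:=\|\bx\|$, write $q_{B,d}(\bx)=h(r)$ with $h(r)=r^2/\sqrt{1+r^2/B^2}$. The gradient and Hessian then admit the radial decompositions
\[
\nabla q_{B,d}(\bx)=\frac{h'(r)}{r}\bx,\qquad \nabla^2 q_{B,d}(\bx)=\frac{h'(r)}{r}\left(I-\frac{\bx\bx^\top}{r^2}\right)+h''(r)\frac{\bx\bx^\top}{r^2},
\]
so the whole proof reduces to computing $h'$, $h''$ and simplifying.

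For claim (1), a quotient-rule calculation gives $h'(r)=(2r+r^3/B^2)/(1+r^2/B^2)^{3/2}$; multiplying by $\bx/r$ and splitting the numerator into $2r$ and $r^3/B^2$ produces the first displayed expression, while pulling $(1+r^2/B^2)^{-1/2}$ out twice and substituting $\rho_B(\bx)=\bx/\sqrt{1+r^2/B^2}$ gives the alternative form. For claim (2), differentiate $h'$ once more via the quotient rule to obtain $h''(r)$, then substitute into the radial-Hessian decomposition and collect terms: the coefficient $\frac{h'(r)}{r}$ produces the $\tfrac{2}{\sqrt{1+r^2/B^2}}I$ term together with the $-\tfrac{\|\bx\|^2 I}{B^2(1+\|\bx\|^2/B^2)^{3/2}}$ contribution, while $h''(r)$ yields the two dyadic terms in $\bx\bx^\top$. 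This is purely mechanical algebra.

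For claim (3), use $\|\nabla q_{B,d}(\bx)\|=(2+r^2/B^2)r/(1+r^2/B^2)^{3/2}$. The lower bound is the trivial $2+r^2/B^2\ge 1+r^2/B^2$; the first upper bound is $2+r^2/B^2\le 3(1+r^2/B^2)$; the final bound $r/\sqrt{1+r^2/B^2}\le B$ rearranges to $r^2\le B^2+r^2$. For claim (4), apply the operator-norm triangle inequality to the four-term expression from (2), bound $\|\bx\bx^\top\|_{\text{op}}=r^2$, and use $(r^2/B^2)/(1+r^2/B^2)\le 1$ wherever it appears; the four terms contribute numerical constants summing to $2+3+1+2=8$ inside a common factor of $1/\sqrt{1+r^2/B^2}$, and $\sqrt{1+r^2/B^2}\ge 1$ gives the final $\le 8$.

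The whole argument is bookkeeping rather than genuine mathematics, so the only real obstacle is keeping the quotient/product rule bookkeeping straight so that the $B^{-2}$ factors land in the correct places, and separating the dyadic part from its orthogonal complement cleanly when writing the Hessian. Both pitfalls are avoided by working first with $h(r)$ and $h'(r)$ as scalar functions and only transferring to the radial-matrix decomposition at the very end.
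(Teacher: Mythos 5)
Your radial-decomposition approach --- writing $q_{B,d}(\bx)=h(r)$, computing $h',h''$, and using $\nabla^2 q=\tfrac{h'(r)}{r}(I-\tfrac{\bx\bx^\top}{r^2})+h''(r)\tfrac{\bx\bx^\top}{r^2}$ --- is a correct and clean way to do this, and the paper's own ``proof'' is just the phrase ``follows from direct calculation,'' so your plan is the right one. The issue is that you assert the mechanical algebra ``gives the alternative form'' in (1) and is ``purely mechanical'' in (2) without actually running it, and if you do run it you will find that it does \emph{not} reproduce the formulas as printed. For (1): with $s=1+r^2/B^2$ one has $\rho_B(\bx)=\bx/\sqrt s$, $\|\rho_B(\bx)\|^2/B^2=(r^2/B^2)/s$, and
\[
\frac{h'(r)}{r}\bx=\frac{2+r^2/B^2}{s^{3/2}}\bx=\left(2-\frac{\|\rho_B(\bx)\|^2}{B^2}\right)\rho_B(\bx),
\]
i.e.\ the second equality in (1) should carry a minus sign, not a plus (and indeed the minus-sign version is exactly what is used later in the proof of Lemma~\ref{lem:hardbound}). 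For (2): the quotient rule gives $h''(r)=(2-r^2/B^2)/s^{5/2}$, so the $\bx\bx^\top$ coefficient in the radial form is $\tfrac{1}{r^2}\bigl(h''(r)-\tfrac{h'(r)}{r}\bigr)=-\tfrac{4+r^2/B^2}{B^2 s^{5/2}}$, while the paper's four displayed terms combine to $-\tfrac{3+r^2/B^2}{B^2 s^{5/2}}$. A one-line sanity check in dimension one with $B=1$, $x=1$: the true value is $q''(1)=\tfrac{1}{4\sqrt2}$, but the displayed formula evaluates to $\tfrac{1}{2\sqrt2}$.

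This matters for your argument in (4). You propose to triangle-inequality the four terms of the printed Hessian and get $2+3+1+2=8$; but that is bounding the operator norm of the \emph{wrong matrix}. Applying the same crude bound to the corrected Hessian gives $2+4+1+3=10$, so the naive argument overshoots the claimed constant. The clean fix is already available in your setup: for $\bx\ne 0$ the radial decomposition diagonalizes $\nabla^2 q_{B,d}(\bx)$ with eigenvalue $h'(r)/r$ on the tangential subspace (multiplicity $d-1$) and $h''(r)$ on the radial direction, so
\[
\|\nabla^2 q_{B,d}(\bx)\|_{\mathrm{op}}=\max\!\left(\frac{2+r^2/B^2}{(1+r^2/B^2)^{3/2}},\ \frac{|2-r^2/B^2|}{(1+r^2/B^2)^{5/2}}\right)\le\frac{2}{\sqrt{1+r^2/B^2}}\le\frac{8}{\sqrt{1+r^2/B^2}},
\]
which gives (4) directly (the case $\bx=0$ gives $2I$ with norm $2\le 8$). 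Claim (3) as you argue it is fine. The upshot: your method is sound, but you should actually carry the algebra through and note that the displayed formulas in (1) and (2) contain typos; once you do, the eigenvalue route for (4) is both correct and tighter than the printed bound.
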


\section{Proof of Theorem~\ref{thm:nonsmoothogd_coordinate}}

\label{sec:proof_nonsmoothogd_coordinate}

First, we state and prove a theorem analogous to Theorem~\ref{thm:nonsmooth}.
\begin{Theorem}
\label{thm:nonsmooth_coordinate}
Assume $F:\R^d \to \R$ is well-behaved.
In Algorithm~\ref{alg:template_no_epochs}, set $s_n$ to be a random variable sampled uniformly from $[0,1]$. Set $T,K \in \N$ and $M=KT$.
For $i=1, \dots, d$, set $u^k_i = -D_\infty\frac{\sum_{t=1}^T\frac{\partial F\left( \bw^k_t\right)}{\partial x_i}}{\left|\sum_{t=1}^T \frac{\partial F\left( \bw^k_t\right)}{\partial x_i}\right|}$ for some $D_\infty>0$.
Finally, suppose $\text{Var}(g_{n,i})\le \sigma^2_i$ for $i=1, \dots, d$. Then, we have
\begin{align*}
\E&\left[\frac{1}{K}\sum_{k=1}^K \left\|\frac{1}{T}\sum_{t=1}^T \nabla F(\bw^k_t)\right\|_1\right]
\le \frac{F(\bx_0)-F^\star}{D_\infty M} + \frac{\E[R_T(\bu^1,\dots, \bu^K)]}{D_\infty M}+\frac{D_\infty\sum_{i=1}^d \sigma _i}{\sqrt{T}}~.
\end{align*}
\end{Theorem}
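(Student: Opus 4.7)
The plan is to mimic the proof of Theorem~\ref{thm:nonsmooth}, but replace the self-duality of the $L_2$ norm with the $L_1$/$L_\infty$ Hölder pairing and handle the stochastic noise coordinate-by-coordinate. I would start by invoking Theorem~\ref{thm:otnc} with the piecewise-constant comparator $\bu_n = \bu^{\lceil n/T\rceil}$, which immediately gives
\[
\E[F(\bx_M)] = F(\bx_0) + \E[R_T(\bu^1,\dots,\bu^K)] + \E\!\left[\sum_{n=1}^M \langle \bg_n, \bu_n\rangle\right].
\]
Since $F(\bx_M)\geq F^\star$, everything reduces to upper bounding the last expectation.

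Next I would regroup the sum by block and split the stochastic gradient into its mean and noise, writing $\sum_n \langle \bg_n, \bu_n\rangle = \sum_{k=1}^K \langle \sum_t \nabla F(\bw^k_t), \bu^k\rangle + \sum_{k=1}^K \langle \sum_t (\bg^k_t - \nabla F(\bw^k_t)), \bu^k\rangle$. For the first group, the definition of $u^k_i$ is exactly the choice that saturates Hölder's inequality coordinatewise, so
\[
\left\langle \sum_t \nabla F(\bw^k_t), \bu^k\right\rangle = -D_\infty \sum_{i=1}^d \left|\sum_t \tfrac{\partial F}{\partial x_i}(\bw^k_t)\right| = -D_\infty T\left\|\tfrac{1}{T}\sum_t \nabla F(\bw^k_t)\right\|_1.
\]

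For the noise group, Hölder with $\|\bu^k\|_\infty\le D_\infty$ gives $\langle \sum_t(\bg^k_t-\nabla F(\bw^k_t)),\bu^k\rangle \le D_\infty \sum_i |\sum_t (g^k_{t,i}-\tfrac{\partial F}{\partial x_i}(\bw^k_t))|$. The crucial observation is that coordinatewise the summands $g^k_{t,i}-\tfrac{\partial F}{\partial x_i}(\bw^k_t)$ form a martingale difference sequence with per-step conditional variance at most $\sigma_i^2$ (because $\bw^k_t$ is determined by the algorithm's history before $\bz^k_t$ is sampled). Hence Jensen's inequality, followed by orthogonality of martingale differences, yields
\[
\E\!\left[\left|\sum_t (g^k_{t,i}-\tfrac{\partial F}{\partial x_i}(\bw^k_t))\right|\right] \le \sqrt{\E\!\left[\left(\sum_t (g^k_{t,i}-\tfrac{\partial F}{\partial x_i}(\bw^k_t))\right)^{\!2}\right]} \le \sigma_i\sqrt{T}.
\]
Summing over $i$ and $k$, the noise contribution is bounded by $D_\infty K\sqrt{T}\sum_i \sigma_i$.

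Combining these pieces, rearranging, and dividing by $D_\infty M = D_\infty KT$ produces the stated inequality. The only step requiring care is the coordinatewise martingale argument in the noise bound; everything else is a direct coordinatewise adaptation of the $L_2$ proof, so I expect no serious obstacle beyond bookkeeping.
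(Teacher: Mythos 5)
Your proof is correct and follows essentially the same route as the paper's: apply Theorem~\ref{thm:otnc} with the block-constant comparator, split $\langle \bg_n,\bu_n\rangle$ into the mean part (which the definition of $\bu^k$ turns into $-D_\infty T\|\tfrac{1}{T}\sum_t \nabla F(\bw^k_t)\|_1$) and the noise part (bounded by H\"older plus a coordinatewise Jensen/martingale-orthogonality argument giving $D_\infty K\sqrt{T}\sum_i\sigma_i$). The only difference is that you spell out the martingale argument explicitly where the paper leaves it implicit.
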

\begin{proof}
In Theorem~\ref{thm:otnc}, set $\bu_{n}$ to be equal to $\bu^1$ for the first $T$ iterations, $\bu^2$ for the second $T$ iterations and so on. In other words, $\bu_n=\bu^{mod(n,T)+1}$ for $n=1, \dots,N$.

From Theorem~\ref{thm:otnc}, we have
\begin{align*}
    \E[F(\bx_M)]&= F(\bx_0) + \E[R_T(\bu^1,\dots,\bu^K)] + \E\left[\sum_{n=1}^M\langle \bg_n, \bu_n\rangle\right]~.
\end{align*}
Now, since $u_{k,i} = -D_\infty\frac{\sum_{t=1}^T\frac{\partial F\left( \bw^k_t\right)}{\partial x_i}}{\left|\sum_{t=1}^T \frac{\partial F\left( \bw^k_t\right)}{\partial x_i}\right|}$, $\E[\bg_n] = \nabla F(\bw_n)$, and $\text{Var}(g_{n,i})\le \sigma^2_i$ for $i=1,\dots, d$, we have
\begin{align*}
    \E\left[\sum_{n=1}^M\langle \bg_n, \bu_n\rangle \right]
    &\le \E\left[\sum_{k=1}^K\left\langle \sum_{t=1}^T \nabla F(\bw_t^k), \bu^k\right\rangle + D_\infty \sum_{k=1}^K\left\|\sum_{t=1}^T (\nabla F(\bw_t^k) -\bg_n)\right\|_1 \right]\\
    &\le \E\left[\sum_{k=1}^K\left\langle \sum_{t=1}^T \nabla F(\bw_t^k), \bu^k\right\rangle\right] + D_\infty K\sqrt{T} \sum_{i=1}^d \sigma _i\\
    &= \E\left[-\sum_{k=1}^K D_\infty T\left\|\frac{1}{T}\sum_{t=1}^T \nabla F\left( \bw^k_t\right)\right\|_1\right] + D_\infty K\sqrt{T} \sum_{i=1}^d \sigma _i~.
\end{align*}
Putting this all together, we have
\[
F^\star
\leq \E[F(\bx_N)]
\le  F(\bx_0) + \E[R_T(\bu^1,\dots,\bu^k)]+D_\infty K\sqrt{T} \sum_{i=1}^d \sigma _i -D_\infty T \sum_{k=1}^K\E\left[\left\|\frac{1}{T}\sum_{t=1}^T \nabla F\left( \bw^k_t\right)\right\|_1\right]~.
\]
Dividing by $K T D_\infty = D_\infty M$ and reordering, we have the stated bound.
\end{proof}

We can now prove Theorem~\ref{thm:nonsmoothogd_coordinate}.
\begin{proof}[Proof of Theorem~\ref{thm:nonsmoothogd_coordinate}]
Since $\A$ guarantees $\|\bDelta_n\|_\infty\le D_\infty$, for all $n<n'\leq T+n-1$, we have
\begin{align*}
\|\bw_n - \bw_{n'}\|_\infty&=\|\bx_{n}-(1-s_n)\bDelta_n -\bx_{n'-1} + s_{n'}\bDelta_{n'}\|_\infty\\
&\le \left\|\sum_{i=n+1}^{n'-1} \bDelta_i\right\|_\infty +\|\bDelta_n\|_\infty+\|\bDelta_{n'}\|_\infty\\
&\le D_\infty((n'-1) -(n+1) +1) + 2D_\infty \\
&= D_\infty(n'-n+1)\\
&\le D_\infty T ~.
\end{align*}
Therefore, we clearly have $\|\bw^k_t-\overline{\bw}^k\|_\infty\le D_\infty T=\delta$.

Note that from the choice of $K$ and $T$ we have $M=KT\geq N-T\geq N/2$. Now, observe that  $\text{Var}(g_{n,i})\le \E[g_{n_i}^2]\le G_i^2$. Thus, applying Theorem~\ref{thm:nonsmooth_coordinate} in concert with the additional assumption $\E[R_T(\bu^1,\dots,\bu^K)]\le D_\infty K \sqrt{T} \sum_{i=1}^d G_i$, we have
\begin{align*}
\E\left[\frac{1}{K}\sum_{k=1}^K \left\|\frac{1}{T}\sum_{t=1}^T \nabla F(\bw^k_t)\right\|_1\right]
&\le 2\frac{F(\bx_0)-F^\star}{D_\infty N} + 2\frac{K D_\infty \sqrt{T} \sum_{i=1}^d G_i}{D_\infty N}+\frac{\sum_{i=1}^d G_i}{\sqrt{T}}\\
&=\frac{2T(F(\bx_0)-F^\star)}{\delta N} +\frac{3\sum_{i=1}^d G_i}{\sqrt{T}}\\
&\le \max\left(\frac{5(\sum_{i=1}^d G_i)^{2/3}(F(\bx_0)-F^\star)^{1/3}}{(N\delta)^{1/3}}, \frac{6 \sum_{i=1}^d G_i}{\sqrt{N}}\right)+ \frac{2(F(\bx_0)-F^\star)}{\delta N},
\end{align*}
where the last inequality is due to the choice of $T$.

Now to conclude, observe that $\|\bw^k_t-\overline{\bw}^k\|_\infty\le \delta$ for all $t$ and $k$, and also that $\overline{\bw}^k =\frac{1}{T}\sum_{t=1}^T \bw^k_t$. Therefore $\bgamma=\{\bw^k_1,\dots,\bw^k_T\}$ satisfies the conditions in the infimum in Definition~\ref{def:critmeasure_coordinate} so that $\|\nabla F(\overline{\bw}^k)\|_{1,\delta} \le \left\|\frac{1}{T}\sum_{t=1}^T \nabla F(\bw^k_t)\right\|_1$.
\end{proof}

\section{Directional Derivative Setting}\label{sec:directional}

In the main text, our algorithms make use of a stochastic gradient oracle. However, the prior work of \cite{zhang2020complexity} instead considers a stochastic  \emph{directional} gradient oracle. This is a less common setup, and other works (e.g., \cite{davis2021gradient}) have also taken our route of tackling non-smooth optimization via an oracle that returns gradients at points of differentiability.

Nevertheless, all our results extend easily to the exact setting of \cite{zhang2020complexity} in which $F$ is  Lipschitz and directionally differentiable and we have access to  a stochastic directional gradient oracle rather than a stochastic gradient oracle. To quantify this setting, we need a bit more notation which we copy directly from \cite{zhang2020complexity} below:

First, from \cite{clarke1990optimization} and \cite{zhang2020complexity}, the \emph{generalized directional derivative} of a  function $F$ in a direction $\bd$ is
\begin{align}
F^{\circ}(\bx, \bd) = \limsup_{\by\to  \bx\ t\downarrow 0}  \frac{f(\by + t\bd) - f(\by)}{t}~. \label{eqn:clarke}
\end{align}
Further, the \emph{generalized gradient} is the set
\begin{align*}
\partial F(\bx) = \{\bg:\ \langle  \bg,\bd\rangle \le \langle F^{\circ}(\bx, \bd),\bd\rangle\text{ for all }\bd\}~.
\end{align*}

Finally, $F:\R^d\to \R$ is \emph{Hadamard directionally differentiable} in the direction $\bv\in \R^d$ if for any function $\psi:\R_+\to \R^d$ such that $\lim_{t\to 0}\frac{\psi(t)-\psi(0)}{t}=\bv$ and $\psi(0)=\bx$, the following limit exists:
\begin{align*}
    \lim_{t\to 0} \frac{F(\psi(t)) -F(\bx)}{t}~.
\end{align*}
If $F$ is Hadamard directionally differentiable, then the  above limit is denoted $F'(\bx, \bv)$. When $F$ is Hadamard directionally differentiable for all $\bx$ and $\bv$, then we  say simply that $F$ is directionally differentiable.

With these definitions, a \emph{stochastic directional oracle} for a Lipschitz, directionally differentiable, and bounded from below function $F$ is an oracle $\O(\bx, \bv, \bz)$ that outputs $\bg\in \partial  F(\bx)$ such that $\langle \bg,  \bv\rangle=F'(\bx,\bv)$. In this case, \cite{zhang2020complexity} shows (Lemma 3) that $F$ satisfies an alternative notion of well-behavedness: 
\begin{align}
    F(\by)-F(\bx)  = \int_0^1 \langle \E[\O(\bx+t(\by-\bx),\by-\bx,\bz)],\by-\bx\rangle dt~.\label{eqn:wellbehaved_direction}
\end{align}

Next, we define:

\begin{Definition}\label{def:crit_direction}
A point $\bx$ is a $(\delta,\epsilon)$ stationary point of $F$ for the generalized gradient if there is a set of points $\bgamma$ contained in the ball of radius $\delta$ centered at $\bx$ such that for $\by$ selected uniformly at random from $\bgamma$, $\E[\by]=\bx$ and for all $\by$ there is a choice of $\bg_\by\in \partial  F(\by)$ such that $\|\E[\bg_\by]\|\le \epsilon$.
\end{Definition}

Similarly, we have the definition:
\begin{Definition}\label{def:critmeasure_direction}
Given  a point $\bx$, and a number $\delta>0$, define:
\begin{align*}
    \|\partial F(\bx)\|_\delta \triangleq \inf_{\bgamma\subset B(\bx, \delta), \frac{1}{|\bgamma|}\sum_{\by\in  \bgamma}\by=\bx, \bg_\by\in \partial F(\by)}\left\|\frac{1}{|\bgamma|}\sum_{\by\in \bgamma} \bg_\by\right\|~.
\end{align*}
\end{Definition}

In fact, whenever a locally Lipschitz function $F$ is differentiable at a point $\bx$, we have that $\nabla F(\bx)\in \partial F(\bx)$, so that $\|\partial F(\bx)\|_\delta \le \|\nabla F(\bx)\|_\delta$. Thus our results  in the main text also  bound $\|\partial F(\bx)\|_\delta$. However, while a gradient oracle is also directional derivative oracle, a directional derivative oracle is only guaranteed to be a gradient oracle if $F$ is \emph{continuously} differentiable at the queried point $\bx$. This technical issue means that when we have access to a directional derivative oracle rather than a gradient oracle, we will instead only bound $\|\partial F(\bx)\|_\delta$ rather than $\|\nabla F(\bx)\|_\delta$.

Despite this technical complication, our overall strategy is essentially identical. The key observation is that the only time at which we used the properties of the gradient previously was when we invoked well-behavedness of $F$. When we have a directional derivative instead of the gradient, the alternative notion of well-behavedness in (\ref{eqn:wellbehaved_direction}) will play an identical role. Thus, our approach is simply to replace the call to $\O(\bw_n, \bz_n)$ in Algorithm~\ref{alg:template_no_epochs} with a call instead to $\O(\bw_n,  \bDelta_n, \bz_n)$ (see Algorithm~\ref{alg:template_no_epochs_direction}). With this change, all of our analysis in the main text applies almost without modification. Essentially, we only need to change notation in a few places to reflect the updated definitions.

\begin{algorithm}[t]
   \caption{Online-to-Non-Convex Conversion (directional derivative oracle version)}
   \label{alg:template_no_epochs_direction}
  \begin{algorithmic}
      \STATE{\bfseries Input: } Initial point $\bx_0$, $K \in\N$, $T \in \N$, online learning algorithm $\A$, $s_n$ for all $n$
      \STATE Set $M=K \cdot T$
      \FOR{$n=1\dots M$}
      \STATE Get $\bDelta_n$ from $\A$
      \STATE Set $\bx_n =\bx_{n-1}+\bDelta_n$
      \STATE Set $\bw_n = \bx_{n-1} + s_n\bDelta_n$
      \STATE Sample random $\bz_n$
      \STATE Generate directional derivative $\bg_n= \O(\bw_n, \bDelta_n, \bz_n)$
      \STATE Send $\bg_n$ to $\A$ as gradient
      \ENDFOR
      \STATE Set $\bw^k_t=\bw_{(k-1)T+t}$ for $k=1,\dots, K$ and $t=1,\dots, T$
      \STATE Set $\overline{\bw}^k = \frac{1}{T}\sum_{t=1}^T \bw^k_t$  for $k=1,\dots, K$
      \STATE{\bfseries Return} $\{\overline{\bw}^1, \dots, \overline{\bw}^K\}$
   \end{algorithmic}
\end{algorithm}

To begin this notational update, the counterpart to Theorem~\ref{thm:otnc} is:

\begin{Theorem}
\label{thm:otnc_directional}
Suppose $F$ is Lipschitz and directionally differentiable. 
With the notation in Algorithm~\ref{alg:template_no_epochs_direction}, if we let $s_n$ be independent random variables uniformly distributed in $[0,1]$, then for any sequence of vectors $\bu_1, \dots, \bu_N$, if  we have the \emph{equality}:
\begin{align*}
    \E[F(\bx_M)]&= F(\bx_0) + \E\left[\sum_{n=1}^M \langle \bg_n, \bDelta_n - \bu_n\rangle\right] 
    + \E\left[\sum_{n=1}^M \langle \bg_n, \bu_n\rangle\right]~.
\end{align*}
\end{Theorem}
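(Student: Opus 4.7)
The plan is to mirror the second half of the proof of Theorem~\ref{thm:otnc}, replacing the classical well-behavedness identity with the directional analogue~(\ref{eqn:wellbehaved_direction}) established by \cite{zhang2020complexity}. The new ingredient is that the oracle now takes the update direction as an argument, but Algorithm~\ref{alg:template_no_epochs_direction} is specifically designed to feed it $\bDelta_n$, so no deeper structural change is needed.

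First, I would apply (\ref{eqn:wellbehaved_direction}) with $\bx \leftarrow \bx_{n-1}$ and $\by \leftarrow \bx_n = \bx_{n-1}+\bDelta_n$, yielding
\begin{equation*}
F(\bx_n) - F(\bx_{n-1}) = \int_0^1 \langle \E_{\bz}[\O(\bx_{n-1}+s\bDelta_n, \bDelta_n, \bz)], \bDelta_n\rangle\, \mathrm{d}s,
\end{equation*}
which should be read conditionally on the filtration $\mathcal{F}_{n-1}$ generated by everything through the end of round $n-1$. Under this conditioning, $\bx_{n-1}$ and $\bDelta_n$ are deterministic (because $\A$ produces $\bDelta_n$ from $\bg_1,\dots,\bg_{n-1}$), while $s_n$ is independent and uniform on $[0,1]$ and $\bz_n$ is independent of $s_n$.

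Second, because $s_n$ is uniform on $[0,1]$ and independent of $\bz_n$, Fubini gives
\begin{equation*}
\E[\langle \bg_n, \bDelta_n\rangle \mid \mathcal{F}_{n-1}] = \int_0^1 \langle \E_{\bz}[\O(\bx_{n-1}+s\bDelta_n, \bDelta_n, \bz)], \bDelta_n\rangle\, \mathrm{d}s = F(\bx_n) - F(\bx_{n-1}),
\end{equation*}
the last equality by the previous display. Taking total expectation and summing over $n=1,\dots,M$, the left-hand side telescopes to $\E[F(\bx_M)] - F(\bx_0)$, yielding $\E[F(\bx_M)] = F(\bx_0) + \E[\sum_{n=1}^M \langle \bg_n,\bDelta_n\rangle]$.

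Finally, I would split $\langle \bg_n,\bDelta_n\rangle = \langle \bg_n, \bDelta_n - \bu_n\rangle + \langle \bg_n, \bu_n\rangle$ inside the sum to obtain the stated decomposition. The only potentially subtle step is the measurability argument in the second paragraph, i.e., that $\bDelta_n$ is $\mathcal{F}_{n-1}$-measurable and therefore independent of $(s_n,\bz_n)$; this is immediate from the order of operations in Algorithm~\ref{alg:template_no_epochs_direction}, but it is what licenses moving the expectations inside the inner product and the integral. Notice that we do \emph{not} obtain an almost-sure (deterministic) counterpart analogous to the first display in Theorem~\ref{thm:otnc}, because (\ref{eqn:wellbehaved_direction}) is itself an identity in expectation over $\bz$ rather than pointwise.
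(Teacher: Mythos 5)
Your proof is correct and follows essentially the same route as the paper: both invoke the directional well-behavedness identity~(\ref{eqn:wellbehaved_direction}) with $\bx\leftarrow\bx_{n-1}$, $\by\leftarrow\bx_n$, combine the $s$-integral with the $\bz_n$-expectation via Fubini (using that $s_n$ is uniform on $[0,1]$ and independent), then insert $\bu_n$ and telescope. You are merely more explicit than the paper about the conditioning on $\mathcal{F}_{n-1}$ and about why the almost-sure identity of Theorem~\ref{thm:otnc} is unavailable here, both of which are correct clarifications rather than departures.
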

\begin{proof}
\begin{align*}
    F
    (\bx_n) - F(\bx_{n-1}) 
    &=\int_0^1 \! \langle \E[\O(\bx_{n-1} + s(\bx_n - \bx_{n-1}),\bx_n-\bx_{n-1},\bz_n)],\bx_n-\bx_{n-1}\rangle \, \mathrm{d}s\\
    &= \E[\langle \bg_n, \bDelta_n\rangle]\\
    &= \E[\langle \bg_n, \bDelta_n-\bu_n\rangle  + \langle \bg_n,\bu_n\rangle]~.
\end{align*}
Where in  the second line we have used the definition  $\bg_n=\O(\bx_{n-1}+s_n(\bx_n-\bx_{n-1}),\bx_n-\bx_{n-1},\bz_n)$, the assumption that $s_n$ is uniform on $[0,1]$, and Fubini theorem (as $\O$ is bounded by Lipschitzness of $F$).
Now, sum over $n$ and telescope to obtain the stated bound.

\end{proof}

Next, we have the following analog of Theorem~\ref{thm:nonsmooth}:

\begin{Theorem}
\label{thm:nonsmooth_direction}
With the notation in  Algorithm~\ref{alg:template_no_epochs_direction}, set $s_n$ to be a random variable sampled uniformly from $[0,1]$. Set $T,K \in \N$ and $M=KT$. Define $\bnabla_t^k = \E[\bg_{(k-1)T+t}]$.
Define $\bu^k = -D\frac{\sum_{t=1}^T\bnabla_t^k}{\left\|\sum_{t=1}^T \bnabla_t^k\right\|}$ for some $D>0$ for $k=1,\dots,K$.
Finally, suppose $\text{Var}(\bg_n)= \sigma^2$. Then:
\begin{align*}
\E
\left[\frac{1}{K}\sum_{k=1}^K \left\|\frac{1}{T}\sum_{t=1}^T \bnabla_t^k\right\|\right]
&\le \frac{F(\bx_0)-F^\star}{D M} + \frac{\E[R_T(\bu^1,\dots,\bu^K)]}{D M}+\frac{\sigma}{\sqrt{T}}~.
\end{align*}
\end{Theorem}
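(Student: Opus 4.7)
The plan is to mirror the proof of Theorem~\ref{thm:nonsmooth} almost verbatim, with Theorem~\ref{thm:otnc_directional} playing the role of Theorem~\ref{thm:otnc}. First I would invoke Theorem~\ref{thm:otnc_directional} with the block-constant comparator sequence $\bu_n = \bu^k$ for all $n$ in the $k$-th epoch, i.e.\ $(k-1)T < n \le kT$. This immediately yields the identity
\[
\E[F(\bx_M)] = F(\bx_0) + \E[R_T(\bu^1,\dots,\bu^K)] + \E\!\left[\sum_{n=1}^M \langle \bg_n, \bu_n\rangle\right],
\]
and combining with $F^\star \le \E[F(\bx_M)]$ reduces the task to upper-bounding the last expectation.

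Next, I would decompose each stochastic direction as $\bg_n = \bnabla_t^k + (\bg_n - \bnabla_t^k)$ for $n=(k-1)T+t$, where $\bnabla_t^k$ is interpreted as the conditional expectation of $\bg_n$ given the history (so that after taking the outer expectation the identity matches the definition $\bnabla_t^k = \E[\bg_n]$ in the statement). The ``signal'' part, thanks to the specific choice $\bu^k = -D\sum_t \bnabla_t^k / \|\sum_t \bnabla_t^k\|$, telescopes to
\[
\sum_{k=1}^K \Bigl\langle \sum_{t=1}^T \bnabla_t^k,\, \bu^k\Bigr\rangle = -DT \sum_{k=1}^K \Bigl\|\tfrac{1}{T}\sum_{t=1}^T \bnabla_t^k\Bigr\|.
\]
For the ``noise'' residual, I would use $\|\bu^k\|\le D$ together with Cauchy--Schwarz/Jensen: within each block the centered increments $\bg_t^k-\bnabla_t^k$ are martingale differences with second moment at most $\sigma^2$, hence $\E[\|\sum_t(\bg_t^k-\bnabla_t^k)\|]\le \sigma\sqrt{T}$, giving a total contribution of at most $DK\sigma\sqrt{T}$.

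Putting the pieces together and dividing through by $DM = DKT$ then delivers the stated bound exactly as in the proof of Theorem~\ref{thm:nonsmooth}. The only genuine obstacle is conceptual rather than technical: in the directional-oracle setting $\bnabla_t^k$ is not a gradient at $\bw_t^k$ but rather the conditional mean of the directional oracle output (averaging over both $s_n$ and $\bz_n$), so one must verify that centering around this quantity preserves the martingale-difference structure used to bound the noise. This is true because, conditional on the past iterates and $\bDelta_n$, the only remaining randomness in $\bg_n$ comes from the independent pair $(s_n,\bz_n)$, which ensures orthogonality of the centered increments and the second-moment bound of $\sigma^2$ required above.
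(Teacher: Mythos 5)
Your proposal is correct and follows essentially the same route as the paper's proof: invoke Theorem~\ref{thm:otnc_directional} with block-constant comparators, decompose $\langle \bg_n, \bu_n\rangle$ into a signal term that evaluates to $-DT\|\frac{1}{T}\sum_t\bnabla_t^k\|$ by the choice of $\bu^k$, and control the residual by Cauchy--Schwarz against $\|\bu^k\|=D$ together with the variance assumption giving $\E\bigl[\|\sum_t(\bg-\bnabla_t^k)\|\bigr]\le\sigma\sqrt{T}$. Your remark about interpreting $\bnabla_t^k$ as the conditional expectation of $\bg_n$ given the history (so that the centered increments form a martingale difference sequence) is exactly the right reading of the paper's shorthand $\bnabla_t^k=\E[\bg_{(k-1)T+t}]$, and mirrors how the gradient $\nabla F(\bw_t^k)$ plays that role in Theorem~\ref{thm:nonsmooth}.
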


\begin{proof}
The proof is essentially identical to that of Theorem~\ref{thm:nonsmooth}.
In Theorem~\ref{thm:otnc_directional}, set $\bu_{n}$ to be equal to $\bu^1$ for the first $T$ iterations, $\bu^2$ for the second $T$ iterations and so on. In other words, $\bu_n=\bu^{mod(n,T)+1}$ for $n=1, \dots,M$.
So, we have
\begin{align*}
    \E[F(\bx_M)]
    &= F(\bx_0) + \E[R_T(\bu^1,\dots,\bu^K)]
    + \E\left[\sum_{n=1}^M\langle \bg_n, \bu_n\rangle\right]~.
\end{align*}
Now, since $\bu^k=-D \frac{\sum_{t=1}^T\bnabla_t^k}{\left\|\sum_{t=1}^T \bnabla_t^k\right\|}$, and $\text{Var}(\bg_n)= \sigma^2$, we have
\begin{align*}
    \E\left[\sum_{n=1}^M\langle \bg_n, \bu_n\rangle\right]
    &
    \le \E\left[\sum_{k=1}^K\left\langle \sum_{t=1}^T \bnabla_t^k, \bu^k\right\rangle\right]
    + \E\left[D \sum_{k=1}^K\left\|\sum_{t=1}^T(\bnabla_t^k -\bg_{(k-1)T+t})\right\| \right]\\
    &\le \E\left[\sum_{k=1}^K\left\langle \sum_{t=1}^T \bnabla_t^k, \bu^k\right\rangle\right] +D \sigma K\sqrt{T}\\
    &= \E\left[-\sum_{k=1}^K DT\left\|\frac{1}{T}\sum_{t=1}^T \bnabla_t^k\right\|\right] +D \sigma K\sqrt{T}~.
\end{align*}

Putting this all together, we have
\begin{align*}
F^\star 
&\leq \E[F(\bx_M)]
\le  F(\bx_0) + \E[R_T(\bu^1,\dots, \bu^K)] 
+\sigma DK\sqrt{T}-DT \sum_{k=1}^K\E\left[\left\|\frac{1}{T}\sum_{t=1}^T \nabla F\left( \bw^k_t\right)\right\|\right]~.
\end{align*}
Dividing by $K D T = D M$ and reordering, we have the stated bound.
\end{proof}

Finally, we instantiate Theorem~\ref{thm:nonsmooth_direction} with online gradient descent to obtain the analog of Corollary~\ref{thm:nonsmoothogd}. This result establishes that the online-to-batch conversion finds an $(\delta,\epsilon)$ critical point in $O(1/\epsilon^3\delta)$ iterations, even when using a directional derivative oracle. Further, our lower bound construction makes use of  continuously differentiable functions, for which the directional derivative oracle and the standard gradient oracle must coincide. Thus the $O(1/\epsilon^3\delta)$ complexity is optimal in this setting as well.

\begin{Corollary}\label{thm:nonsmoothogd_direction}
Suppose we have a budget of $N$ gradient evaluations.
Under the assumptions and notation of Theorem~\ref{thm:nonsmooth_direction}, suppose in addition $\E[\|\bg_n\|^2]\le G^2$ and that $\A$ guarantees $\|\bDelta_n\|\le D$ for some user-specified $D$ for all $n$ and ensures the worst-case $K$-shifting regret bound $\E[R_T(\bu^1, \dots, \bu^K)]\le DGK\sqrt{T}$ for all $\|\bu^k\|\le D$ (e.g., as achieved by the OGD algorithm that is reset every $T$ iterations). Let $\delta>0$ be an arbitrary number. Set $D=\delta/T$, $T=\min(\lceil(\frac{GN\delta}{F(\bx_0)-F^\star})^{2/3}\rceil,\frac{N}{2})$, and $K=\lfloor\frac{N}{T}\rfloor$.
Then,
for all $k$ and $t$, $\|\overline{\bw}^k-\bw^k_t\|\le \delta$.

Moreover, we have the inequality
\begin{align*}
\E&\left[\frac{1}{K}\sum_{k=1}^K \left\|\frac{1}{T}\sum_{t=1}^T \bnabla_t^k\right\|\right] \leq \frac{2(F(\bx_0)-F^\star)}{\delta N}
+ \max\left(\frac{5G^{2/3}(F(\bx_0)-F^\star)^{1/3}}{(N\delta)^{1/3}},\frac{6G}{\sqrt{N}}\right), 
\end{align*}
which implies
\begin{align*}
\frac{1}{K}&\sum_{t=1}^K \|\partial F(\overline{\bw}^k)\|_\delta\leq \frac{2(F(\bx_0)-F^\star)}{\delta N}
+ \max\left(\frac{5G^{2/3}(F(\bx_0)-F^\star)^{1/3}}{(N\delta)^{1/3}},\frac{6G}{\sqrt{N}}\right)~. 
\end{align*}
\end{Corollary}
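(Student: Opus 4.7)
The plan is to follow the proof of Corollary~\ref{thm:nonsmoothogd} almost line-for-line, with Theorem~\ref{thm:nonsmooth_direction} in place of Theorem~\ref{thm:nonsmooth} and one new ingredient at the end to translate an averaged bound on the conditional gradient means $\bnabla_t^k$ into a bound on $\|\partial F(\overline{\bw}^k)\|_\delta$. The first claim, $\|\overline{\bw}^k-\bw^k_t\|\le\delta$ for all $k,t$, uses only $\|\bDelta_n\|\le D=\delta/T$ together with the triangle inequality applied to $\bw_n-\bw_{n'}=(\bx_n-\bx_{n'-1})-(1-s_n)\bDelta_n+s_{n'}\bDelta_{n'}$; no property of the oracle enters, so the computation transfers verbatim from the proof of Corollary~\ref{thm:nonsmoothogd}.

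For the first displayed inequality I would feed the OGD $K$-shifting regret bound $\E[R_T(\bu^1,\dots,\bu^K)]\le DGK\sqrt{T}$ (still available here because the directional oracle is assumed to satisfy $\E[\|\bg_n\|^2]\le G^2$) into Theorem~\ref{thm:nonsmooth_direction}. Using $M=KT\ge N/2$ and $D=\delta/T$, the reduction of
\[
\frac{F(\bx_0)-F^\star}{DM}+\frac{DGK\sqrt{T}}{DM}+\frac{G}{\sqrt{T}}
\]
to the claimed maximum, with the specified choice of $T$, is exactly the algebra that appears at the end of the proof of Corollary~\ref{thm:nonsmoothogd}.

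The only genuinely new step is passing from averages of $\bnabla_t^k$ to $\|\partial F(\overline{\bw}^k)\|_\delta$. For each $k$ I would take $\bgamma=\{\bw^k_1,\dots,\bw^k_T\}$, which by the first claim lies in $B(\overline{\bw}^k,\delta)$ and averages to $\overline{\bw}^k$; it then suffices, per Definition~\ref{def:critmeasure_direction}, to exhibit for each $t$ some $\bg_{\bw^k_t}\in \partial F(\bw^k_t)$ with $\bg_{\bw^k_t}=\bnabla_t^k$. Interpreting $\bnabla_t^k$ as the conditional expectation of $\bg_{(k-1)T+t}$ given the history up through $\bw^k_t$ and $\bDelta_{(k-1)T+t}$, every realization of $\O(\bw^k_t,\bDelta_{(k-1)T+t},\bz_{(k-1)T+t})$ lies in the Clarke subdifferential $\partial F(\bw^k_t)$ by definition of a stochastic directional oracle; since $\partial F(\bw^k_t)$ is closed and convex, the conditional expectation remains in $\partial F(\bw^k_t)$, supplying the required selection. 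The main obstacle is purely notational: one must be careful to condition on enough randomness to make $\bw^k_t$ measurable but not on $\bz_{(k-1)T+t}$ itself, so that the convexity of the Clarke subdifferential is what guarantees membership. Once this convexity-plus-conditional-expectation observation is isolated, taking expectations of the pointwise bound and combining with the first displayed inequality yields the second.
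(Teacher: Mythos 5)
Your proof is correct and follows the paper's own proof essentially verbatim: the containment $\|\overline{\bw}^k-\bw^k_t\|\le\delta$ via $\|\bDelta_n\|\le D=\delta/T$, the application of Theorem~\ref{thm:nonsmooth_direction} with the OGD $K$-shifting regret bound and the choices $M=KT\ge N/2$, $D=\delta/T$, and the passage to $\|\partial F(\overline{\bw}^k)\|_\delta$ by taking $\bgamma=\{\bw^k_1,\dots,\bw^k_T\}$ in Definition~\ref{def:critmeasure_direction} are all identical. You helpfully make explicit a step the paper leaves implicit --- that $\bnabla_t^k$, being the conditional expectation (given the history determining $\bw^k_t$) of oracle outputs which all lie in the closed convex set $\partial F(\bw^k_t)$, itself lies in $\partial F(\bw^k_t)$ --- which is exactly what is needed to make the appeal to Definition~\ref{def:critmeasure_direction} rigorous.
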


\begin{proof}
Since $\A$ guarantees $\|\bDelta_n\|\le D$, for all $n<n'\leq T+n-1$, we have
\begin{align*}
\|\bw_n - \bw_{n'}\|&=\|\bx_{n}-(1-s_n)\bDelta_n -\bx_{n'-1} + s_{n'}\bDelta_{n'}\|\\
&\le \left\|\sum_{i=n+1}^{n'-1} \bDelta_i\right\| +\|\bDelta_n\|+\|\bDelta_{n'}\|\\
&\le D((n'-1) -(n+1) +1) + 2D\\
&= D(n'-n+1)
\le DT ~.
\end{align*}
Therefore, we clearly have $\|\bw^k_t-\overline{\bw}^k\|\le DT=\delta$.

Note that from the choice of $K$ and $T$ we have $M=KT\geq N-T\geq N/2$.
So, for the second fact, notice that $\text{Var}(\bg_n)\le G^2$ for all $n$. Thus, applying Theorem~\ref{thm:nonsmooth_direction} in concert with the additional assumption $\E[R_T(\bu^1,\dots, \bu^K)]\le DGK\sqrt{T}$, we have:
\begin{align*}
\E
\left[\frac{1}{K}\sum_{k=1}^K \left\|\frac{1}{T}\sum_{t=1}^T \bnabla_t^k\right\|\right]
&\le 2\frac{F(\bx_0)-F^\star}{DN} + 2\frac{KDG\sqrt{T}}{DN}+\frac{G}{\sqrt{T}}\\
&\le \frac{2T(F(\bx_0)-F^\star)}{\delta N} +\frac{3G}{\sqrt{T}}\\
&\le \max\left(\frac{5G^{2/3}(F(\bx_0)-F^\star)^{1/3}}{(N\delta)^{1/3}}, \frac{6G}{\sqrt{N}}\right)
+ \frac{2(F(\bx_0)-F^\star)}{\delta N},
\end{align*}
where the last inequality is due to the choice of $T$.

Finally, observe that $\|\bw^k_t-\overline{\bw}^k\|\le \delta$ for all $t$ and $k$, and also that $\overline{\bw}^k =\frac{1}{T}\sum_{t=1}^T \bw^k_t$. Therefore $\bgamma=\{\bw^k_1,\dots,\bw^k_T\}$ satisfies the conditions in the infimum in Definition~\ref{def:critmeasure_direction} so that $\|\partial F(\overline{\bw}^k)\|_\delta \le \left\|\frac{1}{T}\sum_{t=1}^T \bnabla_t^k\right\|$.
\end{proof}

\end{document}